\newtheorem{proposition}{Proposition}
\newtheorem{corollary}{Corollary}
\def\eqref#1{equation~\ref{#1}}
\def\1{\bm{1}}
\DeclareMathAlphabet{\mathsfit}{\encodingdefault}{\sfdefault}{m}{sl}
\SetMathAlphabet{\mathsfit}{bold}{\encodingdefault}{\sfdefault}{bx}{n}
\newcommand{\E}{\mathop{\mathbb{E}}}
\DeclareMathOperator*{\argmax}{arg\,max}
\DeclareMathOperator{\Tr}{Tr}
\newcommand\animage[1]{\adjustbox{valign=m,vspace=1pt}{\includegraphics[width=.29\linewidth]{#1}}}
\newcommand{\vect}[1]{\operatorname{vec}(#1)}
\newcommand{\vectb}[1]{\operatorname{vec}\left(#1\right)}
\newcommand{\reshape}[1]{\operatorname{reshape}(#1)}
\title{A New Perspective on Shampoo's Preconditioner}
\begin{document}

\author{Depen Morwani\footnotemark[1] \\ SEAS \\ Harvard University \\ \texttt{dmorwani@g.harvard.edu} \And Itai Shapira\footnotemark[1] \\ SEAS \\ Harvard University \\ \texttt{itaishapira@g.harvard.edu} \And Nikhil Vyas\thanks{Equal contribution. Randomized Author Ordering.} \\ SEAS \\ Harvard University \\ \texttt{nikhil@g.harvard.edu} \AND Eran Malach \\ Kempner Institute \\ Harvard University \\ \texttt{emalach@g.harvard.edu} \And Sham Kakade \\ Kempner Institute \\ Harvard University \\ \texttt{sham@seas.harvard.edu} \And Lucas Janson \\ Department of Statistics \\ Harvard University \\ \texttt{ljanson@g.harvard.edu} 
}

\maketitle

\begin{abstract}

Shampoo, a second-order optimization algorithm which uses a Kronecker product preconditioner, has recently garnered increasing attention from the machine learning community. The preconditioner used by Shampoo can be viewed either as an approximation of the Gauss--Newton component of the Hessian or the covariance matrix of the gradients maintained by Adagrad.
We provide an explicit and novel connection between the \emph{optimal} Kronecker product approximation of these matrices and the approximation made by Shampoo. Our connection highlights a subtle but common misconception about Shampoo's approximation. In particular, the \textit{square} of the approximation used by the Shampoo optimizer is equivalent to a single step of the power iteration algorithm for computing the aforementioned optimal Kronecker product approximation. Across a variety of datasets and architectures we empirically demonstrate that this is close to the optimal Kronecker product approximation. 
Additionally, for the Hessian approximation viewpoint, we empirically study the impact of various practical tricks to make Shampoo more computationally efficient (such as using the batch gradient and the empirical Fisher) on the quality of Hessian approximation.

\end{abstract}

\newtheorem{theorem}{Theorem}
\newtheorem*{theorem*}{Theorem}
\newtheorem{lemma}[theorem]{Lemma}
\newtheorem*{lemma*}{Lemma}
\newtheorem*{remark*}{Remark}

\section{Introduction}

\begin{figure}[htbp]
\centering
\begin{tabular}{cccc}
       & \textnormal{\small {MNIST-2}} & \textnormal{\small {CIFAR-5M}} & \textnormal{\small {ImageNet}} \\
\rotatebox[origin=c]{90}{\textnormal{\small {Gauss--Newton}}} & 
\animage{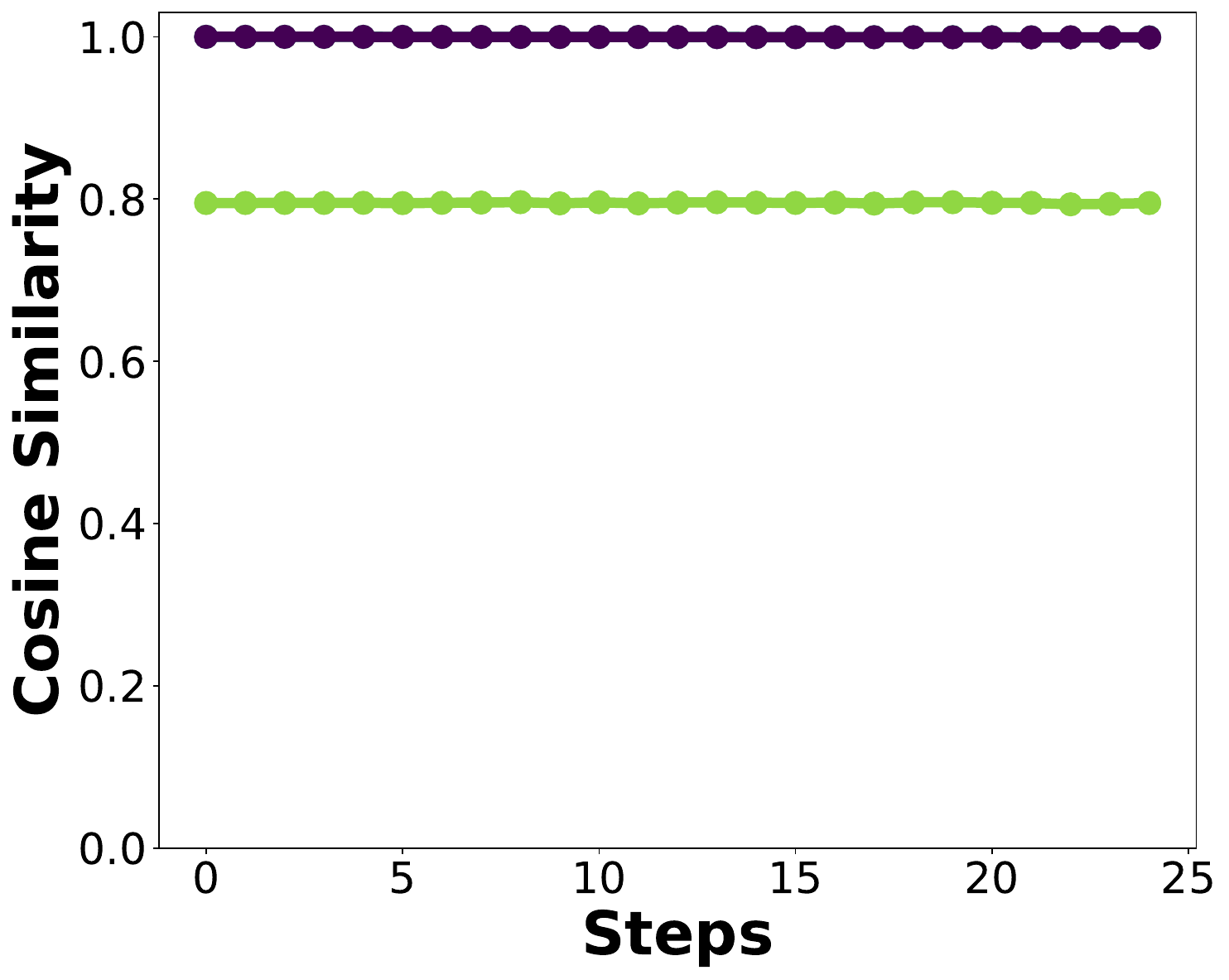} & 
\animage{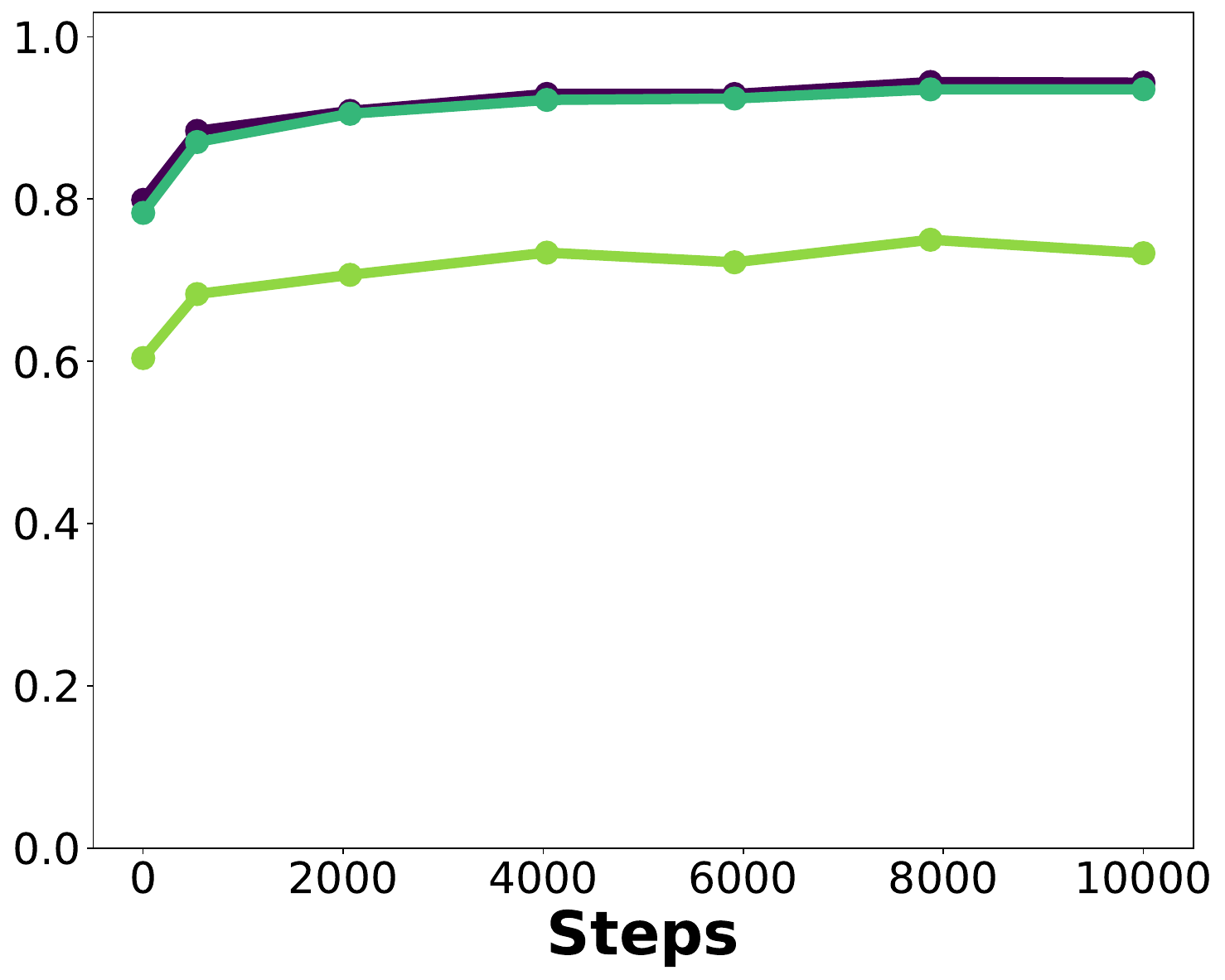} & 
\animage{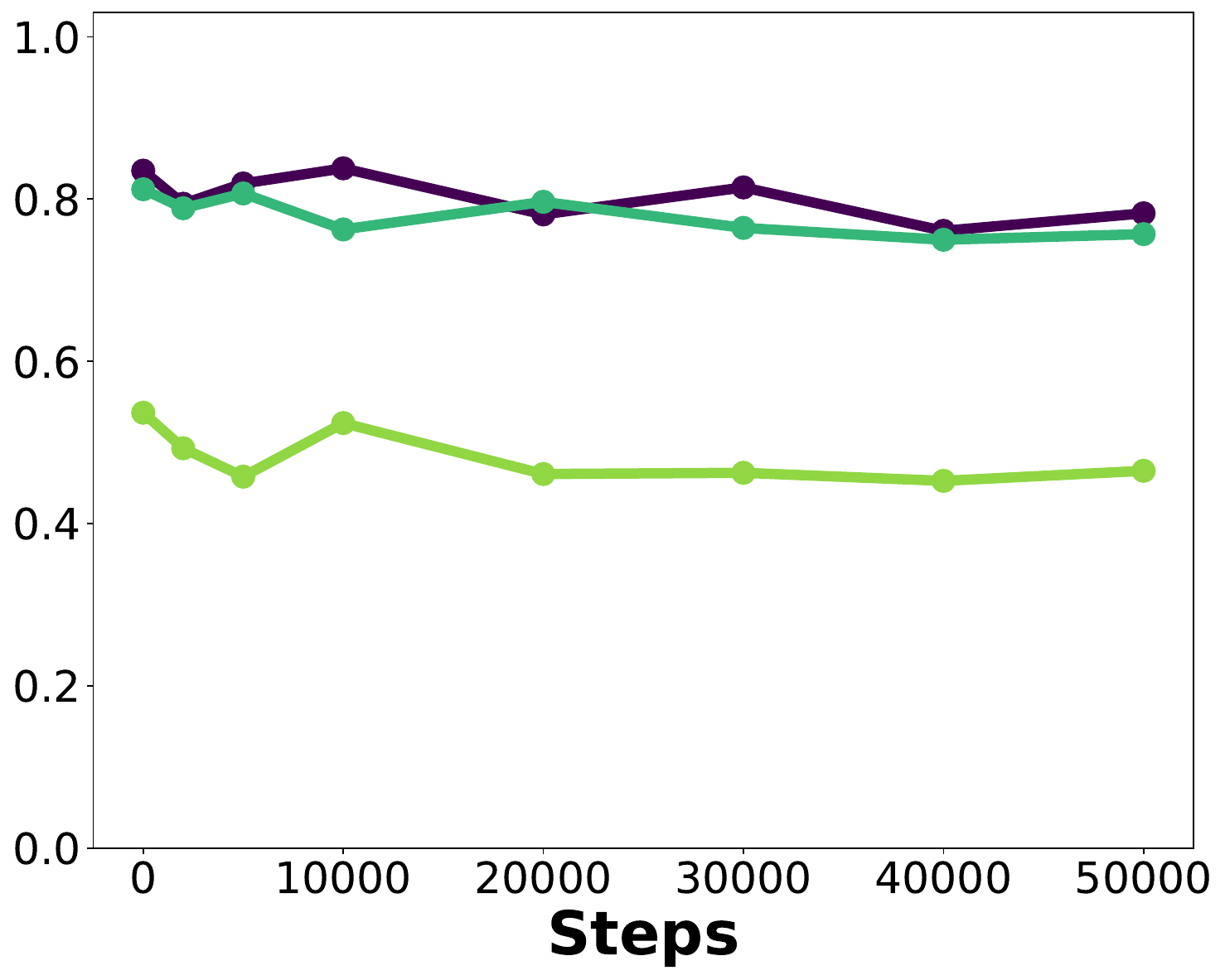} \\
\rotatebox[origin=c]{90}{\textnormal{\small {Adagrad}}} & 
\animage{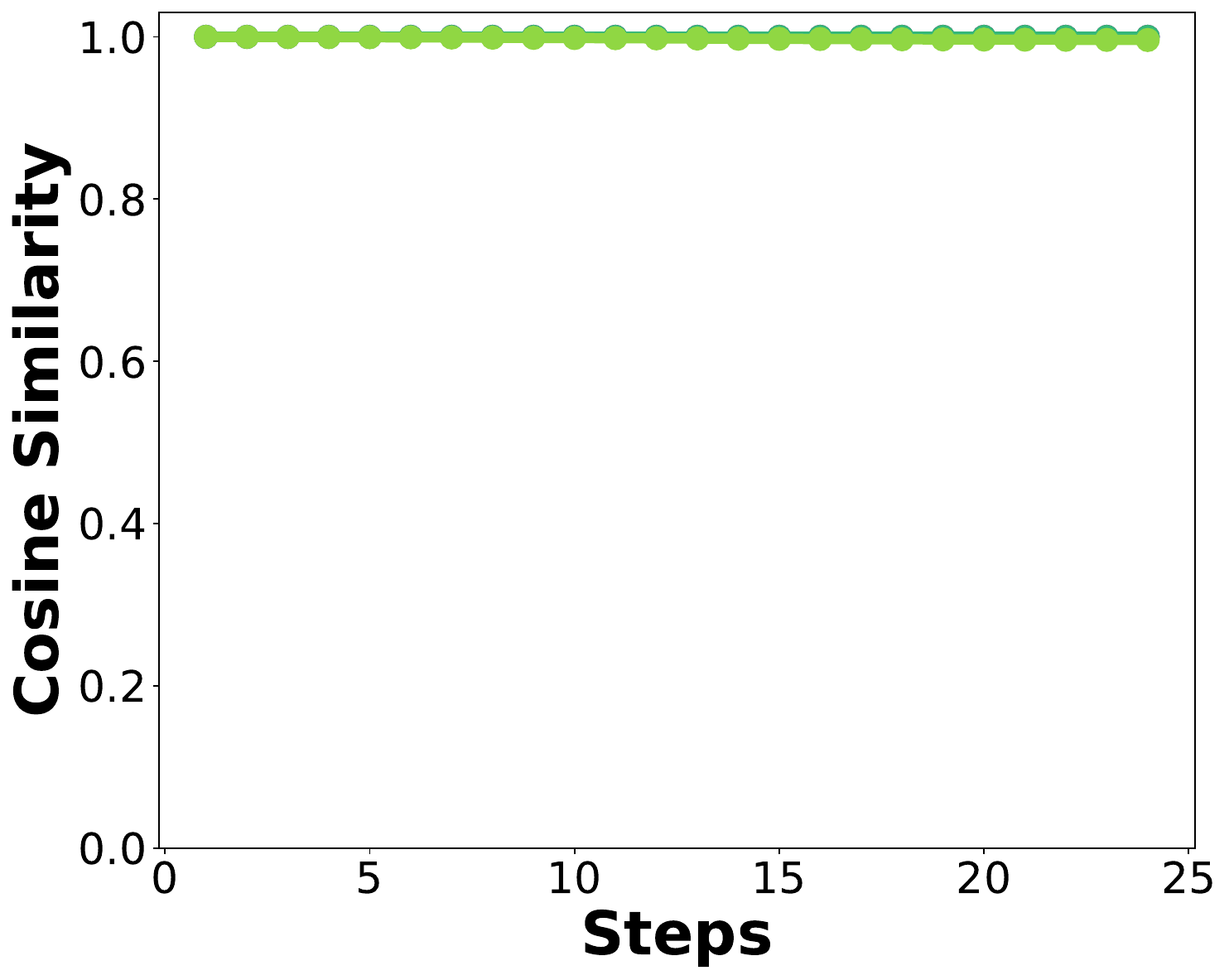} & 
\animage{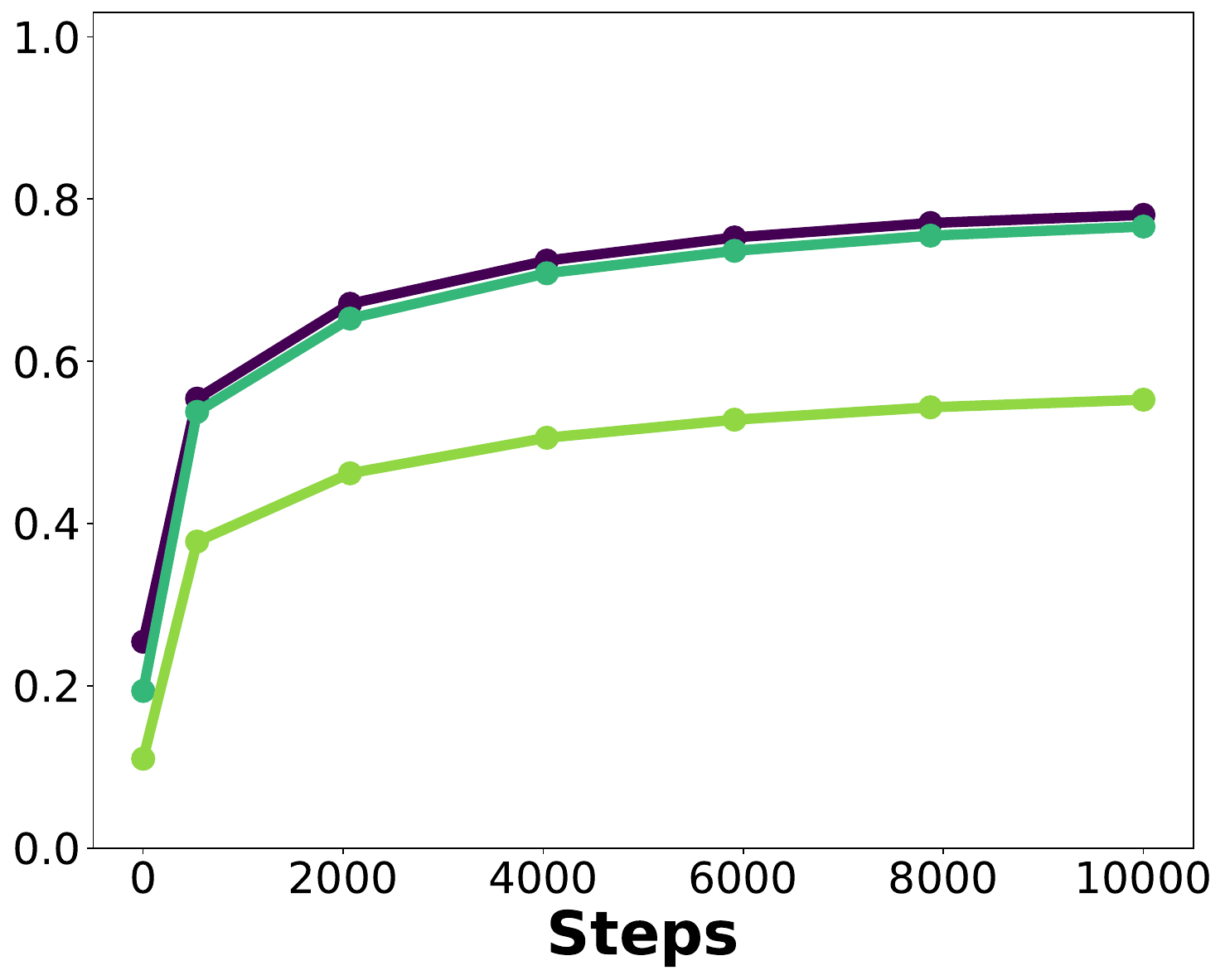} & 
\animage{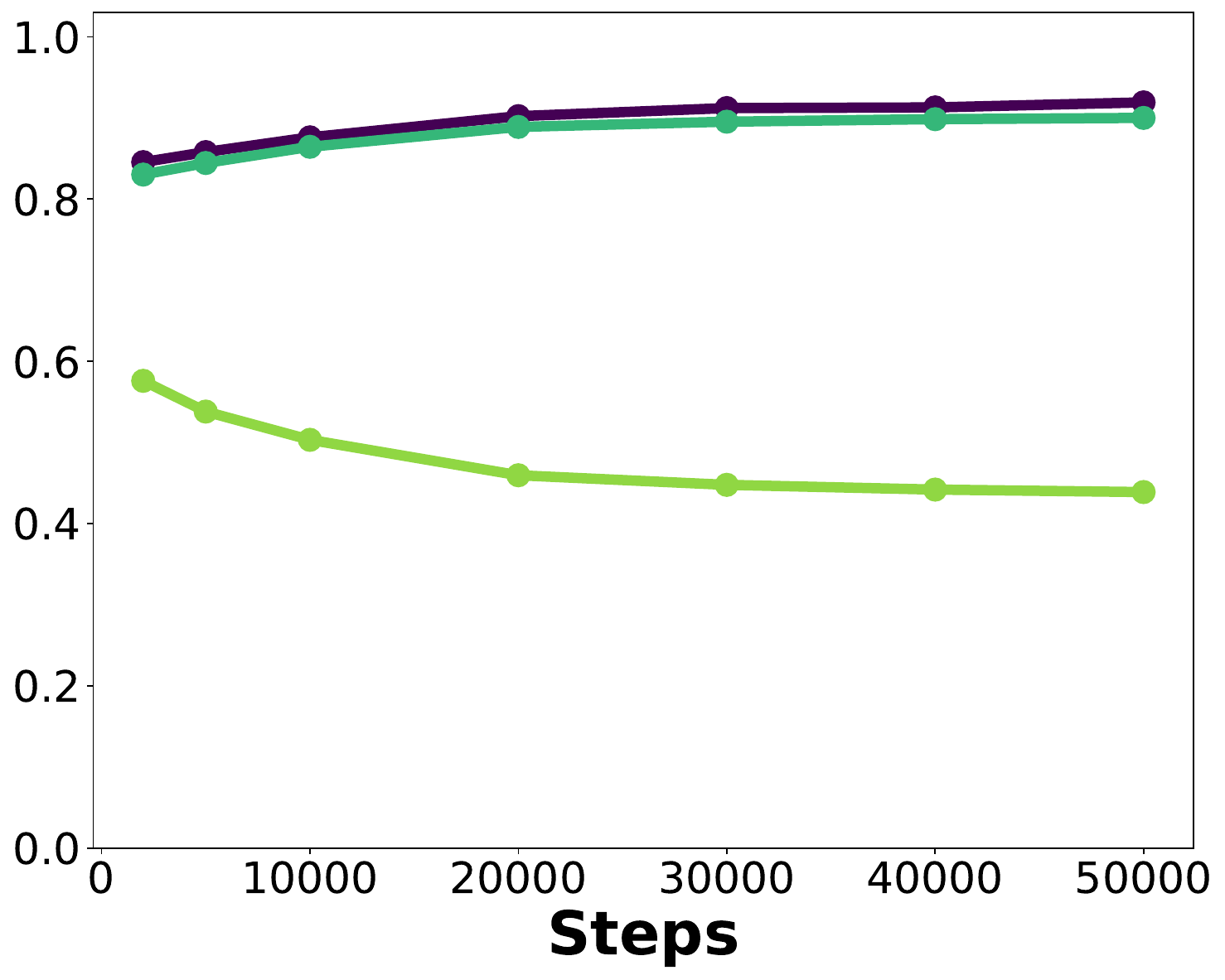} \\
\end{tabular}

\begin{minipage}{\textwidth}
    \centering
    \includegraphics[width=0.95\textwidth]{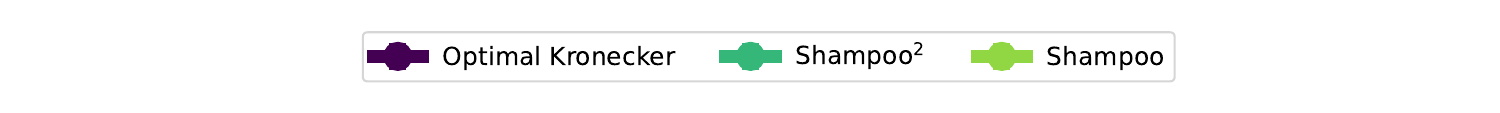}
\end{minipage}

\caption{Top: Cosine similarity between different approximations of the Gauss--Newton (GN) component of the Hessian and its true value for different datasets and architectures. Bottom: Similar plot showing the cosine similarity between different approximations of the Adagrad preconditioner matrix and its true value. As can be seen, $\text{Shampoo}^2$ tracks the optimal Kronecker approximation much more closely than Shampoo does. MNIST-2 refers to a binary subsampled MNIST dataset. For more details about datasets and architectures, please refer to Appendix~\ref{app:exp}.}
\label{fig:main}
\end{figure}

Second-order optimization is a rich research area within deep learning that has seen multiple influential works over the past few decades. Recently, these methods have seen success in practical large scale training runs such as Gemini 1.5 Flash~\citep{gemini15} and in academic benchmarks~\citep{dahl2023benchmarking}. One of the primary challenges in this field arises from the substantial memory and computational demands of traditional second-order methods, such as Adagrad \citep{duchi11} and Newton’s method. In the context of neural networks, both of these methods require storing and inverting a $|P| \times |P|$ dimensional matrix $H$ (either covariance of the gradients for Adagrad or the Gauss--Newton component of the Hessian for Newton's method), where $|P|$ represents the number of parameters of the neural network. With modern deep learning architecture scaling to billions of parameters, these requirements make the direct application of these methods impractical. To address this issue, various approaches have been proposed, including Hessian-free optimization \citep{martens2010deep} and efficient approximations of the matrix $H$ \citep{gupta2018shampoo,martens2015optimizing}. These methods aim to leverage second-order information while mitigating the computational and memory overhead.

The class of methods for efficiently approximating the matrix $H$ predominantly involve either a diagonal or a layer-wise Kronecker product approximation of $H$. These choices are motivated by the fact that, compared to maintaining the matrix $H$, both diagonal and layer-wise Kronecker products are significantly more memory-efficient to store and computationally efficient to invert. Two of the most well-known methods that utilize a layer-wise Kronecker product approximation of $H$ are K-FAC \citep{martens2015optimizing} and Shampoo \citep{gupta2018shampoo}.

In this work, we primarily focus on the Shampoo optimizer \citep{gupta2018shampoo}, which has recently gained increasing attention from the research community. Notably, in a recent benchmark of optimization algorithms proposed for practical neural network training workloads \citep{dahl2023benchmarking}, Shampoo appears to outperform all other existing methods. Another recent study, elucidating the Google Ads recommendation search pipeline, revealed that the Google Ads CTR model is trained using the Shampoo optimizer \citep{anil2022factory}. Additionally, a recent work \citep{shi2023distributed} implemented a distributed data parallel version of Shampoo, demonstrating its superior speed in training ImageNet compared to other methods.

Previously, Shampoo's approximation was shown to be an upper bound (in spectral norm) on the matrix $H$ \citep{gupta2018shampoo}. In this work, we make this connection much more precise. Prior research has established the notion of the optimal Kronecker product approximation (in Frobenius norm) of $H$ \citep{koroko2023efficient}, which can be obtained numerically using a power iteration scheme. The primary contribution of this work is to theoretically and empirically demonstrate that the square of the approximation used by Shampoo is nearly equivalent to the optimal Kronecker factored approximation of $H$.

The main contributions of the work are summarized below:

\begin{itemize}
    \item We theoretically show (Proposition~\ref{prop:main}) that the square of the Shampoo's approximation of $H$ is precisely equal to one round of the power iteration scheme for obtaining the optimal Kronecker factored approximation of the matrix $H$. Informally, for any covariance matrix $H = \E[gg^T]$ where $g \in \mathbb{R}^{mn}$ \footnote{Gauss--Newton component of the Hessian can also be expressed as a covariance matrix. For details, refer Section \ref{sec:tech_hess}}, we argue that the \textit{right} Kronecker product approximation of $H$  is $\mathbb{E}[ G G^\top ] \otimes \mathbb{E}[ G^\top G ]$ while Shampoo proposes $\mathbb{E}[ G G^\top ]^{1/2} \otimes \mathbb{E}[ G^\top G ]^{1/2}$, with $G \in \mathbb{R}^{m \times n}$ representing a reshaped $g$ into a matrix of size $m \times n$. 
    \item We empirically establish that the result of one round of power iteration is very close to the optimal Kronecker factored approximation (see Figure \ref{fig:main}), and provide theoretical justification for the same.
    \item For the Hessian based viewpoint of Shampoo (Section \ref{sec:tech_hess}), we empirically demonstrate the impact on the Hessian approximation of various practical tricks implemented to make Shampoo more computationally efficient such as averaging gradients over batch (Section~\ref{sec:avg}) and using empirical Fisher instead of the actual Fisher (Section~\ref{sec:labels}).
\end{itemize}

\textbf{Remark.} Previous works \citep{balles2020geometry, lin2024remove} have explored the question of why Adagrad-based approaches like Adam and Shampoo have an extra square root compared to the Hessian inverse in their update. This alternative question is orthogonal to our contribution. For details, refer Appendix \ref{app:sq_root}.

\textbf{Paper organization.} In Section \ref{sec:tech}, we cover the technical background necessary for understanding this work. In Section \ref{sec:opt_kron}, we provide a general power iteration scheme for obtaining the optimal Kronecker product approximation of the matrix $H$, and establish the the connection between Shampoo's approximation and the optimal Kronecker product approximation of $H$. In Section \ref{sec:hess_shampoo}, we explore the Hessian approximation viewpoint of Shampoo and empirically study how various practical tricks to make Shampoo more computationally efficient impact the quality of the Hessian approximation. In Section \ref{sec:rel_work}, we cover closely related works and conclude with discussing the limitations of the work in Section \ref{sec:limitations}. In Appendix~\ref{app:vit}, we include additional experiments on the ViT architecture and compare with the K-FAC approximation to the Hessian. Detailed related work, proofs, dataset and architecture details have been deferred to the Appendix.
\section{Technical background}
\label{sec:tech}

We use lowercase letters to denote scalars and vectors, and uppercase letters to denote matrices. For a symmetric matrix $A$, $A \succcurlyeq 0$ (resp. $A \succ 0$) denotes that $A$ is positive semi-definite (resp. positive definite). Similarly, for symmetric matrices $A$ and $B$, $A \succcurlyeq B$ (resp. $A \succ B$) denotes $A - B \succcurlyeq 0$ (resp. $A - B \succ 0$). We will use $M[i, j]$ refer to the 0-indexed $(i, j)$ entry of the matrix $M$. The Kronecker product of two matrices $A \in \mathbb{R}^{p \times q}$ and $B \in \mathbb{R}^{r \times s}$ is denoted by $A \otimes B \in \mathbb{R}^{pr \times qs}$. It is defined such that 
$(A\otimes B)[ri+i',sj+j']=A[i, j]B[i', j']$ where $0 \leq i < p, 0 \leq j < q, 0 \leq i' < r, 0 \leq j' < s$. Vectorization of a matrix $A \in \mathbb{R}^{m \times n}$, denoted by $\vect{A}$, is a $mn$-dimensional column vector obtained by stacking the columns of $A$ on top of one another. We will usually denote $\vect{A}$ by $a$.

Following is a basic lemma about Kronecker products that will be used later
\begin{lemma}[\citet{kronecker}]\label{lem:kronecker_matrix_relationships}
$(A \otimes B) \vect{G} = \vect{BGA^\top}$.
\end{lemma}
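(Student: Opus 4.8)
The plan is to prove the identity by a direct entry-wise computation, since the statement is a standard Kronecker-product fact whose only content, under the indexing conventions just fixed, is careful bookkeeping. First I would pin down dimensions so that both sides are well defined: reading off the right-hand side, $BGA^\top$ forces $G \in \R^{s \times q}$ (so that $BG$ and $GA^\top$ compose), giving $BGA^\top \in \R^{r \times p}$ and hence $\vect{BGA^\top} \in \R^{rp}$; meanwhile $\vect{G} \in \R^{sq}$ matches the $qs$ columns of $A \otimes B$, and the $pr$ rows of $A \otimes B$ match the length of the left-hand side. With dimensions consistent, it remains to show the two vectors agree coordinate by coordinate.

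Next I would set up the two index decompositions dictated by the definitions in this section. For the output coordinate I write $k = ri + i''$ with $0 \le i < p$ and $0 \le i'' < r$, and for the summation coordinate $l = sj + i'$ with $0 \le j < q$ and $0 \le i' < s$; the column-major vectorization then reads $\vect{G}[sj + i'] = G[i', j]$. Expanding the matrix-vector product and substituting the defining relation $(A \otimes B)[ri + i'', sj + i'] = A[i,j]\,B[i'', i']$ yields
\[
\big[(A \otimes B)\vect{G}\big][ri + i''] = \sum_{j=0}^{q-1}\sum_{i'=0}^{s-1} A[i,j]\,B[i'', i']\,G[i', j].
\]

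For the right-hand side I would expand $M := BGA^\top$ entry-wise as $M[i'', i] = \sum_{i'}\sum_{j} B[i'', i']\,G[i', j]\,A[i, j]$ (using $(A^\top)[j,i] = A[i,j]$), and then apply the same column-major convention, $\vect{M}[ri + i''] = M[i'', i]$, to recover exactly the same double sum. Matching the two expressions coordinate by coordinate completes the proof.

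The main obstacle here is purely notational: one must keep the $0$-indexing, the column-major stacking order, and the transpose inside $A^\top$ mutually consistent, since a single mismatch silently swaps a factor or an index range. As a cleaner alternative that sidesteps much of this bookkeeping, I could instead invoke bilinearity of both sides in $G$ and verify the identity only on the basis matrices $G = e_{i'} e_j^\top$: for such $G$ the left-hand side picks out a single column of $A \otimes B$, while the right-hand side reduces to the outer product of the $i'$-th column of $B$ with the $j$-th column of $A$, and the two are immediately seen to coincide. I expect the direct computation to be the most self-contained route, given that only the raw definitions of $\otimes$ and $\vect{\cdot}$ are available at this point.
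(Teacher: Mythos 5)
Your proof is correct: the dimension bookkeeping ($G \in \R^{s\times q}$), the index decompositions, and the entry-wise match between $\sum_{j,i'} A[i,j]\,B[i'',i']\,G[i',j]$ and the $(i'',i)$ entry of $BGA^\top$ are all consistent with the paper's $0$-indexed, column-major conventions. The paper itself gives no proof of this lemma --- it is cited as a standard fact from the literature --- so there is nothing to compare against; your direct computation (and the bilinearity-plus-basis-matrices alternative you sketch) is the standard argument and fills the gap correctly.
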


\subsection{Shampoo} \label{sec:tech_shampoo}

The original Shampoo \citep{gupta2018shampoo} paper introduced its algorithm  as an approximation of an online learning algorithm Adagrad \citep{JMLR:v12:duchi11a}. Shampoo can also be interpreted~\citep{anil2021towards,asdl} as approximating the Gauss--Newton component of the Hessian. Both of these perspectives will be discussed in Section~\ref{sec:tech_ada} and ~\ref{sec:tech_hess} respectively.
. 
\subsubsection{Adagrad based perspective of Shampoo}\label{sec:tech_ada}

\textbf{Adagrad:} This is a preconditioned online learning algorithm, that uses the accumulated covariance of the gradients as a preconditioner. Let $\theta_t \in \mathbb{R}^p$ denote the parameters at time $t$ and let $g_t \in \mathbb{R}^p$ denote the gradient. It maintains a preconditioner $H_{\text{Ada}} = \sum_{t=1}^T g_tg_t^\top$. The update for the parameter for learning rate $\eta$ are given by

\[\theta_{T+1} = \theta_T - \eta H_{\text{Ada}}^{-1/2}g_T. \]

Shampoo is a preconditioned gradient method which maintains a layer-wise Kronecker product approximation to full-matrix Adagrad. Let the gradient for a weight matrix\footnote{We will focus on weights structured as matrices throughout this paper.} $W_t \in \mathbb{R}^{m \times n}$ at time $t$ be given by $G_t \in \mathbb{R}^{m \times n}$. The lemma below is used to obtain the Shampoo algorithm from Adagrad:

\begin{lemma}[\citet{gupta2018shampoo}]
\label{lem:gupta_adagrad}
    Assume that $G_1,...,G_T$ are matrices of rank at most $r$. Let $g_t = \vect{G_t}$ for all $t$. Then, with $\preccurlyeq$ representing the  for any $\epsilon > 0$,
    \[ \epsilon I_{mn} + \frac{1}{r} \sum_{t=1}^T g_t g_t^\top \preccurlyeq \left(\epsilon I_m + \sum_{t=1}^T G_t G_t^\top\right)^{1/2} \otimes \left(\epsilon I_n + \sum_{t=1}^T G_t^\top G_t\right)^{1/2}. \]
\end{lemma}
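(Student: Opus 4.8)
The plan is to sandwich the matrix $M := \epsilon I_{mn} + \tfrac1r\sum_t g_tg_t^\top$ between two commuting Kronecker-structured upper bounds and then merge them through the operator geometric mean. Writing $L := \epsilon I_m + \sum_t G_tG_t^\top$ and $R := \epsilon I_n + \sum_t G_t^\top G_t$, the two intermediate claims I would establish are
\[ \frac1r\sum_t g_tg_t^\top \preccurlyeq \Big(\sum_t G_tG_t^\top\Big)\otimes I_n \qquad\text{and}\qquad \frac1r\sum_t g_tg_t^\top \preccurlyeq I_m\otimes\Big(\sum_t G_t^\top G_t\Big). \]
Adding $\epsilon I_{mn}$ to both sides and absorbing it into the Kronecker factors (using $\epsilon I_{mn} = \epsilon I_m\otimes I_n = I_m\otimes \epsilon I_n$) turns these into $M\preccurlyeq L\otimes I_n =: P$ and $M\preccurlyeq I_m\otimes R =: Q$. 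The payoff is that $P$ and $Q$ commute, with $P^{1/2}Q^{1/2} = (L^{1/2}\otimes I_n)(I_m\otimes R^{1/2}) = L^{1/2}\otimes R^{1/2}$, so it then remains only to deduce $M\preccurlyeq P^{1/2}Q^{1/2}$ from the two one-sided bounds.

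For the first step I would diagonalize each $G_t$ by its SVD, $G_t=\sum_{k=1}^r\sigma_k u_kv_k^\top$ (rank at most $r$), and use the vectorization identity $\vect{u_kv_k^\top}=v_k\otimes u_k$ (a special case of Lemma~\ref{lem:kronecker_matrix_relationships}) to write $g_t=\sum_k\sigma_k\,v_k\otimes u_k$. The elementary averaging inequality $\big(\sum_{k=1}^r a_k\big)\big(\sum_{k=1}^r a_k\big)^\top\preccurlyeq r\sum_{k=1}^r a_ka_k^\top$ (immediate from Cauchy--Schwarz applied to $x\mapsto\sum_k\langle x,a_k\rangle$) then gives $g_tg_t^\top\preccurlyeq r\sum_k\sigma_k^2\,(v_kv_k^\top)\otimes(u_ku_k^\top)$. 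Bounding $v_kv_k^\top\preccurlyeq I$ (respectively $u_ku_k^\top\preccurlyeq I$) and using $\sum_k\sigma_k^2 u_ku_k^\top=G_tG_t^\top$ (respectively $\sum_k\sigma_k^2 v_kv_k^\top=G_t^\top G_t$) yields the two displayed bounds after summing over $t$. This part is routine bookkeeping once the Kronecker orientation is pinned down through Lemma~\ref{lem:kronecker_matrix_relationships}.

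The crux is the combination step: for commuting $P,Q\succ0$ and $M\succcurlyeq0$ with $M\preccurlyeq P$ and $M\preccurlyeq Q$, conclude $M\preccurlyeq P^{1/2}Q^{1/2}$. I want to stress that this does \emph{not} follow from the two quadratic-form bounds alone, since $x^\top Mx\le\min(x^\top Px,\,x^\top Qx)$ combined with Cauchy--Schwarz only gives $x^\top Mx\le\sqrt{(x^\top Px)(x^\top Qx)}$, an inequality pointing the wrong way; the positive-semidefinite structure must be used globally. I would exploit commutativity to diagonalize $P,Q$ simultaneously, set $R=P^{1/2}Q^{1/2}$ and conjugate by $R^{-1/2}$, reducing the claim to: if $T\succcurlyeq0$, $T\preccurlyeq E$ and $T\preccurlyeq E^{-1}$ for the positive matrix $E=P^{1/2}Q^{-1/2}$, then $T\preccurlyeq I$. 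Writing $M_0:=E^{-1/2}TE^{-1/2}$, the hypotheses become $0\preccurlyeq M_0\preccurlyeq I$ and $EM_0E=E^{1/2}TE^{1/2}\preccurlyeq I$; since $0\preccurlyeq M_0\preccurlyeq I$ forces $M_0^2\preccurlyeq M_0$, we get $EM_0^2E\preccurlyeq EM_0E\preccurlyeq I$, i.e.\ $\sigma_{\max}(M_0E)\le1$, whence $\lambda_{\max}(T)=\lambda_{\max}(M_0E)\le\sigma_{\max}(M_0E)\le1$ and $T\preccurlyeq I$. Applying this with $P=L\otimes I_n$ and $Q=I_m\otimes R$ finishes the argument. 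I expect this combination lemma — specifically choosing the conjugation that lets the single inequality $M_0^2\preccurlyeq M_0$ close the proof — to be the main obstacle, whereas the rank-$r$ bound is standard.
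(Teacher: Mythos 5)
Your proof is correct, but note that the paper does not prove this lemma at all --- it is imported verbatim from \citet{gupta2018shampoo} --- so the right comparison is with the original Shampoo proof. Your first stage (SVD of each $G_t$, the rank-$r$ averaging inequality $\bigl(\sum_k a_k\bigr)\bigl(\sum_k a_k\bigr)^\top \preccurlyeq r\sum_k a_k a_k^\top$, and the two one-sided bounds $M \preccurlyeq L\otimes I$ and $M \preccurlyeq I\otimes R$) is essentially identical to theirs. Where you genuinely diverge is the combination step: Gupta et al.\ invoke Ando's theory of the operator geometric mean ($M\preccurlyeq P$ and $M\preccurlyeq Q$ imply $M = M\,\#\,M \preccurlyeq P\,\#\,Q$, which equals $P^{1/2}Q^{1/2}$ when $P,Q$ commute), whereas you prove the commuting case from scratch. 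Your reduction --- conjugate by $R^{-1/2}$ with $R=P^{1/2}Q^{1/2}$ to get $T\preccurlyeq E$, $T\preccurlyeq E^{-1}$, then use $0\preccurlyeq M_0\preccurlyeq I \Rightarrow M_0^2\preccurlyeq M_0$ to bound $\sigma_{\max}(M_0E)\le 1$ and hence $\lambda_{\max}(T)\le 1$ by similarity --- checks out line by line (the similarity of $T=E^{1/2}M_0E^{1/2}$ to $M_0E$ and the bound of spectral radius by operator norm are both valid). What this buys is a self-contained, elementary argument that avoids citing the geometric-mean machinery; what it costs is generality, since Ando's result holds without commutativity, though commutativity is all that is needed here. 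One small bookkeeping issue: your stated identity $\vect{u_kv_k^\top}=v_k\otimes u_k$ (the column-stacking convention of Lemma~\ref{lem:kronecker_matrix_relationships}) places the $n\times n$ factor on the \emph{left} of the Kronecker product, so your displayed intermediate bounds should read $\bigl(\sum_t G_t^\top G_t\bigr)\otimes I_m$ and $I_n\otimes\bigl(\sum_t G_tG_t^\top\bigr)$ rather than the orientation you wrote; this is the same convention mismatch already present between Lemma~\ref{lem:kronecker_matrix_relationships} and the statement of the lemma itself, and it affects nothing of substance once a single convention is fixed.
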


Based on the above lemma, Shampoo maintains two preconditioners $L_t \in \mathbb{R}^{m \times m}$ and $R_t \in \mathbb{R}^{n \times n}$, which are initialized to $\epsilon I_m$ and $\epsilon I_n$ respectively. . The update for the preconditioners and the Shampoo update for a learning rate $\eta$ is given by

\[ L_T = L_{T-1} + G_TG_T^\top; \quad R_T = R_{T-1} + G_T^\top G_T; \quad W_{T+1} = W_T - \eta L_T^{-1/4} G_T R_T^{-1/4}. \]

In Lemma~\ref{lem:gupta_adagrad} the matrix $H_{\text{Ada}} = \sum_{t=1}^T g_tg_t^\top$ is approximated (ignoring $\epsilon$ and scalar factors) by the the Kronecker product $\left(\sum_{t=1}^T G_t G_t^\top\right)^{1/2} \otimes \left(\sum_{t=1}^T G_t^\top G_t\right)^{1/2}$. Our main focus will be to study the \textit{optimal Kronecker product approximation} of the matrix $H_{\text{Ada}}$ and its connection to Shampoo's approximation (done in Section~\ref{sec:opt_kron}).

\subsubsection{Hessian based perspective of Shampoo}\label{sec:tech_hess}

In this section we describe the Hessian approximation viewpoint of Shampoo explored by previous works \citep{anil2021towards,asdl} as an alternative to the Adagrad viewpoint described above. Our theoretical and empirical results hold for both viewpoints.

\textbf{Gauss--Newton (GN) component of the Hessian.}
For a datapoint $(x,y)$, let $f(x)$ denote the output of a neural network and $\mathcal{L}(f(x),y)$ represent the training loss. Let $W \in \mathbb{R}^{m \times n}$ represent a weight matrix in the neural network and $\mathcal{D}$ denote the training distribution. Then, for CE loss, the Gauss-Newton component of the Hessian of the loss with respect to $W$ is given by (see Appendix~\ref{app:tech_hess} for details)

\[ H_{\text{GN}} = \E_{(x,y) \sim \mathcal{D}}\left[\frac{\partial f}{\partial W} \frac{\partial^2 \mathcal{L}}{\partial f^2} \frac{\partial f}{\partial W}^\top\right] = \E_{\substack{x \sim \mathcal{D}_x \\ s \sim f(x)}} \left[g_{x,s} g_{x,s}^\top \right] , \]

where, for brevity, $f(x)$ denotes the output distribution of the neural network and $\mathcal{D}_x$ represents the training distribution of $x$ \citep{DBLP:journals/corr/abs-1301-3584}. The right-hand side of the equation is also referred to in the literature as the Fisher matrix, and its counterpart for real labels, $\E_{(x,y) \sim \mathcal{D}}\left[g_{x,y} g_{x,y}^\top \right]$, is referred to as the empirical Fisher. For brevity, going forward, we will assume that $x$ is drawn from $\mathcal{D}_x$ and represent the Fisher matrix as $\E_{x, s \sim f(x)} \left[g_{x,s} g_{x,s}^\top \right]$. Similarly, when both $x$ and $y$ are used, we will assume they are drawn from $\mathcal{D}$.

The aim of algorithms such as K-FAC and Shampoo (when viewed from the Hessian perspective) is to do a layerwise Kronecker product approximation of the Fisher matrix $H_{\text{GN}}$. The following lemma establishes the approximation made by Shampoo:

\begin{lemma}[Adapted from \citet{gupta2018shampoo,anil2021towards}] \label{lem:shamp_approx}
    Assume that $G_{x,s}$ are matrices of rank at most $r$. Let $g_{x,s} = \vect{G_{x,s}}$ . Then, for any $\epsilon > 0$,
    \begin{equation} \label{eq:shamp_approx}
    \E_{x, s \sim f(x)} \left[g_{x,s} g_{x,s}^\top \right] \preccurlyeq r\left(\E_{x, s \sim f(x)} \left[G_{x,s} G_{x,s}^\top \right] \right)^{1/2} \otimes \left(\E_{x, s \sim f(x)} \left[G_{x,s}^\top G_{x,s} \right] \right)^{1/2}.
    \end{equation}
\end{lemma}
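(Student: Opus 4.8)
The statement is the expectation (Fisher) analogue of the finite--sum bound in Lemma~\ref{lem:gupta_adagrad}, so the plan is to mirror that argument with the sum $\sum_t$ replaced by the expectation $\E_{x,s\sim f(x)}$. The skeleton has two stages: first prove two \emph{one--sided} Kronecker bounds in Löwner order, one involving $\E[G_{x,s}G_{x,s}^\top]$ and one involving $\E[G_{x,s}^\top G_{x,s}]$; then combine them into the single bound by the Kronecker product of the square roots. Throughout I would write $X=\E_{x,s\sim f(x)}[g_{x,s}g_{x,s}^\top]$, $P=\E_{x,s\sim f(x)}[G_{x,s}G_{x,s}^\top]$ and $Q=\E_{x,s\sim f(x)}[G_{x,s}^\top G_{x,s}]$, so that the target is $X\preccurlyeq r\,P^{1/2}\otimes Q^{1/2}$.

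For the first stage I would fix a sample, drop subscripts, and use the singular value decomposition $G=\sum_{i=1}^r\sigma_i u_iv_i^\top$ (rank at most $r$). By Lemma~\ref{lem:kronecker_matrix_relationships} the vectorization splits as $g=\vect{G}=\sum_{i=1}^r\sigma_i(u_i\otimes v_i)$ (up to the ordering convention fixed there). For any test vector $w$, Cauchy--Schwarz in the form $\bigl(\sum_{i=1}^r a_i\bigr)^2\le r\sum_{i=1}^r a_i^2$ gives $(g^\top w)^2\le r\sum_i\sigma_i^2\bigl((u_i\otimes v_i)^\top w\bigr)^2$, and then $v_iv_i^\top\preccurlyeq I_n$ (so $(u_iu_i^\top)\otimes(v_iv_i^\top)\preccurlyeq (u_iu_i^\top)\otimes I_n$) collapses the $v$--factor. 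This yields the per--sample bound $gg^\top\preccurlyeq r\,(GG^\top)\otimes I_n$, and symmetrically $gg^\top\preccurlyeq r\,I_m\otimes(G^\top G)$. Since the Löwner order is preserved by expectation (integration of a PSD--valued map against a probability measure), taking $\E_{x,s\sim f(x)}$ of both per--sample bounds gives $X\preccurlyeq r\,(P\otimes I_n)=:A$ and $X\preccurlyeq r\,(I_m\otimes Q)=:B$.

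The hard part is the second stage: passing from $X\preccurlyeq A$ and $X\preccurlyeq B$ to $X\preccurlyeq A^{1/2}B^{1/2}=r\,P^{1/2}\otimes Q^{1/2}$. This step crucially uses both that $A$ and $B$ commute (they act on disjoint Kronecker tensor factors) and that $X\succcurlyeq0$; indeed the implication is \emph{false} for indefinite $X$ (a two--dimensional example with $A=\operatorname{diag}(a,1/a)$ and $B=\operatorname{diag}(1/a,a)$, for which $A^{1/2}B^{1/2}=I$, already breaks it), so a pointwise quadratic--form argument cannot work. I would isolate this as a lemma: if $A,B\succcurlyeq0$ commute and $0\preccurlyeq X\preccurlyeq A$, $X\preccurlyeq B$, then $X\preccurlyeq A^{1/2}B^{1/2}$. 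To prove it, diagonalize $A,B$ in a common eigenbasis and apply a congruence by $(A^{1/2}B^{1/2})^{1/2}$ (regularizing $A,B$ by $\delta I$ and letting $\delta\to0$ if they are singular); this reduces the claim to the normalized statement that $0\preccurlyeq\tilde X$, $\tilde X\preccurlyeq D$ and $\tilde X\preccurlyeq D^{-1}$ for a positive diagonal $D=\operatorname{diag}(\rho_i)$ imply $\tilde X\preccurlyeq I$. Writing $Y=\tilde X^{1/2}$, the two hypotheses say $\|D^{-1/2}Y\|\le1$ and $\|D^{1/2}Y\|\le1$ in operator norm; for any $w$, setting $z=Yw$ gives $\sum_i z_i^2/\rho_i\le\|w\|^2$ and $\sum_i\rho_i z_i^2\le\|w\|^2$, and the scalar AM--GM inequality $z_i^2\le\tfrac12\bigl(z_i^2/\rho_i+\rho_i z_i^2\bigr)$ summed over $i$ yields $\|Yw\|^2\le\|w\|^2$, i.e.\ $\tilde X=Y^2\preccurlyeq I$. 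Unwinding the congruence gives the lemma and hence the theorem.

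As a cross--check that reuses Lemma~\ref{lem:gupta_adagrad} essentially verbatim, I would note an alternative derivation for finitely supported distributions: absorbing the probabilities by the reweighting $G_{x,s}\mapsto\sqrt{\Pr[x,s]}\,G_{x,s}$ (which preserves the rank bound) turns the expectation into exactly the sum in Lemma~\ref{lem:gupta_adagrad}; sending $\epsilon\to0$ and rescaling by $r$ recovers the stated inequality, and general distributions follow by a routine approximation together with continuity (operator monotonicity) of the matrix square root. The only genuinely delicate point in either route is the commuting--plus--PSD combining lemma above.
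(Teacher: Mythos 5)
Your proof is correct, and it is essentially the argument behind the result in the cited sources: the paper itself does not prove Lemma~\ref{lem:shamp_approx} (it imports it from \citet{gupta2018shampoo,anil2021towards}), and the standard proof there is exactly your two-stage route --- the per-sample rank-$r$ Cauchy--Schwarz bounds $gg^\top \preccurlyeq r\,(GG^\top)\otimes I_n$ and $gg^\top \preccurlyeq r\,I_m\otimes(G^\top G)$, followed by the combining lemma for commuting PSD upper bounds ($0 \preccurlyeq X \preccurlyeq A$, $X \preccurlyeq B$, $AB=BA$ imply $X \preccurlyeq A^{1/2}B^{1/2}$). Your AM--GM proof of that combining lemma and your observation that positive semi-definiteness of $X$ is essential are both sound, so nothing is missing.
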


In Lemma~\ref{lem:gupta_adagrad} the matrix on the left hand side is equal to $H_{\text{GN}}$ and the right hand side represents the $H_{\text{GN}}$ approximation made by Shampoo. However, computing this approximation at every step is expensive. So, in practice, Shampoo makes two additional approximations on top.

First, it replaces the per-input gradient by batch gradient, i.e, replaces $\E_{x, s \sim f(x)} [G_{x,s} G_{x,s}^\top]$ by $\E_{B,\bf{s}} [G_{B,\bf{s}}  G_{B,\bf{s}}^\top]$, where $B$ denotes the batch, $\bf{s}$ is the concatenation of $s \sim f(x)$ for all $(x,y) \in B$ and $G_{B, \bf{s}} = \frac{1}{|B|} \sum_{(x,y) \in B, s=\mathbf{s} [x]} G_{x,s}$ is the \textit{sampled batch gradient}, with $\mathbf{s} [x]$ representing the sampled label corresponding to $x \in B$.

Second, it replaces sampled labels with real labels, i.e., it replaces $\E_{B,\bf{s}} [G_{B,\bf{s}} G_{B,\bf{s}}^\top]$ with $\E_{B} [G_{B} G_{B}^\top]$, where $G_{B} = \frac{1}{|B|} \sum_{(x,y) \in B} G_{x,y}$ is the \textit{batch gradient}.

Thus, if $G_j$ and $W_j$ represent the batch gradient and weight matrix at iteration $j$, and $\lambda$ is an exponential weighting parameter, then the update of Shampoo is given by
\[ L_j = \lambda L_{j-1} + (1 - \lambda) G_j G_j^\top; \quad R_j = \lambda R_{j-1} + (1 - \lambda) G_j^\top G_j; \quad W_{j+1} = W_j - \eta L_j^{-1/4} G_j R_j^{-1/4}, \]
where $L_j$ and $R_j$ represent the left and right preconditioners maintained by Shampoo, respectively.

Our focus (when viewing Shampoo from the Hessian perspective) will be to study
\begin{itemize}
    \item The optimal Kronecker product approximation of the matrix $H_{\text{GN}}$ and its connection to Shampoo's approximation (done in Section~\ref{sec:opt_kron}).
    \item The effect of the aforementioned two approximations on the approximation quality (done in Section~\ref{sec:hess_shampoo}).
\end{itemize}

\subsection{Optimal Kronecker product approximation} \label{sec:tech_opt_kron}
 For Frobenius norm (or other ``entry-wise'' matrix norms), finding the optimal Kronecker product approximation of a matrix $H \in \mathbb{R}^{mn \times mn}$ is equivalent to finding the optimal rank-one approximation of a rearrangement of $H$. We define the rearrangement operator $\reshape{}$, applied to a matrix $H$ such that, $$\reshape{H}[mi+i', nj+j'] = H[mj+i, mj'+i'],$$
where $\{i,i'\} \in [0,1,...,m-1]$, $\{j,j'\} \in [0,1,...,n-1]$ and $\reshape{H} \in \mathbb{R}^{m^2 \times n^2}$. A property of $\reshape$ that will be useful to us is:
\begin{equation}
\label{eq:two_spaces}
   H = A \otimes B \iff \reshape{H} = ab^\top,
\end{equation}
where $A \in \mathbb{R}^{m \times m}$, $a = \vect{A} \in \mathbb{R}^{m^2}$, $B \in \mathbb{R}^{n \times n}$ and $b = \vect{B} \in \mathbb{R}^{n^2}$. This property can be used to prove the following result on optimal Kronecker product approximation:

\begin{lemma}[\citet{approximation_with_kronecker}]\label{lem:kronecker_approx}
Let $H \in \mathbb{R}^{mn \times mn}$ be a matrix and let $L \in \mathbb{R}^{m \times n}, R \in \mathbb{R}^{n \times m}$. Then, the equivalence of the Kronecker product approximation of $H$ and the rank-one approximation of $\reshape{H}$ is given by:
\[
\| H - L \otimes R \|_F = \| \reshape{H} - \vect{L} \vect{R}^\top \|_F,
\]
where $\| \cdot \|_F$ denotes the Frobenius norm.
\end{lemma}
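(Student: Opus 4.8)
The plan is to prove the identity by showing that $\reshape{\cdot}$ is a \emph{linear, norm-preserving rearrangement} of the entries of a matrix, and then combining this with the Kronecker identity recorded in \eqref{eq:two_spaces}. Concretely, I would chain together three observations: (i) $\reshape{\cdot}$ is linear, so it commutes with the subtraction $H - L\otimes R$; (ii) $\reshape{L\otimes R} = \vect{L}\,\vect{R}^\top$; and (iii) $\reshape{\cdot}$ leaves the Frobenius norm unchanged. Granting these, $\|H - L\otimes R\|_F = \|\reshape{H-L\otimes R}\|_F = \|\reshape{H} - \vect{L}\,\vect{R}^\top\|_F$, which is exactly the claim.

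The first two observations are essentially immediate. Linearity is visible from the defining relation $\reshape{H}[mi+i',\,nj+j'] = H[mj+i,\,mj'+i']$: each entry of $\reshape{H}$ is a single entry of $H$, so $\reshape{\alpha H_1 + \beta H_2} = \alpha\,\reshape{H_1} + \beta\,\reshape{H_2}$, and in particular $\reshape{H - L\otimes R} = \reshape{H} - \reshape{L\otimes R}$. For the second, I would apply \eqref{eq:two_spaces} with $A = L$ and $B = R$, which gives $\reshape{L\otimes R} = \vect{L}\,\vect{R}^\top$; substituting this in yields $\reshape{H - L\otimes R} = \reshape{H} - \vect{L}\,\vect{R}^\top$.

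The third observation is the crux. I would argue that $\|M\|_F = \|\reshape{M}\|_F$ for every $M \in \mathbb{R}^{mn\times mn}$ because $\reshape{\cdot}$ merely permutes the $m^2 n^2$ entries of $M$. This reduces to checking that the index correspondence $(i,i',j,j') \mapsto \bigl((mi+i',\,nj+j'),\,(mj+i,\,mj'+i')\bigr)$ is a bijection between the positions of $\reshape{M}$ and those of $M$. Each position $(mi+i',\,nj+j')$ of $\reshape{M}$ recovers $(i,i',j,j')$ uniquely by Euclidean division ($i = \lfloor (mi+i')/m\rfloor$, $i' = (mi+i')\bmod m$, and similarly $j,j'$ from the column index in radix $n$), and each position $(mj+i,\,mj'+i')$ of $M$ recovers the \emph{same} tuple; hence the two index sets are in one-to-one correspondence and the multiset of entries — so the sum of squares — is preserved.

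The only genuinely delicate point is this index bijection: although elementary, it requires care because the four indices $i,i',j,j'$ are grouped into a (row, column) pair differently in $M$ than in $\reshape{M}$ (the row/column blocks use radices $m$ and $n$ in $\reshape{M}$ but both use radix $m$ in $M$, with $i,i'$ and $j,j'$ regrouped accordingly). Once the bijection is verified, steps (i) and (ii) are routine, and the three equalities compose to give $\|H - L\otimes R\|_F = \|\reshape{H} - \vect{L}\,\vect{R}^\top\|_F$.
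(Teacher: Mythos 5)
Your argument is correct and is essentially the canonical proof of this result: the paper itself only cites \citet{approximation_with_kronecker} and never proves the lemma, and your three-step chain (linearity of $\operatorname{reshape}$, the identity $\operatorname{reshape}(L\otimes R)=\vect{L}\vect{R}^\top$ from equation~\ref{eq:two_spaces}, and norm-preservation via the entry-level index bijection) is exactly the standard Van Loan--Pitsianis argument, with the bijection correctly verified by Euclidean division in the appropriate radices. Note only that the dimensions in the lemma statement ($L\in\mathbb{R}^{m\times n}$, $R\in\mathbb{R}^{n\times m}$) are a typo for $L\in\mathbb{R}^{m\times m}$, $R\in\mathbb{R}^{n\times n}$ (as in the ensuing corollary), which you implicitly and correctly assume when applying equation~\ref{eq:two_spaces}.
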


Since the optimal rank-1 approximation of a matrix is given by its singular value decomposition (SVD), we conclude:
\begin{corollary}
Let $H \in \mathbb{R}^{mn \times mn}$. If the top singular vectors and singular value of $\reshape{H}$ are represented by $u_1, v_1$ and $\sigma_1$, respectively, then the matrices $L \in \mathbb{R}^{m \times m}$ and $R \in \mathbb{R}^{n \times n}$ defined by
\[
\operatorname{vec}(L) = \sigma_1 u_1, \quad \operatorname{vec}(R) = v_1,
\]
minimize the Frobenius norm $\|H - L \otimes R\|_F$.
\end{corollary}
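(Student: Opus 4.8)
The plan is to view this as an immediate consequence of Lemma~\ref{lem:kronecker_approx} together with the Eckart--Young--Mirsky theorem. Lemma~\ref{lem:kronecker_approx} tells us that for every pair $(L, R)$ the approximation error is preserved under the rearrangement,
\[
\| H - L \otimes R \|_F = \| \reshape{H} - \vect{L}\,\vect{R}^\top \|_F,
\]
so minimizing the left-hand side over $(L, R)$ is \emph{exactly} the same optimization problem as minimizing the right-hand side. The point is that the right-hand side is a rank-one approximation problem for the fixed matrix $\reshape{H} \in \mathbb{R}^{m^2 \times n^2}$, which has a known closed-form solution.

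First I would establish the surjectivity needed to equate the two problems. As $L$ ranges over all of $\mathbb{R}^{m \times m}$, the vector $\vect{L}$ ranges over all of $\mathbb{R}^{m^2}$, and likewise $\vect{R}$ ranges over all of $\mathbb{R}^{n^2}$ as $R$ varies. Consequently the outer product $\vect{L}\,\vect{R}^\top$ ranges over precisely the set of matrices in $\mathbb{R}^{m^2 \times n^2}$ of rank at most one. Thus
\[
\min_{L, R} \| H - L \otimes R \|_F = \min_{\operatorname{rank}(Z) \le 1} \| \reshape{H} - Z \|_F,
\]
and any minimizer $Z^\star$ of the right-hand problem corresponds to a minimizing pair $(L, R)$ via any factorization $Z^\star = \vect{L}\,\vect{R}^\top$.

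Next I would invoke Eckart--Young--Mirsky: the best rank-one Frobenius approximation of $\reshape{H}$ is $Z^\star = \sigma_1 u_1 v_1^\top$, where $(\sigma_1, u_1, v_1)$ is the top singular triple of $\reshape{H}$. It then remains only to exhibit $L, R$ realizing this factorization. Choosing $\vect{L} = \sigma_1 u_1$ and $\vect{R} = v_1$ gives $\vect{L}\,\vect{R}^\top = \sigma_1 u_1 v_1^\top = Z^\star$, so these $L$ and $R$ attain the minimum, which is the claim.

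There is essentially no hard step here, since the result is a corollary; the only points requiring a moment of care are the surjectivity argument above (ensuring the Kronecker parametrization really does sweep out all rank-one matrices, so that no better approximant is missed) and the observation that the scalar $\sigma_1$ is absorbed into $L$ rather than $R$ --- any redistribution of $\sigma_1$ across the two factors yields the same product $Z^\star$ and hence an equally valid minimizer, so the stated choice is merely one representative of a one-parameter family of optimizers.
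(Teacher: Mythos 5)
Your proposal is correct and follows essentially the same route as the paper, which derives the corollary directly from Lemma~\ref{lem:kronecker_approx} together with the fact that the optimal rank-one approximation is given by the top singular triple of $\reshape{H}$. Your added remarks on surjectivity of the parametrization and the non-uniqueness of how $\sigma_1$ is distributed between the factors are fine elaborations of what the paper leaves implicit.
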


\textbf{Obtaining SVD by power iteration.}
Power iteration \citep{GoluVanl96} is a well-known method for estimating the top eigenvalue of a matrix $M$. It can also be specialized for obtaining the top singular vectors of a matrix. The corresponding iterations for the left singular vector $\ell$ and the right singular vector $r$ are given by
\begin{equation} \label{eq:svd}
    \ell_k \leftarrow Mr_{k-1} ; \quad r_k \leftarrow M^\top \ell_{k-1},
\end{equation} 
where $k$ denotes the iteration number.

\textbf{Cosine similarity.}
We will be using cosine similarity between matrices as a metric for approximation. For two matrices $M_1$ and $M_2$, this refers to $\text{Tr}(M_1M_2^\top) / (||M_1||_F \cdot || M_2 ||_F)$. A value of 1 indicates perfect alignment, while a value of 0 indicates orthogonality.

\section{Optimal Kronecker product approximation and Shampoo} \label{sec:opt_kron}

In this section, we will specialize the theory of Section \ref{sec:tech_opt_kron} for finding the optimal Kronecker product approximation of a covariance matrix $H = \E_{g \sim \mathcal{D}_g}[gg^\top]$ for $g \in \mathbb{R}^{mn}$. Both perspectives of Shampoo described in Section~\ref{sec:tech_shampoo} are concerned with Kronecker product approximations of $H$ of the form $L \otimes R$ where $L \in \mathbb{R}^{m \times m}, R \in \mathbb{R}^{n \times n}$, but for different distributions $\mathcal{D}_g$. For the Adagrad viewpoint, with $\mathcal{D}_g$ as the uniform distribution over $g_t$ where $1 \leq t \leq T$ refers to the gradient at timestep $t$, $H = H_{\text{Ada}}$. For the Hessian viewpoint, with $\mathcal{D}_g$ as the distribution over gradients with batch size 1 and with sampled labels, $H = H_{\text{GN}}$ (see Section~\ref{sec:tech_hess} for derivation). %

Since our results will hold for all distributions $\mathcal{D}_g$, we will use $\E[gg^\top]$ to refer to $\E_{g \sim \mathcal{D}_g}[gg^\top]$ to simplify notation. The main goal of this section will be to study the optimal Kronecker product approximation to such a generic matrix $H$, see its connection to Shampoo, and experimentally validate our results for $H = H_{\text{Ada}}$ and $H = H_{\text{GN}}$, which are described in Section~\ref{sec:tech_ada} and \ref{sec:tech_hess}, respectively.

\citet{van1993approximation} describe an approach to find the optimal Kronecker product approximation of a matrix (with respect to the Frobenius norm). \citet{koroko2023efficient} use this approach to find the optimal layer-wise Kronecker product approximation of the hessian matrix for networks without weight sharing. We will now do a general analysis which would also be applicable to neural networks with weight sharing.

Since $g \in \mathbb{R}^{mn}$, each entry of $g$ can be described as a tuple $(i,j) \in [m] \times [n]$. Consequently, every entry of $H$ can be represented by the tuple $((i, j), (i', j'))$. We now consider the matrix $\hat{H} \coloneq \reshape{H} \in \mathbb{R}^{m^2 \times n^2}$, which is a rearrangement (see Section~\ref{sec:tech}) of the entries of $H$.

By using equation~\ref{eq:two_spaces} we get that:
\[ \hat{H} = \mathbb{E} [ G \otimes G ]. \]
Further, by Lemma~\ref{lem:kronecker_approx}, we have that if $L \otimes R$ is the optimal Kronecker product approximation of $H$, then $\ell r^\top$ is the optimal rank-1 approximation of $\hat{H}$, where $\ell = \vect{L}$ and $r = \vect{R}$. Hence, the problem reduces to finding the optimal rank-1 approximation of $\hat{H}$. Applying the power iteration scheme described in Equation \ref{eq:svd} for estimating the top singular vectors of $\hat{H}$ and using Lemma~\ref{lem:kronecker_matrix_relationships} yields (where $k$ denotes the $k^{\text{th}}$ step of power iteration): 
\begin{align*}
    \ell_k &\leftarrow \hat{H} r_{k-1} =  \mathbb{E} [ G \otimes G ] r_{k-1} = \vect{\mathbb{E}[ G R_{k-1} G^\top ]}, \\
    r_k &\leftarrow \hat{H}^\top \ell_{k-1} = \mathbb{E} [ G \otimes G ]^\top \ell_{k-1} = \vect{\mathbb{E}[ G^\top L_{k-1} G ]}.
\end{align*}
Reshaping vectors on both sides into matrices results in:
\begin{equation}
\label{eq:pow_it}
L_k \leftarrow \mathbb{E}[ G R_{k-1} G^\top ]; \quad
R_k \leftarrow \mathbb{E}[ G^\top L_{k-1} G ].
\end{equation}

\subsection{One round of power iteration}
\label{sec:one_round}
Our first and main approximation involves replacing the iterative power iteration scheme (Equation~\ref{eq:pow_it}) with just a single iteration. This leads to the main contribution of our work:

\begin{proposition}
\label{prop:main}
    One step of power iteration, starting from the identity, for obtaining the optimal Kronecker product approximation of $H$ is precisely equal to the square of the Shampoo's approximation of $H$
\end{proposition}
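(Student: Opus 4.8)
The plan is to unwind the power iteration recursion of Equation~\ref{eq:pow_it} for a single step and compare the result directly with the squared Shampoo factors; the two will coincide by the mixed-product property of the Kronecker product. First I would make precise what ``starting from the identity'' means: it fixes the initialization $L_0 = I_m$ and $R_0 = I_n$ (equivalently $\ell_0 = \vect{I_m}$ and $r_0 = \vect{I_n}$ in the vectorized form of Equation~\ref{eq:svd}). Substituting $k=1$ into Equation~\ref{eq:pow_it} and dropping the identity factors then gives $L_1 = \mathbb{E}[G R_0 G^\top] = \mathbb{E}[G G^\top]$ and $R_1 = \mathbb{E}[G^\top L_0 G] = \mathbb{E}[G^\top G]$. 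Hence one step of power iteration produces the Kronecker approximation $L_1 \otimes R_1 = \mathbb{E}[G G^\top] \otimes \mathbb{E}[G^\top G]$.

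Next I would recall from Section~\ref{sec:tech_shampoo} that Shampoo's approximation (ignoring the $\epsilon$ and $r$ scalars of Lemma~\ref{lem:gupta_adagrad}) is the matrix $S = \mathbb{E}[G G^\top]^{1/2} \otimes \mathbb{E}[G^\top G]^{1/2}$. The one algebraic fact needed is the mixed-product property $(A \otimes B)(C \otimes D) = (AC) \otimes (BD)$, whose specialization gives $(A \otimes B)^2 = A^2 \otimes B^2$. Applying this with $A = \mathbb{E}[G G^\top]^{1/2}$ and $B = \mathbb{E}[G^\top G]^{1/2}$, both symmetric PSD so that squaring the principal square roots recovers the original matrices, yields $S^2 = \mathbb{E}[G G^\top] \otimes \mathbb{E}[G^\top G]$. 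Comparing with the previous paragraph gives $S^2 = L_1 \otimes R_1$, which is exactly the assertion.

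I do not expect a substantive obstacle here: the entire content is the identification of the identity initialization with $L_0 = I_m,\, R_0 = I_n$ together with the mixed-product property. The only point that warrants care, and which I would flag explicitly rather than suppress, is scalar normalization. A textbook power iteration rescales $\ell_k$ and $r_k$ to unit norm at each step, which would place a positive scalar in front of $L_1 \otimes R_1$; since such a scalar leaves the Kronecker structure unchanged (and is anyway absorbed into the learning rate, just as the $r$ and $\epsilon$ factors are dropped in Lemma~\ref{lem:gupta_adagrad}), the stated equivalence holds verbatim for the unnormalized iteration and holds up to a harmless rescaling otherwise. I would therefore state the proposition for the unnormalized single step, where the equality is exact, and remark on the normalized variant as a scalar ambiguity.
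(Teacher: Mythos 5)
Your proposal is correct and follows essentially the same route as the paper's proof: initialize with $L_0 = I_m$, $R_0 = I_n$, read off $L_1 = \mathbb{E}[GG^\top]$ and $R_1 = \mathbb{E}[G^\top G]$ from one application of Equation~\ref{eq:pow_it}, and identify $L_1 \otimes R_1$ with the square of the right-hand side of Equation~\ref{eq:shamp_approx}. Your explicit appeal to the mixed-product property $(A^{1/2} \otimes B^{1/2})^2 = A \otimes B$ and your remark on the scalar normalization of power iteration are details the paper leaves implicit, but they do not change the argument.
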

\begin{proof}
   The initialization for the single iteration will use the identity matrix, i.e., $I_m$ and $I_n$ for $L$ and $R$, respectively. Thus, we transition from the iterative update equations:
\[
    L_k \leftarrow \mathbb{E}[ G R_{k-1} G^\top ]; \quad
R_k \leftarrow \mathbb{E}[ G^\top L_{k-1} G ],
\]
to the simplified single-step expressions:
\[
   L \leftarrow \mathbb{E}[ G G^\top ]; \quad 
R \leftarrow \mathbb{E}[ G^\top G ].
\] 
With the above expression for $L$ and $R$, $L \otimes R$ is precisely equal to the \textit{square} of the Shampoo's approximation of $H$ given by the right hand side of Equation \ref{eq:shamp_approx}.
\end{proof}
As shown in Figure \ref{fig:main}, for various datasets and architectures, this single step of power iteration is very close to the optimal Kronecker product approximation for both $H = H_{\text{GN}}$ (top) and $H = H_{\text{Ada}}$ (bottom). However, we can see that the upper bound proposed by the original Shampoo work \citep{gupta2018shampoo} is significantly worse.

\subsubsection{Why initialize with the identity matrix?}
\label{sec:whyI}

\begin{figure}[htbp]
\centering
\begin{tabular}{cccc}
       & \textnormal{\small {MNIST-2}} & \textnormal{\small {CIFAR-5M}} & \textnormal{\small {ImageNet}} \\
\rotatebox[origin=c]{90}{\textnormal{\small {Gauss--Newton}}} & 
\animage{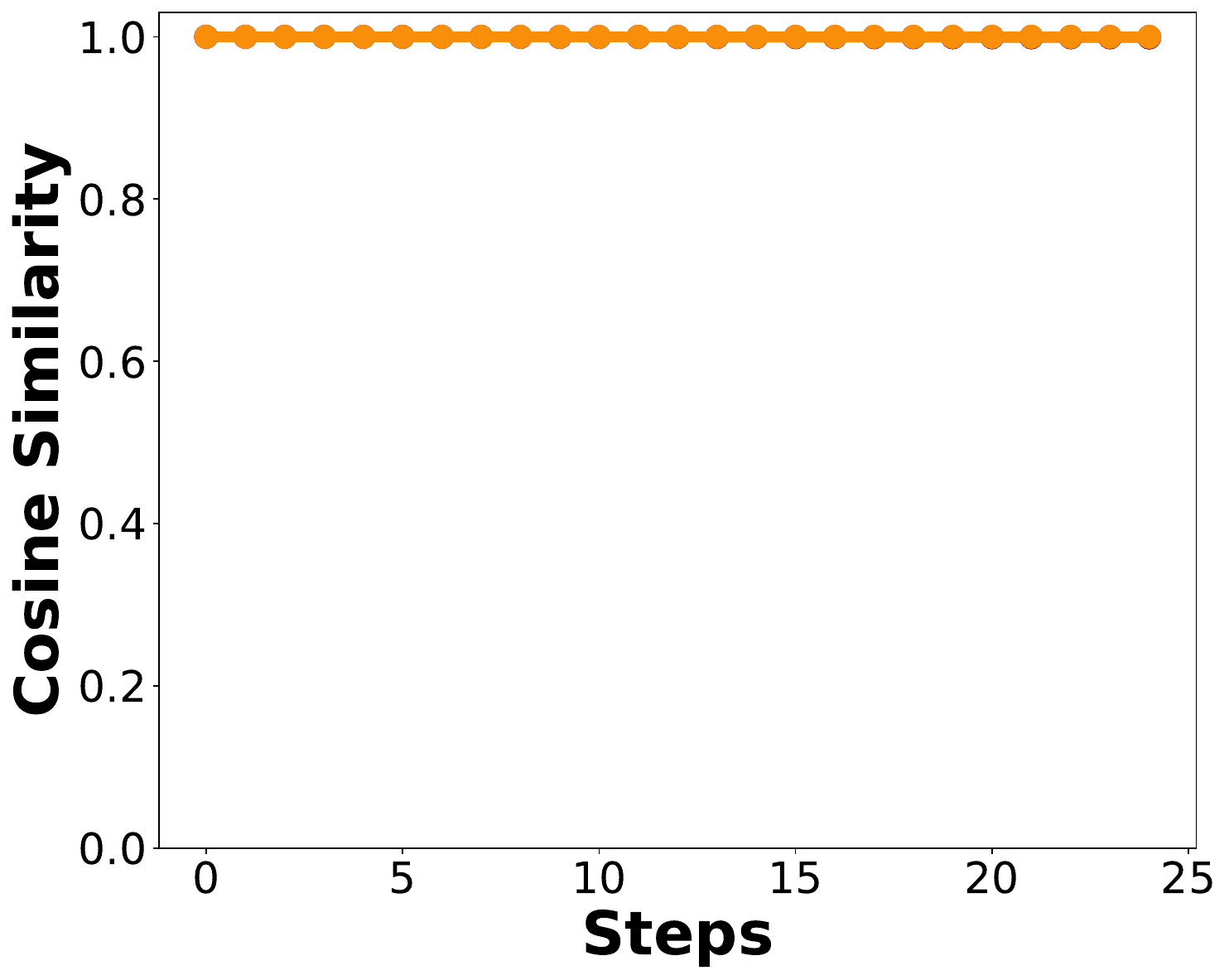} & 
\animage{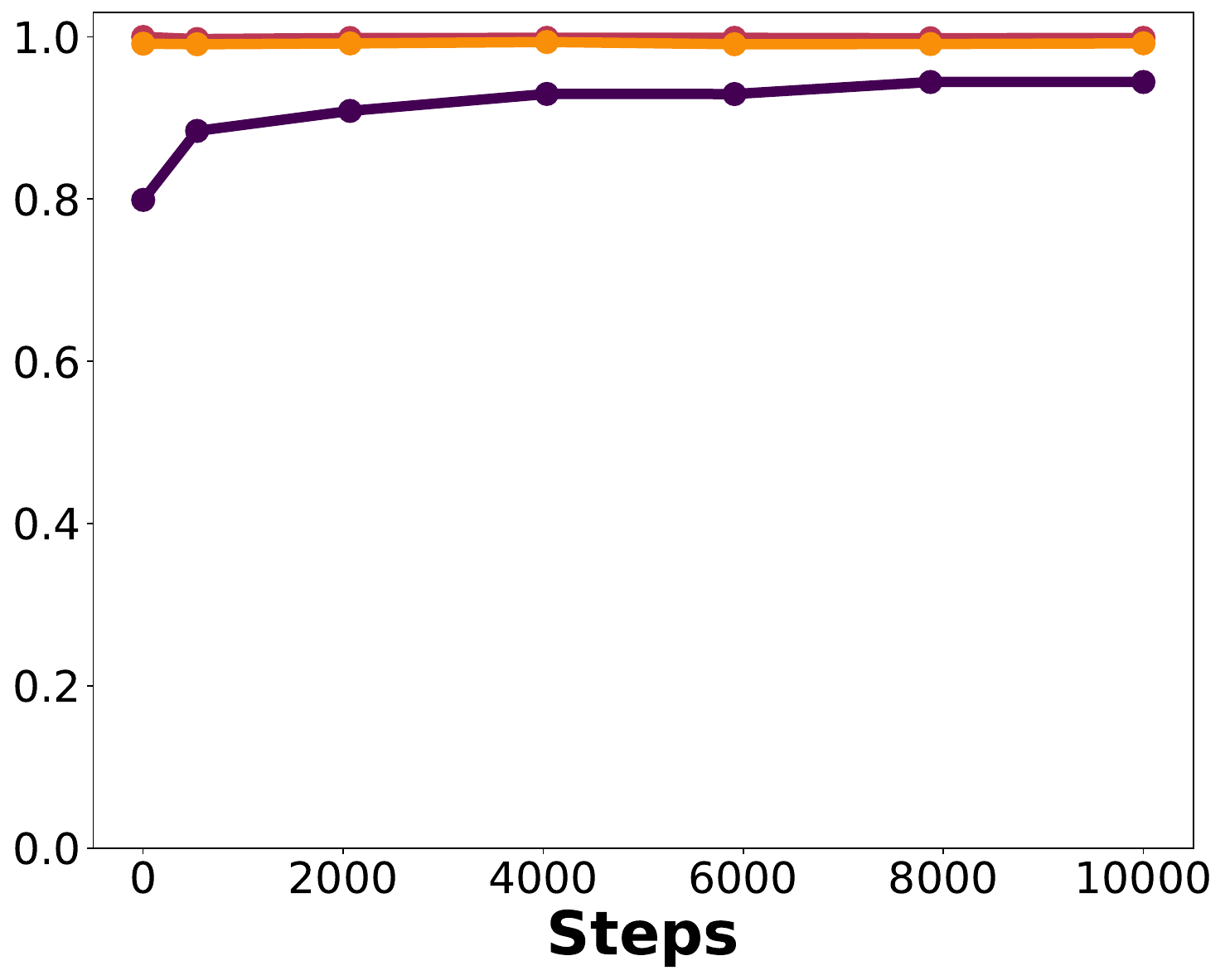} & 
\animage{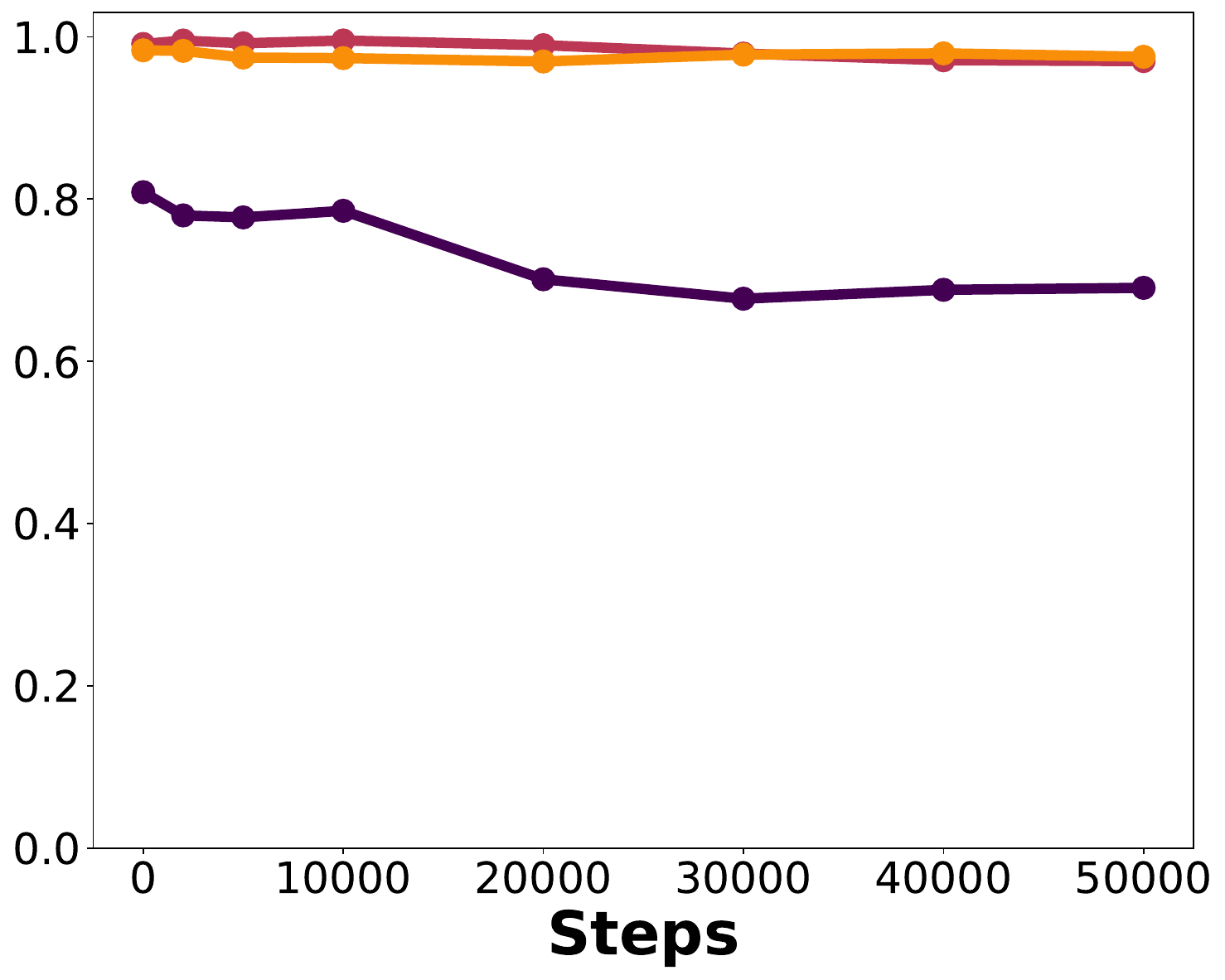} \\
\rotatebox[origin=c]{90}{\textnormal{\small {Adagrad}}} & 
\animage{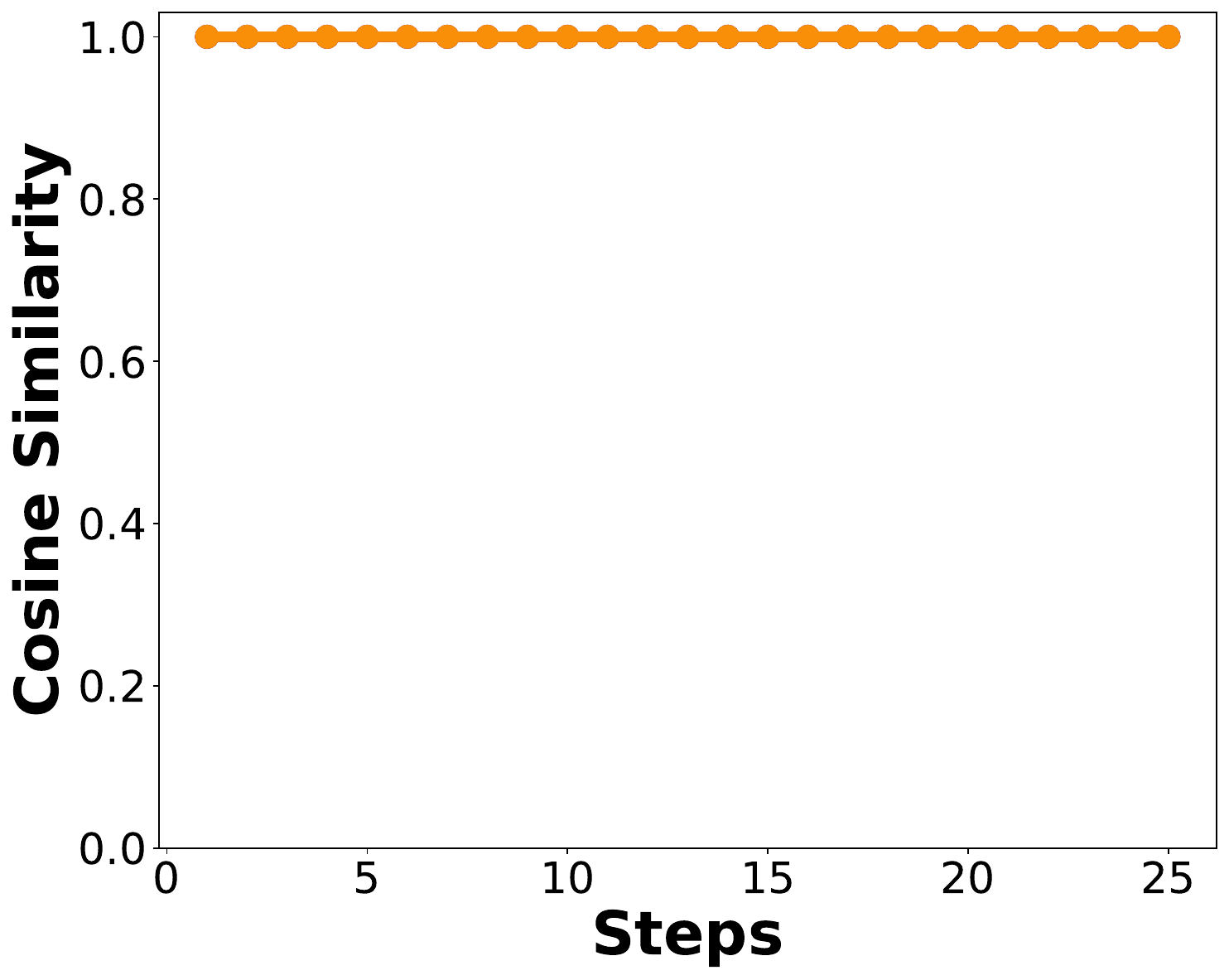} & 
\animage{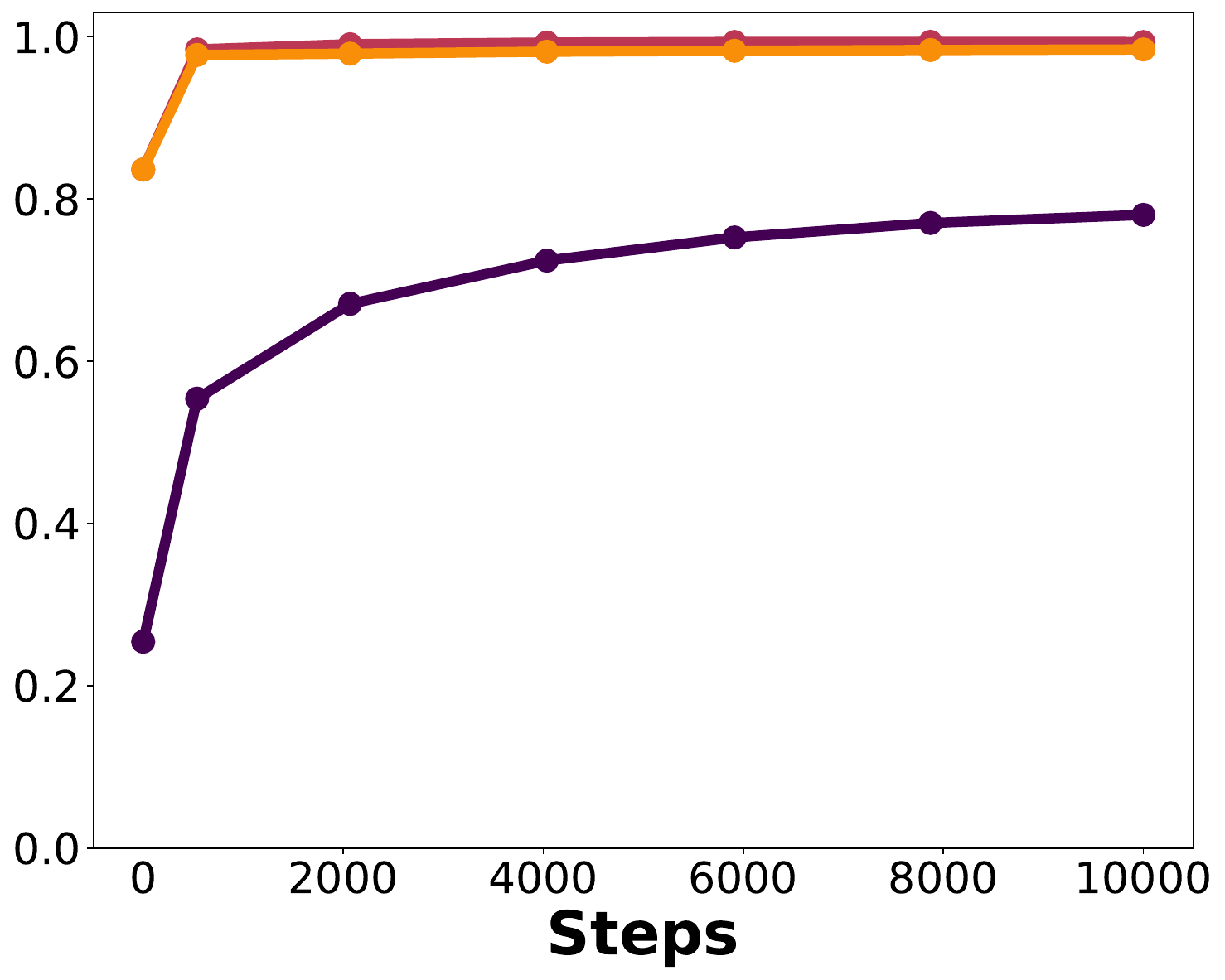} & 
\animage{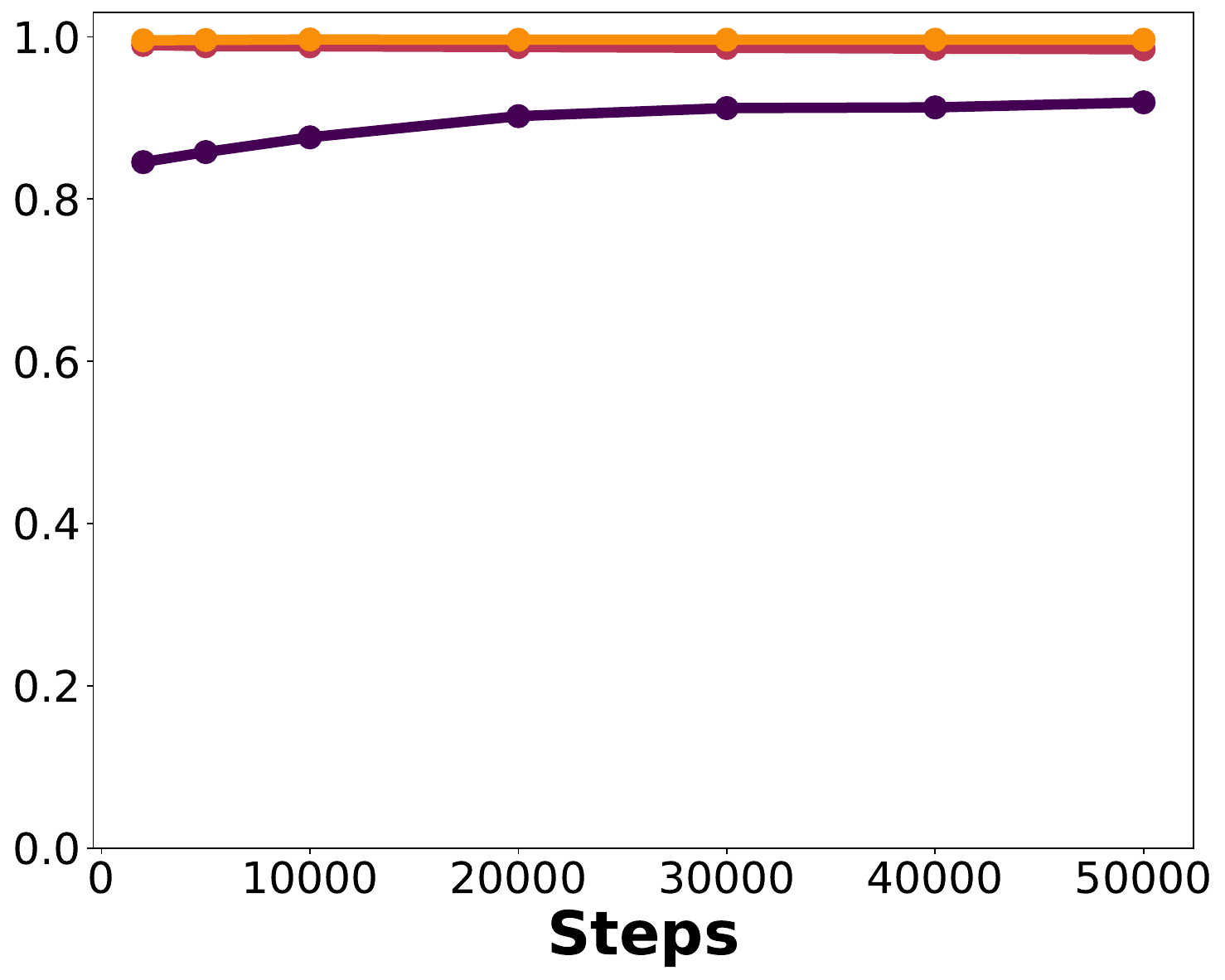} \\
\end{tabular}

\begin{minipage}{\textwidth}
    \centering
    \includegraphics[width=0.95\textwidth]{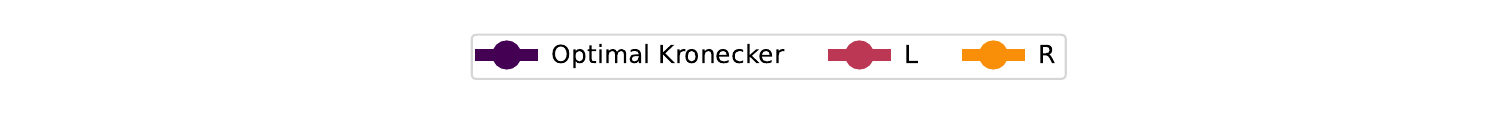}
\end{minipage}
\caption{Comparing $\frac{\sigma_1}{\sqrt{\sum_i \sigma_i^2}}$ and $\frac{\alpha_1 \sigma_1}{\sqrt{\sum_i \alpha_i^2 \sigma_i^2}}$ for various datasets and architectures. The top row is for $H = H_{\text{GN}}$ while the bottom row is for $H = H_{\text{Ada}}$. The $L$ and $R$ legends represent $\frac{\alpha_1 \sigma_1}{\sqrt{\sum_i \alpha_i^2 \sigma_i^2}}$ for the left and right singular vector respectively. The ``Optimal Kronecker'' legend represents $\frac{\sigma_1}{\sqrt{\sum_i \sigma_i^2}}$ (see Section~\ref{sec:whyI}). As seen, $\frac{\alpha_1 \sigma_1}{\sqrt{\sum_i \alpha_i^2 \sigma_i^2}}$ is much closer to $1$ as compared to $\frac{\sigma_1}{\sqrt{\sum_i \sigma_i^2}}$, demonstrating the role played by identity initialization in ensuring convergence of power iteration in one round. See Appendix~\ref{app:fig_details} for details.}
\label{fig:whyI}
\end{figure}

Suppose the SVD of $\hat{H}$ is given by $\hat{H} = \sum_i \sigma_i u_iv_i^T$ , or equivalently, $H = \sum_i \sigma_i U_i \otimes V_i$.  The convergence of the power iteration in one step depends on the inner product of the initialization vector with the top singular vector. Let us focus on the left side,\footnote{The discussion for the other side is analogous.} i.e., the update $L  \leftarrow \mathbb{E}[ G G^\top ]$ which as described earlier is equivalent to starting with the initialization $I_n$. Let $\vectb{I_n} = \sum_i \alpha_i v_i$ i.e. $I_n = \sum_i \alpha_i V_i$. After one iteration, we obtain $\ell := \sum_i \alpha_i \sigma_i u_i$, and correspondingly, $L  := \sum_i \alpha_i \sigma_i U_i$. We are interested in assessing how closely $\ell$ approximates the leading eigenvector $u_1$. The cosine similarity between $\ell$ and $u_1$ is given by $\frac{\alpha_1 \sigma_1}{\sqrt{\sum_i \alpha_i^2 \sigma_i^2}}$.

One reason why the cosine similarity might be large is that $\hat{H}$ is nearly rank-1 ($\sigma_1$ is large); that is, $H$ is closely approximated by a Kronecker product. As illustrated in Figure~\ref{fig:main}, this assumption does not universally hold. Instead, we propose an alternative explanation for why a single step of power iteration is typically sufficient: the coefficient $\alpha_1$ is usually larger than $\alpha_i$ for all $i \geq 2$. We begin by providing a theoretical justification for this, followed by empirical evidence from our experiments.

We start by noting that $\alpha_i = \vectb{I_n}^Tv_i = \text{Tr}(V_i)$.
Now, we will show that using the identity matrix as initialization is a good choice since a)  shows it has the maximal dot product with possible top components i.e., PSD matrices (Proposition~\ref{lem:iden}), and b) we expect it to have a small dot product with later components.
\begin{restatable}[~\citet{van1993approximation}]{lemma}{lempsd}
\label{claim:u_psd}  
  $V_1$ is a Positive Semi-Definite (PSD) matrix.
\end{restatable}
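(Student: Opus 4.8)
The plan is to recast the statement in terms of the linear map $T$ on $n\times n$ matrices defined by $T(V)=\E[GVG^\top]$. By Lemma~\ref{lem:kronecker_matrix_relationships} together with the identity $\hat H=\E[G\otimes G]$, we have $\hat H\,\vect{V}=\vect{T(V)}$, so the right singular structure of $\hat H$ is exactly the singular structure of $T$: the top singular value is $\sigma_1=\max_{\|V\|_F=1}\|T(V)\|_F$, and the maximizing $V$ (reshaped from $v_1$) is $V_1$. Equivalently, $V_1$ is the top eigen-matrix of the self-adjoint positive operator $S:=T^\ast T$, where $S(V)=\E_{G,G'}[\,G^\top G'\,V\,G'^\top G\,]$ with $G,G'$ i.i.d.; self-adjointness of $S$ follows from the swap symmetry $G\leftrightarrow G'$. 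I would prove that this maximizer can be taken positive semidefinite, which is the assertion.

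The core rests on two observations. First, $T$ is \emph{completely positive}: if $V=\sum_k w_k w_k^\top\succcurlyeq 0$ then $T(V)=\sum_k\E[(Gw_k)(Gw_k)^\top]\succcurlyeq 0$, so $T$ maps the PSD cone into itself. Second, restricted to \emph{symmetric} inputs the maximizer is automatically PSD: writing a symmetric $V=V_+-V_-$ with $V_\pm\succcurlyeq0$ and orthogonal supports, the matrix modulus $|V|=V_++V_-$ has the same Frobenius norm as $V$, and since $T(V_\pm)\succcurlyeq0$,
\[
\|T(|V|)\|_F^2-\|T(V)\|_F^2=4\,\Tr\!\big(T(V_+)\,T(V_-)\big)\ge 0,
\]
because the trace of a product of PSD matrices is nonnegative. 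Hence $|V|$ does at least as well as $V$, so the symmetric maximizer may be taken PSD.

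The main obstacle is to rule out that the global maximizer is purely \emph{antisymmetric}. Since $T$ sends symmetric matrices to symmetric matrices and antisymmetric to antisymmetric, and these subspaces are Frobenius-orthogonal, one has $\sigma_1=\max\!\big(\|T|_{\mathrm{sym}}\|,\,\|T|_{\mathrm{anti}}\|\big)$, and I must show the symmetric block dominates. The cleanest route is to extend $T$ to complex matrices: because $G$ is real, $T$ still maps Hermitian matrices to Hermitian matrices and remains completely positive. A complex-linear self-adjoint operator that preserves the Hermitian subspace admits an eigenbasis of Hermitian matrices, so its largest eigenvalue $\sigma_1^2$ is attained by a Hermitian eigenmatrix; the modulus/trace argument of the previous paragraph (now for Hermitian PSD matrices) shows this eigenmatrix can be taken PSD, and in particular the antisymmetric block cannot strictly exceed the symmetric one. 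Finally, since $\hat H$ is real, the real part of this Hermitian PSD eigenmatrix is a real symmetric PSD matrix lying in the top right singular space, and we take it as $V_1$. Equivalently, one may phrase the conclusion as a Perron--Frobenius/Krein--Rutman statement for the cone-preserving self-adjoint operator $S$ on symmetric matrices, the only delicate point being precisely that its spectral radius is not exceeded on the antisymmetric block. I expect this symmetric-versus-antisymmetric dichotomy to be the one genuinely nontrivial step, since everything else follows from the rank-one/SVD reduction already established in Section~\ref{sec:tech_opt_kron} together with the complete positivity of $T$.
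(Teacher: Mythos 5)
Your proposal is correct, but there is no paper proof to compare it to: the lemma is stated with an attribution to \citet{van1993approximation}, and no argument for it appears in Appendix~\ref{app:proofs}. Judged on its own, your route --- identifying the right singular structure of $\hat H$ with that of the map $T(V)=\E[GVG^\top]$, noting that its Kraus form makes it preserve the PSD cone, and running the modulus argument $\|T(V_++V_-)\|_F^2-\|T(V_+-V_-)\|_F^2=4\Tr\bigl(T(V_+)T(V_-)\bigr)\ge 0$ --- is a clean standalone reconstruction of the van Loan--Pitsianis result in the covariance setting $H=\E[gg^\top]$. You also correctly isolate the one genuinely delicate point: the transpose symmetry that $\reshape{H}$ inherits from $H=H^\top$ only forces $V_1^\top=\pm V_1$, so positive semi-definiteness of $H$ must enter precisely to exclude the antisymmetric block, and your complexification does handle this. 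To make that last step airtight, add two small observations you left implicit: (i) a nonzero Hermitian PSD top eigenmatrix $Q$ of $T^\ast T$ has $\Tr(\mathrm{Re}\,Q)=\Tr Q>0$, so $\mathrm{Re}\,Q$ is a \emph{nonzero} real symmetric PSD matrix; and (ii) since $T^\ast T$ maps real matrices to real matrices, taking real parts of $T^\ast T(Q)=\sigma_1^2 Q$ gives $T^\ast T(\mathrm{Re}\,Q)=\sigma_1^2\,\mathrm{Re}\,Q$ directly, so $\mathrm{Re}\,Q$ lies in the top right singular space without any further norm comparison. Two cosmetic remarks: $S=T^\ast T$ is self-adjoint and positive automatically, no swap symmetry is needed; and the lemma should be read as ``$V_1$ can be chosen PSD,'' since singular vectors are determined only up to sign (and up to rotation when $\sigma_1$ is degenerate) --- which is exactly what your argument delivers and all that Section~\ref{sec:whyI} requires.
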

Since $V_1$ is a PSD matrix we would like to initialize our power iteration with a matrix which is close to all PSD matrices. Now, we will show that identity is the matrix which achieves this, specifically it maximizes the minimum dot product across the set of PSD matrices of unit Frobenius norm.
\begin{restatable}{proposition}{lemiden}
\label{lem:iden}  
Consider the set of PSD matrices of unit Frobenius norm of dimension $m$ denoted by $S_m$. Then
    \[ \frac{1}{\sqrt{m}} I_m = \argmax_{M \in S_m} \min_{M' \in S_m} \langle \vect{M}, \vect{M'} \rangle . \]
\end{restatable}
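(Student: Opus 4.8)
The plan is to collapse the max--min problem into a one-dimensional optimization over eigenvalues. First I would rewrite the objective in trace form: since every $M, M'$ in $S_m$ is symmetric, $\langle \vect{M}, \vect{M'} \rangle = \Tr(M M')$. The strategy is then to compute the inner minimization in closed form for an arbitrary fixed $M$, and substitute the result into the outer maximization.

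For the inner problem I claim that for any fixed PSD matrix $M$ of unit Frobenius norm, $\min_{M' \in S_m} \Tr(M M') = \lambda_{\min}(M)$, the smallest eigenvalue of $M$. To prove the lower bound I would eigendecompose $M' = \sum_j \mu_j w_j w_j^\top$ with $\mu_j \geq 0$ and orthonormal $w_j$, where the unit-norm constraint reads $\sum_j \mu_j^2 = 1$. Then $\Tr(M M') = \sum_j \mu_j\, w_j^\top M w_j \geq \lambda_{\min}(M) \sum_j \mu_j$, using $w_j^\top M w_j \geq \lambda_{\min}(M) \geq 0$ and $\mu_j \geq 0$. The key elementary fact is that for nonnegative reals $\sum_j \mu_j \geq \sqrt{\sum_j \mu_j^2} = 1$, so $\Tr(M M') \geq \lambda_{\min}(M)$. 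Achievability follows by taking $M' = w w^\top$ with $w$ a unit eigenvector of $M$ for $\lambda_{\min}(M)$; this $M'$ is PSD with unit Frobenius norm and attains $\Tr(M M') = w^\top M w = \lambda_{\min}(M)$.

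With the inner problem solved, the outer problem reduces to $\max_{M \in S_m} \lambda_{\min}(M)$. Diagonalizing $M$ with eigenvalues $\lambda_1, \dots, \lambda_m \geq 0$ subject to $\sum_i \lambda_i^2 = 1$, this is exactly $\max \min_i \lambda_i$ under those constraints. Setting $t = \min_i \lambda_i$, the bound $\lambda_i \geq t$ for all $i$ forces $1 = \sum_i \lambda_i^2 \geq m t^2$, hence $t \leq 1/\sqrt{m}$, with equality if and only if every $\lambda_i = 1/\sqrt{m}$. The optimal spectrum is therefore flat, and the unique maximizer is the matrix all of whose eigenvalues equal $1/\sqrt{m}$, namely $\frac{1}{\sqrt{m}} I_m$, which is what we wanted to show.

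I expect the only delicate step to be the inner minimization, specifically the lower bound that hinges on $\sum_j \mu_j \geq \sqrt{\sum_j \mu_j^2}$ together with the positivity $\lambda_{\min}(M) \geq 0$ guaranteed by PSD-ness; without PSD-ness the sign could flip and the argument would fail. Once the identity $\min_{M' \in S_m} \Tr(M M') = \lambda_{\min}(M)$ is established, the remaining outer optimization is a one-line consequence of the constraint $\sum_i \lambda_i^2 = 1$.
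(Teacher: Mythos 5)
Your proof is correct and follows essentially the same route as the paper's: both arguments test $M$ against rank-one eigenprojectors $w w^\top$ to bound the inner minimum from above, and both use the inequality $\sum_j \mu_j \geq \bigl(\sum_j \mu_j^2\bigr)^{1/2}$ for the nonnegative spectrum of a PSD matrix to bound it from below. The only difference is organizational: you solve the inner minimization exactly as $\lambda_{\min}(M)$ and then maximize that scalar, whereas the paper only upper-bounds the game value for arbitrary $M$ and separately verifies that $\frac{1}{\sqrt{m}} I_m$ attains it; your version is a slightly cleaner packaging of the same ingredients.
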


The previous proposition argues that $I_m$ maximizes the worst-case dot product with possible top singular vectors. Now, we argue that its dot product with other singular vectors should be lower.
\begin{restatable}{lemma}{lemnopsd}
\label{claim:no_psd}  
  If $V_1$ is positive-definite, then $V_i$ for $i \geq 2$ are not PSD.
\end{restatable}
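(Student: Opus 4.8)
The plan is to argue by contradiction, exploiting the fact that the reshaped right singular vectors form a Frobenius-orthonormal family. Recall that the matrices $V_i$ are defined by $\vect{V_i} = v_i$, where the $v_i$ are the orthonormal right singular vectors of $\hat{H}$. Translating orthonormality through the standard identity $\langle \vect{A}, \vect{B}\rangle = \Tr(A^\top B)$ gives $\Tr(V_i^\top V_j) = \delta_{ij}$; in particular each $V_i$ has unit Frobenius norm, and every $V_i$ with $i \geq 2$ is Frobenius-orthogonal to $V_1$. These two facts are all I need from the SVD structure.

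Next I would suppose, towards a contradiction, that some $V_i$ with $i \geq 2$ is PSD. Since $V_1$ is positive-definite it is in particular symmetric, so the orthogonality relation reads $\Tr(V_1 V_i) = \Tr(V_1^\top V_i) = 0$. I would then invoke the elementary linear-algebra fact that the trace pairing of a positive-definite matrix with a PSD matrix vanishes only when the latter is zero. Concretely, letting $V_1^{1/2}$ denote the (invertible) PSD square root of $V_1$, cyclicity of the trace gives
\[ 0 = \Tr(V_1 V_i) = \Tr\!\left(V_1^{1/2} V_i V_1^{1/2}\right). \]
The matrix $V_1^{1/2} V_i V_1^{1/2}$ is PSD, being a congruence transform of the PSD matrix $V_i$, and a PSD matrix of zero trace must be the zero matrix. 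Hence $V_1^{1/2} V_i V_1^{1/2} = 0$, and invertibility of $V_1^{1/2}$ forces $V_i = 0$.

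This contradicts $\|V_i\|_F = 1$, which completes the argument and shows no $V_i$ with $i \geq 2$ can be PSD.

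The step I would state most carefully — and the only mildly delicate point — is this PD/PSD trace fact together with the accompanying symmetry bookkeeping. The argument genuinely uses that $V_1$ is symmetric (so that $\Tr(V_1^\top V_i) = \Tr(V_1 V_i)$) and that it is \emph{strictly} positive-definite (so that $V_1^{1/2}$ is invertible), rather than merely PSD; this is precisely why the hypothesis strengthens Lemma~\ref{claim:u_psd} from PSD to PD. I would also note that the reasoning is insensitive to any multiplicity of the top singular value, since it relies only on orthonormality of the chosen singular basis and not on uniqueness of $V_1$.
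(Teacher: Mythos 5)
Your proof is correct and takes essentially the same route as the paper's: both arguments reduce the claim to the elementary fact that the trace pairing of a positive-definite matrix with a nonzero PSD matrix is strictly positive, which contradicts the Frobenius-orthogonality $\Tr(V_1^\top V_i)=0$ of distinct singular vectors. The only difference is how that elementary fact is established --- the paper expands $\Tr(M_1 M_2)$ over the two eigenbases as $\sum_{i,j}\lambda_{1i}\lambda_{2j}(q_{1i}^\top q_{2j})^2$, whereas you use the square-root congruence $\Tr\bigl(V_1^{1/2} V_i V_1^{1/2}\bigr)$; both are standard and equally valid.
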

Therefore, the diagonal elements of $V_i$ for $i \geq 2$ need not be positive, and this might lead to cancellations (for $i \geq 2$) in the trace of $V_i$ which is equal to $\alpha_i$. Hence we expect $\alpha_i$'s for $i \geq 2$ to be smaller than $\alpha_1$. We now show experiments to demonstrate this in practice. To quantify the benefit of $\alpha_1$ usually being larger than $\alpha_i$ for $i \geq 2$, we will compare $\frac{\alpha_1 \sigma_1}{\sqrt{\sum_i \alpha_i^2 \sigma_i^2}}$ (for both left and right singular vectors) and $\frac{\sigma_1}{\sqrt{\sum_i \sigma_i^2}}$. The latter can be interpreted as the cosine similarity if all $\alpha$'s were equal or as a measure of how close $\hat{H}$ is to being rank 1 since it is equal to the cosine similarity between $u_1v_1^T$ and $\hat{H}$. Thus $\frac{\sigma_1}{\sqrt{\sum_i \sigma_i^2}}$ is equal to the ``Optimal Kronecker'' cosine similarity used in Figure~\ref{fig:main}. In Figure~\ref{fig:whyI} we track both of these quantities through training and indeed observe that $\frac{\alpha_1 \sigma_1}{\sqrt{\sum_i \alpha_i^2 \sigma_i^2}}$ are significantly closer to 1 than $\frac{\sigma_1}{\sqrt{\sum_i \sigma_i^2}}$ for both $H = H_{\text{GN}}$ (top) and $H = H_{\text{Ada}}$ (bottom).

\subsubsection{Exact Kronecker product structure in \texorpdfstring{$H$}{HGN}}

The previous discussion shows that $\E\left[G G^\top\right] \otimes \E\left[G^\top G\right]$ is close to the optimal Kronecker product approximation of $H$. In this section we will show that this holds exactly if $H$ is a Kronecker product. Intuitively, this holds since if $H$ is a Kronecker product, then $\hat{H}$ is rank-1, and one round of power iteration would recover $\hat{H}$. Until now, we have been focusing on the direction of top singular vectors of $\hat{H}$, but with the assumption of $\hat{H}$ being rank 1, we can compute the explicit expression for $\hat{H}$, and hence of $H$.
\begin{restatable}{corollary}{corrrank}
    \label{corr:rank1}
    Under the assumption that $\hat{H}$ is rank-1,
\[ H = \left(\E\left[G G^\top\right] \otimes \E\left[G^\top G\right]\right) / \Tr\left(\E\left[G G^\top\right]\right). \]   
\end{restatable}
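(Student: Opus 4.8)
The plan is to translate the rank-$1$ hypothesis through the reshape equivalence and then read off the two Kronecker factors from the single power-iteration step already computed in Section~\ref{sec:opt_kron}. First I would invoke Equation~\ref{eq:two_spaces}: since $\hat{H}$ is assumed to be rank-$1$, write $\hat{H} = ab^\top$ for some $a \in \R^{m^2}$ and $b \in \R^{n^2}$, which by that equivalence is the same as $H = A \otimes B$ with $a = \vect{A}$, $b = \vect{B}$, $A \in \R^{m \times m}$ and $B \in \R^{n \times n}$. So the task reduces to pinning down $A$ and $B$ in terms of $\E[GG^\top]$ and $\E[G^\top G]$, up to the reciprocal scalar that any rank-$1$ factorization leaves ambiguous.

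Next I would apply $\hat{H}$ to the identity initialization exactly as in the derivation of Equation~\ref{eq:pow_it}. Feeding $\vect{I_n}$ through $\hat{H}$ gives $\vect{\E[GG^\top]} = \hat{H}\,\vect{I_n} = a\,(b^\top \vect{I_n})$. The key elementary identity is $\vect{B}^\top \vect{I_n} = \Tr(B^\top I_n) = \Tr(B)$, so that $\E[GG^\top] = \Tr(B)\,A$. The symmetric computation, applying $\hat{H}^\top$ to $\vect{I_m}$, yields $\E[G^\top G] = \Tr(A)\,B$. These two relations determine $A = \E[GG^\top]/\Tr(B)$ and $B = \E[G^\top G]/\Tr(A)$.

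Substituting back into $H = A \otimes B$ then gives $H = \bigl(\E[GG^\top] \otimes \E[G^\top G]\bigr)/(\Tr(A)\Tr(B))$, and the only remaining step is to evaluate the scalar $\Tr(A)\Tr(B)$. Taking the trace of $\E[GG^\top] = \Tr(B)\,A$ gives $\Tr(\E[GG^\top]) = \Tr(A)\Tr(B)$, which is precisely the normalizing constant appearing in the statement, so the proof is complete. As a consistency check one may note that $\Tr(\E[GG^\top]) = \E[\|G\|_F^2] = \Tr(\E[G^\top G])$, so either factor could legitimately appear in the denominator.

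I do not expect a genuinely hard step here: once the rank-$1$ structure is carried through the reshape map, the argument is bookkeeping. The one place to be careful is the pair of scalar identities $\vect{B}^\top \vect{I_n} = \Tr(B)$ and $\Tr(\E[GG^\top]) = \Tr(A)\Tr(B)$, since these are exactly what make the ambiguous rank-$1$ scalars cancel into the single clean normalization claimed in the corollary.
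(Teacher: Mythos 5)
Your proposal is correct and follows essentially the same route as the paper: both apply the one-step power iteration from the identity to the rank-one $\hat{H}$, use the identity $\vect{B}^\top\vect{I_n}=\Tr(B)$ to read off $\E[GG^\top]=\Tr(B)\,A$ and $\E[G^\top G]=\Tr(A)\,B$, and cancel the scalars via $\Tr(\E[GG^\top])=\Tr(A)\Tr(B)$. The only cosmetic difference is that the paper normalizes the factorization as $\hat{H}=\sigma uv^\top$ with unit singular vectors, whereas you absorb the scale into $a$ and $b$; the computation is identical.
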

\begin{proof}
Let $\hat{H} = \sigma uv^\top$, i.e, $H = \sigma U \otimes V$. Let $I_m = \text{Tr}(U) U + R_m$ and $I_n = \text{Tr}(V) V + R_n$, where $R_m$ and $R_n$ are the residual matrices. Now, after one round of power iteration, the left and right estimates provided by Shampoo are given by
\[ \E\left[G G^\top\right] = \sigma \text{Tr}(V) U, \quad \E\left[G^\top G\right] = \sigma \text{Tr}(U) V. \]
From this, we can see that $\Tr\left(\E\left[G G^\top\right]\right) = \sigma \Tr(U) \Tr(V)$. Thus
\[ H = \sigma U \otimes V = \left(\E\left[G G^\top\right] \otimes \E\left[G^\top G\right]\right)/ \Tr\left(\E\left[G G^\top\right]\right). \]
\end{proof}

Since $H = \hat{H}_{\text{GN}}$ is an $m^2 \times 1$ matrix for binomial logistic regression, it is rank-1, so the equality in the corollary holds. In other words, the square of Shampoo's $H_{\text{GN}}$ estimate perfectly correlates with $H_{\text{GN}}$ for binomial logistic regression. This is demonstrated in the first plot of Figure \ref{fig:main}.

We note that $\left(\E\left[G G^\top\right] \otimes \E\left[G^\top G\right]\right) / \Tr\left(\E\left[G G^\top\right]\right)$ as an estimate of $H$ was also derived by~\citet{yi21}. But their assumptions were much stronger than ours, specifically they assume that the gradients follow a \textit{tensor-normal distribution}, which implies that $\hat{H}$ is rank 1. Instead, we only make a second moment assumption on the gradients: $H = \E [ gg^\top]$ is an exact Kronecker product. We also note that our derivation of the \textit{direction} $\E\left[G G^\top\right] \otimes \E\left[G^\top G\right]$ being close to the optimal Kronecker product approximation holds independently of $\hat{H}$ being rank 1.

\subsubsection{Discussion about optimization}

Let us refer to $\E[ GG^\top ] \otimes \E[ G^\top G]$ by $H_{1}$. As mentioned in Equation \ref{eq:shamp_approx}, the original Shampoo paper used the approximation $H$ used was $H_{1/2} \coloneq \E[ GG^\top ]^{1/2} \otimes \E[ G^\top G]^{1/2}$. In practice, when using Shampoo as an optimization algorithm, the gradient step is taken in the direction of $H_{1/2}^{-p} \nabla L$ where $p$ is tuned as a hyperparameter~\citep{anil2021towards,shi2023distributed}. Since $H_{1/2}^{-p} = H_{1}^{-p/2}$, searching over $p$ in $H_{1/2}^{-p}$ yields the same search space as $H_{1}^{-p}$. Therefore, the difference between $H_{1}$ and $H_{1/2}$ does not manifest practically in optimization speed, but it yields a significant difference in our understanding of how Shampoo works.

\section{Hessian Approximation of Shampoo} \label{sec:hess_shampoo}

From the Hessian approximation viewpoint, the previous section covers the case of using batch size $1$ and sampled labels, as described in Section \ref{sec:tech_hess}. To be precise, in Figure \ref{fig:main} top, we consider how well  $H_{\text{GN}}$ is correlated with $E_{x, s}[G_{x,s}G_{x,s}^T] \otimes E_{x, s}[G_{x,s}^TG_{x,s}]$, where $s$ represents that the labels are sampled from the model's output distribution. On the other hand, as discussed in Section~\ref{sec:tech_hess}, Shampoo in practice is generally used with arbitrary batch sizes and real labels. We now investigate the effect of these two factors on the Hessian approximation.

\subsection{Averaging gradients across the batch}
\label{sec:avg}

The next approximation towards Shampoo is to average the gradient across the batch, i.e., we go from 
\[ 
    L \leftarrow \E_{x, s \sim f(x)}[ G_{x, s} G_{x, s}^\top ] ;\quad  
    R \leftarrow \E_{x, s \sim f(x)}[ G_{x, s}^\top G_{x, s} ]
\]
to 
\[
    L \leftarrow |B| \E_{B,\bf{s}} [G_{B,\bf{s}}  G_{B,\bf{s}}^\top] ; \quad R \leftarrow |B| \E_{B,\bf{s}} [G_{B,\bf{s}}^\top  G_{B,\bf{s}}], 
\]
where $B$ denotes the batch, $\bf{s}$ is the concatenation of $s\sim f(x)$ for all $x \in B$ and $G_{B, \bf{s}} = \frac{1}{|B|} \sum_{x \in B, s=\mathbf{s} [x]} G_{x,s}$ is the batch gradient, with $\mathbf{s} [x]$ representing the sampled label corresponding to $x \in B$.

As previous works have shown, this change does not have any effect in expectation due to $G_{x, s}$ being mean zero for all $x$ when we take expectation over $s \sim f(x)$ \citep{f4a2a236-5292-37c9-8225-a973fbbd48c0} i.e. $\E_s[G_{x, s}] = 0$.
\begin{restatable}[Implicitly in \citet{liu2024sophia, osawa2023asdl}]{lemma}{lemavggrad}
    $$|B| \E_{B,\bf{s}} [G_{B,\bf{s}}  G_{B,\bf{s}}^\top] = \E_{x, s \sim f(x)}[ G_{x, s} G_{x, s}^\top ].$$
\end{restatable}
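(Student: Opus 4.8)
The plan is to expand the outer product into a double sum over the batch and exploit independence of the sampled labels across distinct inputs. Writing $G_{B,\mathbf{s}} = \frac{1}{|B|}\sum_{x \in B} G_{x,\mathbf{s}[x]}$, bilinearity gives
\[
G_{B,\mathbf{s}} G_{B,\mathbf{s}}^\top = \frac{1}{|B|^2} \sum_{x \in B} \sum_{x' \in B} G_{x,\mathbf{s}[x]} G_{x',\mathbf{s}[x']}^\top.
\]
First I would take the expectation over the sampled labels $\mathbf{s}$ with the batch $B$ held fixed, splitting the double sum into its diagonal part ($x = x'$) and its off-diagonal part ($x \neq x'$).

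For the off-diagonal terms, the key observation is that for $x \neq x'$ the sampled labels $\mathbf{s}[x]$ and $\mathbf{s}[x']$ are drawn independently from $f(x)$ and $f(x')$ respectively, so the expectation factorizes as $\E_{s\sim f(x)}[G_{x,s}]\,\E_{s'\sim f(x')}[G_{x',s'}]^\top$. Invoking the mean-zero property $\E_{s\sim f(x)}[G_{x,s}] = 0$ noted above, every off-diagonal term vanishes. The diagonal terms reduce directly to $\E_{s\sim f(x)}[G_{x,s}G_{x,s}^\top]$, leaving
\[
\E_{\mathbf{s}}\!\left[G_{B,\mathbf{s}} G_{B,\mathbf{s}}^\top\right] = \frac{1}{|B|^2}\sum_{x \in B}\E_{s\sim f(x)}[G_{x,s}G_{x,s}^\top].
\]

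Finally I would take the expectation over the batch $B$, whose $|B|$ elements are i.i.d.\ draws from $\mathcal{D}_x$; each summand then has the same expectation $\E_{x,s}[G_{x,s}G_{x,s}^\top]$, so the sum over $x \in B$ contributes a factor of $|B|$ and the prefactor becomes $1/|B|$. Multiplying through by $|B|$ cancels this factor and yields the claim. I do not anticipate a serious obstacle here; the only point requiring care is bookkeeping the two independent sources of randomness, namely treating the cross terms with the labels independent across distinct inputs, which is precisely where the mean-zero cancellation enters.
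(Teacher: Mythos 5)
Your proof is correct and follows essentially the same route as the paper's: expand the outer product into a double sum, kill the off-diagonal terms using independence of the sampled labels together with $\E_{s \sim f(x)}[G_{x,s}] = 0$, and then average over the batch. Your write-up is slightly more explicit about the diagonal/off-diagonal split and the i.i.d.\ structure of the batch, but there is no substantive difference.
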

However, this does lead to a significant improvement in computational complexity by saving up to a factor of batch size. %
\subsection{Using real labels instead of sampled labels}
\label{sec:labels}

\begin{figure}[htbp]
\centering
\begin{tabular}{cccc}
       & \textnormal{\small {MNIST-2}} & \textnormal{\small {CIFAR-5M}} & \textnormal{\small {ImageNet}} \\
\rotatebox[origin=c]{90}{\textnormal{\small {Batch Size 1}}} & 
\animage{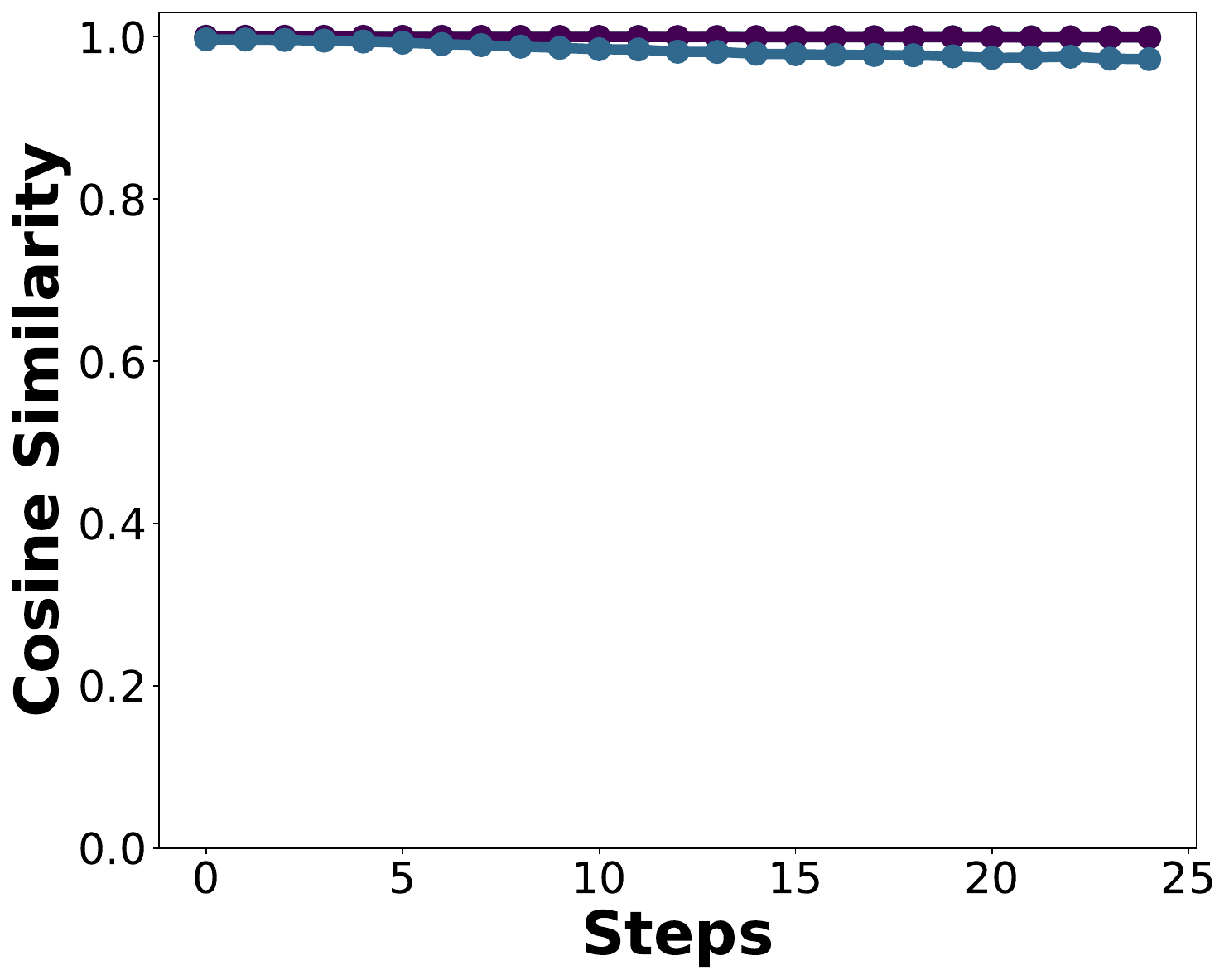} & 
\animage{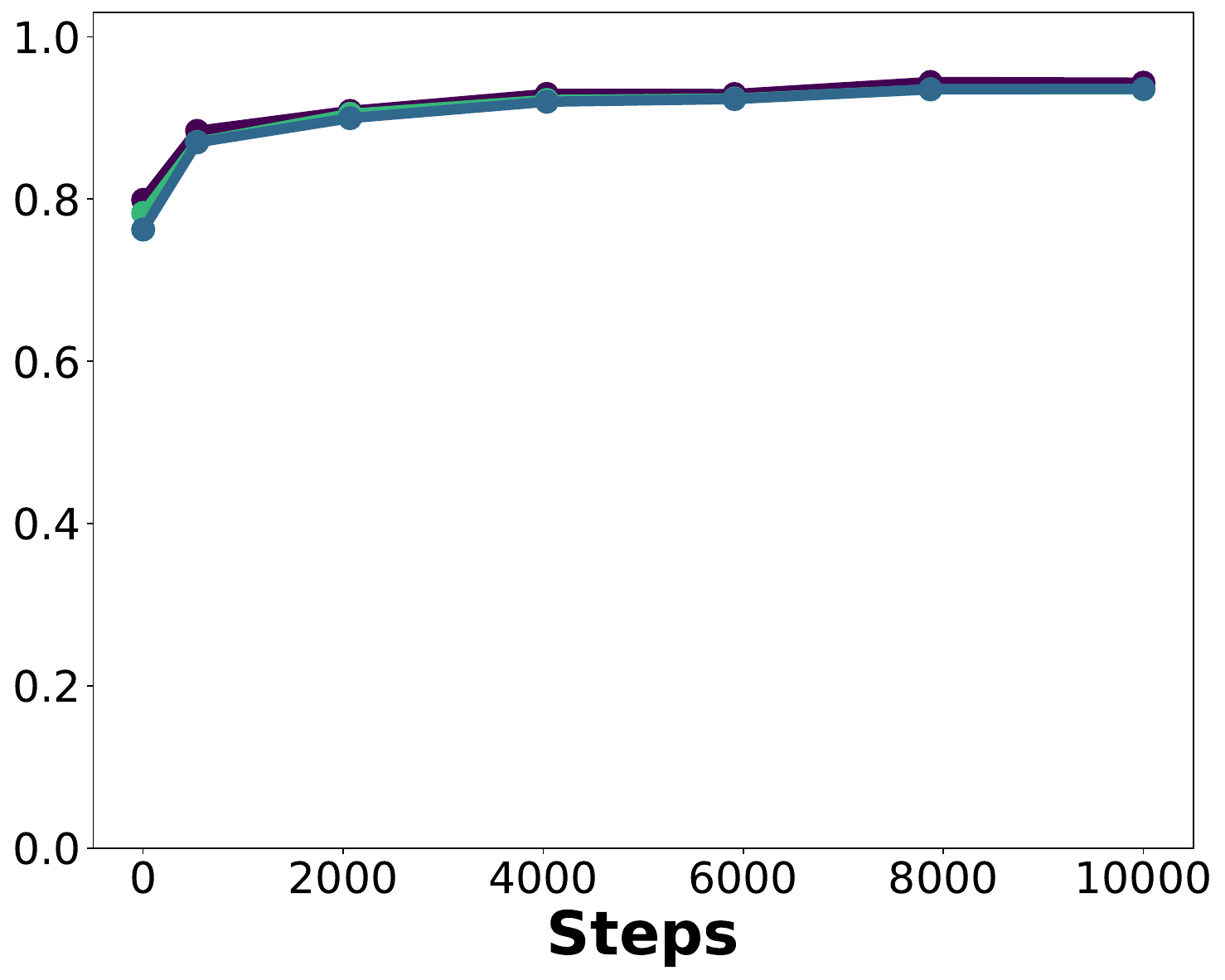} & 
\animage{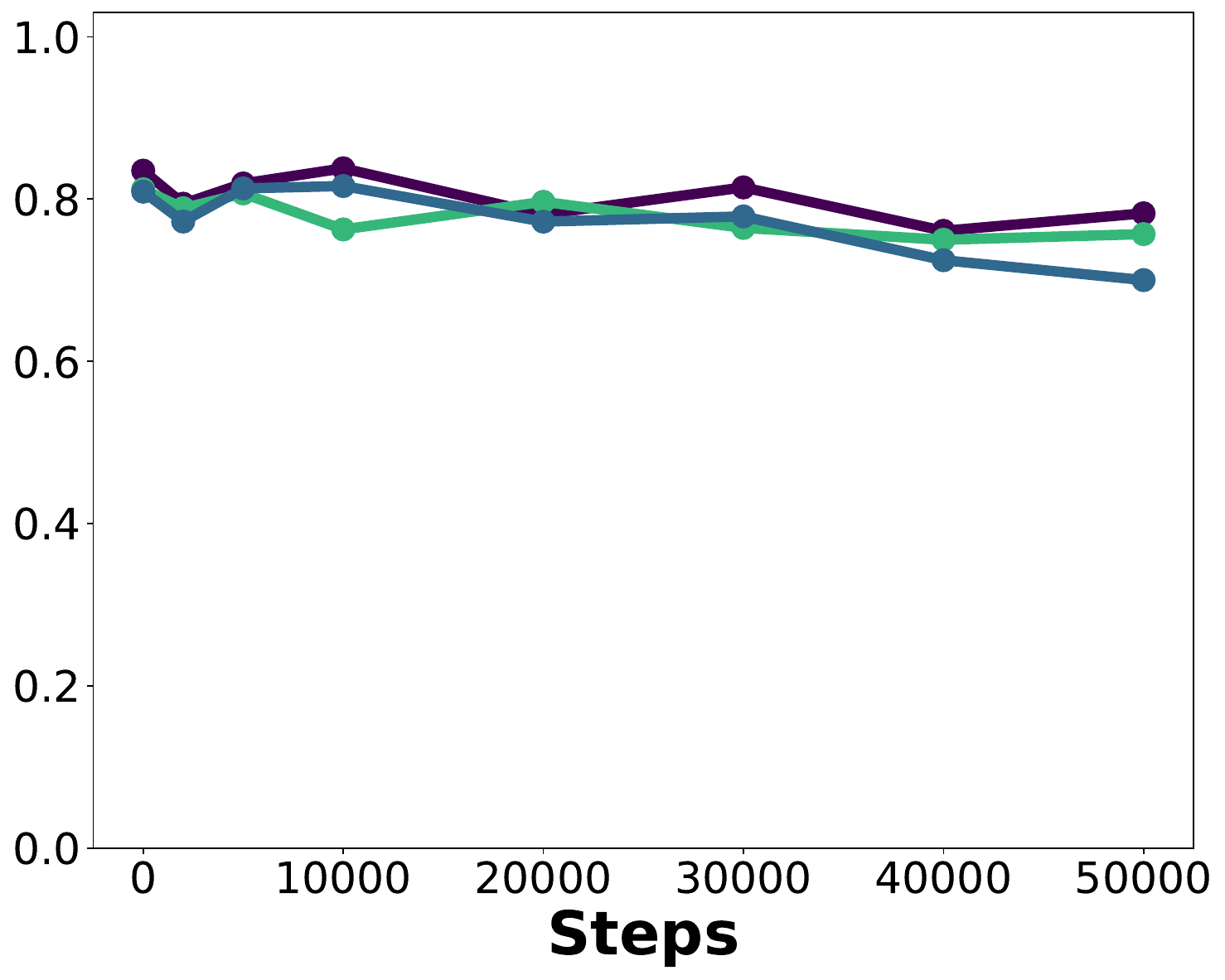} \\
\rotatebox[origin=c]{90}{\textnormal{\small {Batch Size 256}}} & 
\animage{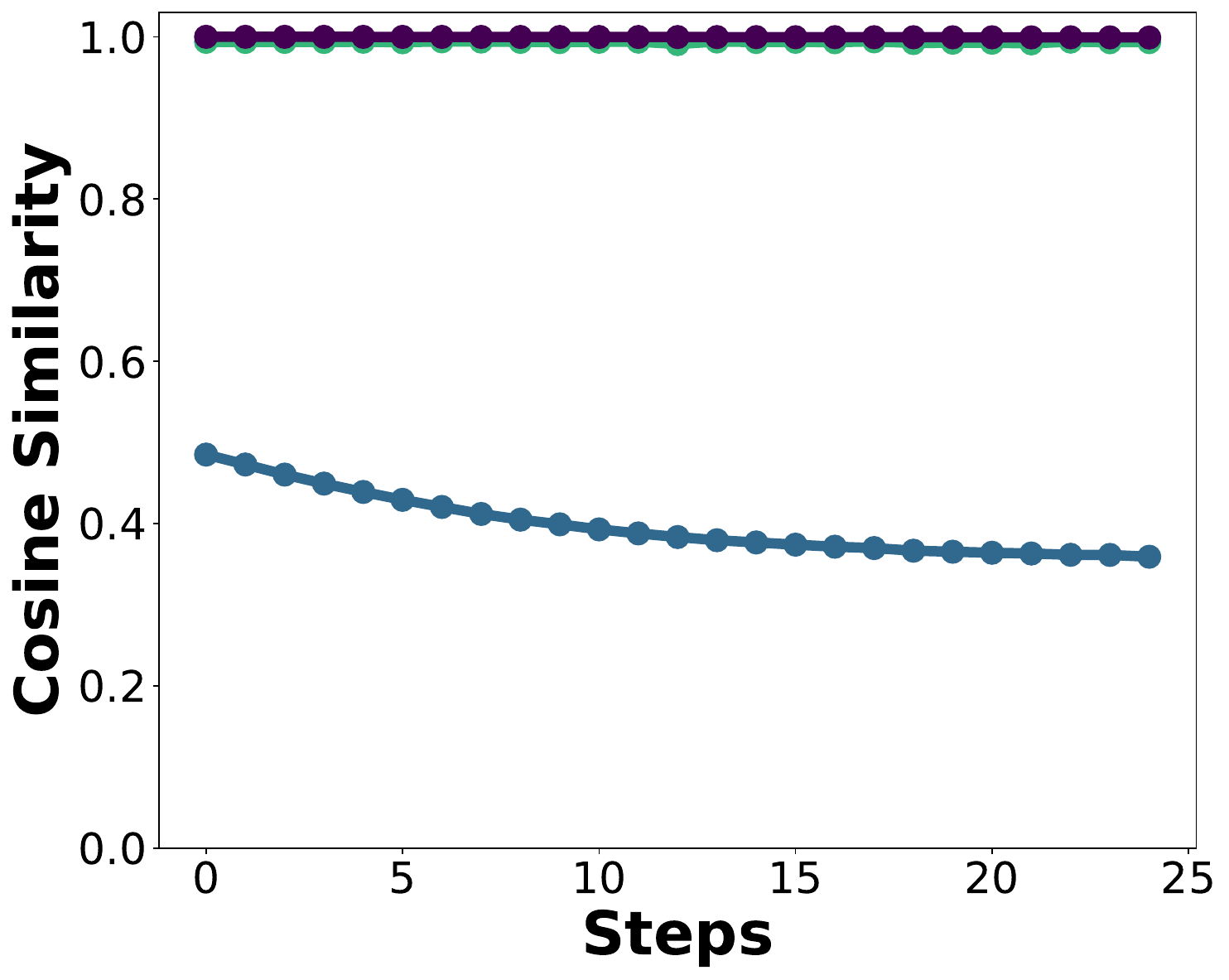} & 
\animage{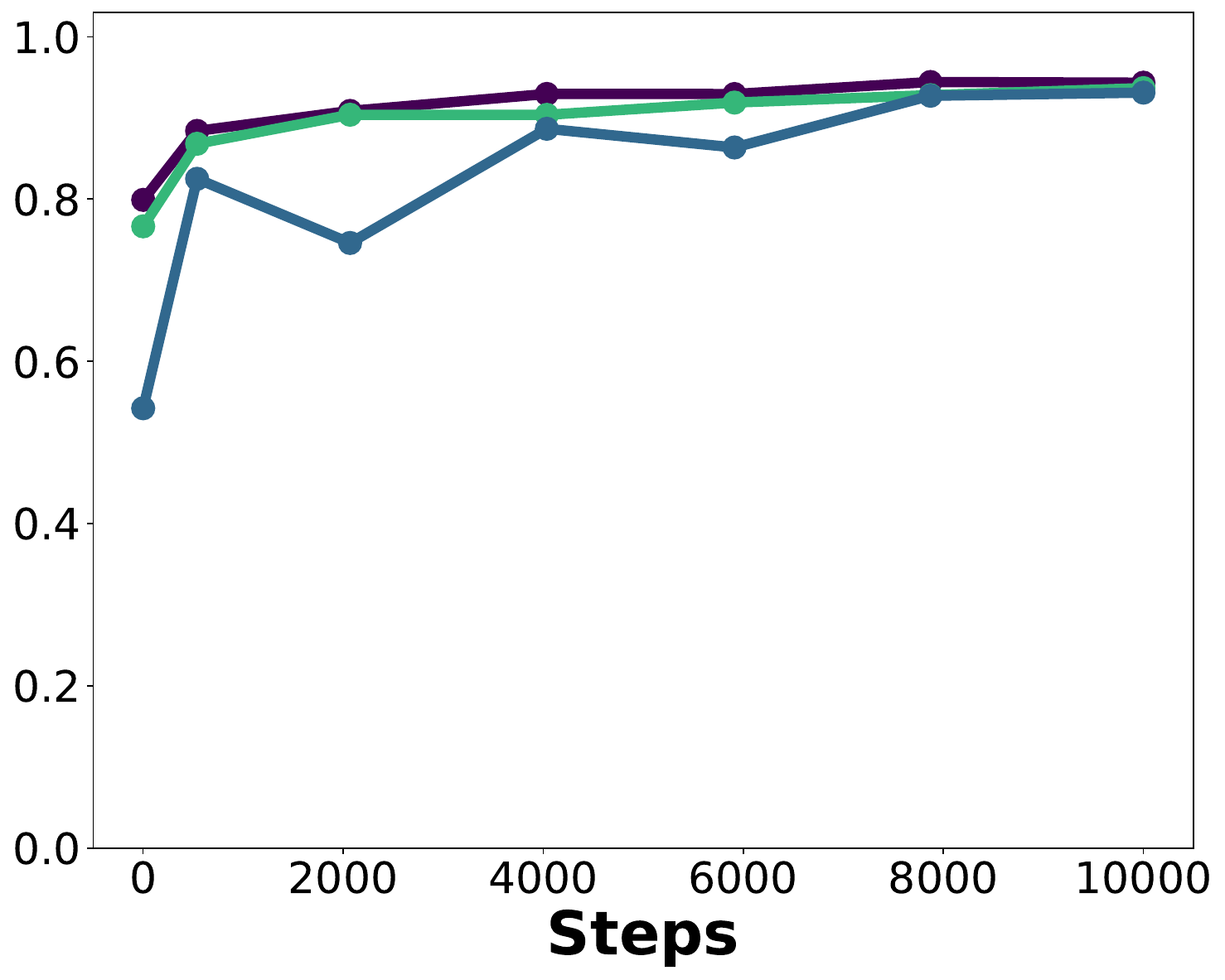} & 
\animage{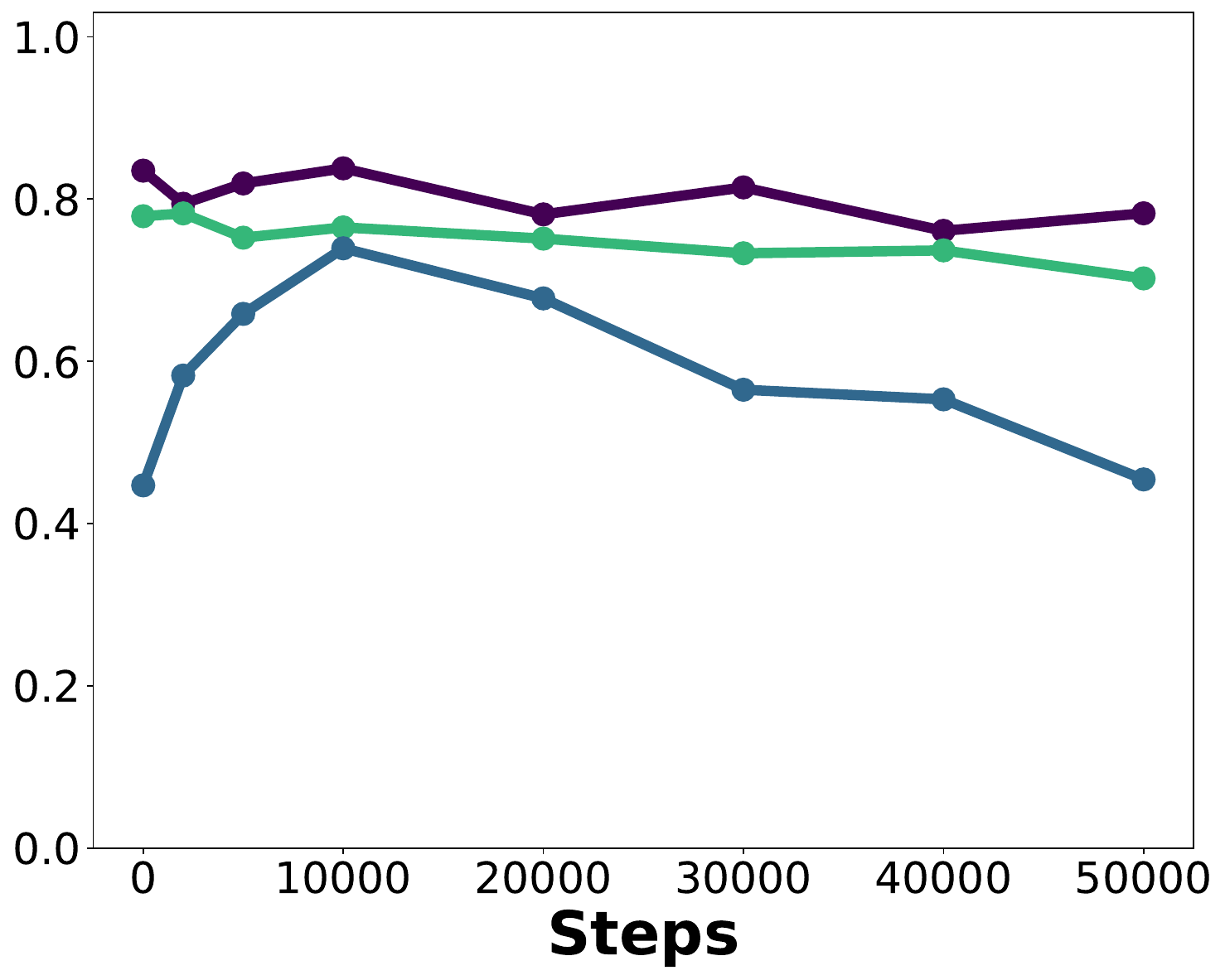} \\
\end{tabular}

\begin{minipage}{\textwidth}
    \centering
    \includegraphics[width=0.95\textwidth]{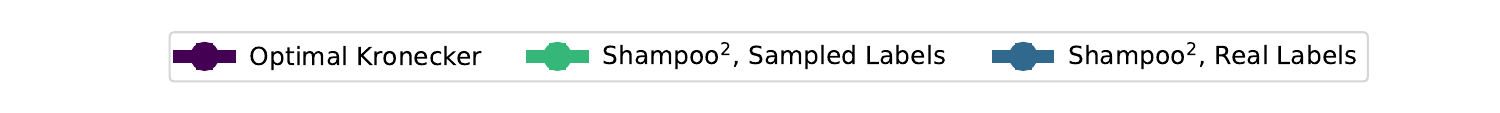}
\end{minipage}

    \caption{Cosine similarity between approximations of $H_{\text{GN}}$ and its true value. First row is for batch size 1 while the second row is for batch size 256. We observe deterioration in approximation quality at larger batch size. We note that the batch size does not refer to the batch size used in optimization, rather it refers to the batch size used for Hessian approximation.}
    \label{fig:figure4}

\end{figure}

As our final approximation we replace using sampled labels $s \sim f(x)$ to using real labels $y$. This approximation, denoted in the literature by empirical Fisher when batch size is 1, has been discussed at length by prior works~\citep{asdl,limitations_ef}. The main theoretical argument for why this approximation may work well is that, as we move towards optima, the two quantities converge in the presence of label noise \citep{rogergrnntd}.

In Figure~\ref{fig:figure4} (top), when evaluating $H_{\text{GN}}$ approximation with batch size $1$, we surprisingly find that the approximation quality is good throughout the training. However, unlike the case of sampled labels, the approximation starts to degrade at large batch sizes because the gradients with real labels are not mean 0. The lemma below \citep{rogergrnntd} shows how this estimator changes with batch size.

\begin{restatable}[\citet{rogergrnntd}]{lemma}{lemrealbatch}
    Let $B$ denote the batch and $G_{B} = \frac{1}{|B|} \sum_{(x,y) \in B} G_{x,y}$ denote the batch gradient. Then 
    $$\E_{B} [G_{B}  G_{B}^\top] = \frac{1}{|B|}\E_{x,y}[ G_{x, y} G_{x, y}^\top ] + \left(1 - \frac{1}{|B|}\right) \E_{x,y}[ G_{x, y}] \E_{x, y}[ G_{x, y}]^\top. $$
\end{restatable}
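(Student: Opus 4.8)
The plan is to reduce this to the classic second-moment identity for the mean of i.i.d.\ samples, generalized from scalars to the outer-product (matrix) setting. First I would make explicit the sampling assumption left implicit in the statement: the batch $B$ consists of $|B|$ pairs $(x_i, y_i)$ drawn i.i.d.\ from $\mathcal{D}$, so that the per-example gradients $G_i := G_{x_i, y_i}$ are i.i.d.\ random matrices, each distributed as $G_{x,y}$ with $(x,y)\sim\mathcal{D}$. Writing $G_B = \frac{1}{|B|}\sum_{i=1}^{|B|} G_i$, the quantity of interest becomes
\[
\E_B\!\left[G_B G_B^\top\right] = \frac{1}{|B|^2}\sum_{i=1}^{|B|}\sum_{j=1}^{|B|} \E\!\left[G_i G_j^\top\right],
\]
by linearity of expectation after expanding the product of the two sums.

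Next I would split the double sum into its diagonal ($i=j$) and off-diagonal ($i\neq j$) contributions. On the diagonal there are $|B|$ terms, each equal to $\E[G_{x,y} G_{x,y}^\top]$ since the $G_i$ are identically distributed. Off the diagonal there are $|B|(|B|-1)$ terms, and here independence of $G_i$ and $G_j$ for $i\neq j$ gives $\E[G_i G_j^\top] = \E[G_i]\,\E[G_j]^\top = \E[G_{x,y}]\,\E[G_{x,y}]^\top$. Substituting these counts yields
\[
\E_B\!\left[G_B G_B^\top\right] = \frac{|B|}{|B|^2}\,\E[G_{x,y} G_{x,y}^\top] + \frac{|B|(|B|-1)}{|B|^2}\,\E[G_{x,y}]\,\E[G_{x,y}]^\top,
\]
and simplifying the scalar coefficients to $\tfrac{1}{|B|}$ and $1-\tfrac{1}{|B|}$ gives exactly the claimed identity.

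There is no genuine obstacle here; the result is the matrix analogue of $\mathrm{Var}(\bar{X}) = \sigma^2/n$, and the only points requiring care are (i) stating the i.i.d.\ batch-sampling assumption precisely, since the decomposition of the off-diagonal terms relies entirely on independence, and (ii) correctly counting the $|B|$ diagonal versus $|B|(|B|-1)$ off-diagonal terms. As a sanity check one can verify the two limiting regimes: at $|B|=1$ the formula collapses to $\E[G_{x,y}G_{x,y}^\top]$, recovering the per-example Fisher, while as $|B|\to\infty$ it tends to $\E[G_{x,y}]\,\E[G_{x,y}]^\top$, the rank-one outer product of the full-batch gradient — consistent with the degradation of the approximation at large batch size noted in the surrounding text.
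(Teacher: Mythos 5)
Your proof is correct and follows essentially the same route as the paper's: expand $G_B G_B^\top$ into the double sum, split into the $|B|$ diagonal and $|B|(|B|-1)$ off-diagonal terms, and use the i.i.d.\ sampling of the batch to evaluate each group. Your version is slightly more careful in that it states the i.i.d.\ assumption explicitly, which the paper leaves implicit.
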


The above lemma shows that, depending on the batch size, the estimator interpolates between $\E_{x,y}[ G_{x, y} G_{x, y}^\top ]$ (Empirical Fisher) and $\E_{x,y}[ G_{x, y}] \E_{x, y}[ G_{x, y}]^\top$. As shown in Figure \ref{fig:figure4} (top), at batch size $1$, when $\E_{B} [G_{B}  G_{B}^\top ]$ is equal to $\E_{x,y}[ G_{x, y} G_{x, y}^\top ]$, it closely tracks the optimal Kronecker product approximation. In other words, approximating the empirical Fisher is nearly sufficient in our experiments to recover the optimal Kronecker product approximation to $H_{\text{GN}}$. However, with increasing batch size (Figure \ref{fig:figure4}, bottom row), the approximation quality degrades.

We note that this approximation has the computational benefit of not requiring another backpropagation with sampled labels; instead, these computations can be done alongside usual training.

\section{Related work} \label{sec:rel_work}
We discuss the related works in detail in Appendix~
\ref{app:rel_work}. Here, we discuss two closely related works: \citet{yi21} and \citet{koroko:hal-04266143}.

\citet{yi21} study the Hessian perspective of Shampoo and show that, under the assumption that sampled gradients follow a \textit{tensor-normal} distribution, the square of the Hessian estimate of Shampoo is perfectly correlated with $H_{\text{GN}}$. We also show the same result under much weaker conditions in Corollary \ref{corr:rank1}. Moreover, in Proposition~\ref{prop:main} we show that, in general, the square of the Hessian estimate of Shampoo is closely related to the optimal Kronecker product approximation of $H_{\text{GN}}$. We additionally also study the approximations used by Shampoo to make it computationally efficient (Section~\ref{sec:hess_shampoo}) and the Adagrad perspective of Shampoo's preconditioner.

\citet{van1993approximation} develop the theory of optimal Kronecker product approximation of a matrix (in Frobenius norm). \citet{koroko:hal-04266143} use it for finding layer-wise optimal Kronecker product approximation of $H_{\text{GN}}$ for a network without weight sharing. We extend their technique to networks with weight-sharing, and show that the square of the Hessian estimate of Shampoo is nearly equivalent to the optimal Kronecker product approximation of $H_{\text{GN}}$.

\section{Limitations} \label{sec:limitations}
The main contribution of our work is to show that the square of the Shampoo's approximation of $H$ (where $H$ refers to either $H_{\text{Ada}}$ or $H_{\text{GN}}$) is nearly equivalent to the optimal Kronecker approximation of $H$. Although we verify this empirically on various datasets and provide theoretical arguments, the gap between them depends on the problem structure. In some of our experiments with ViT architecture (Appendix~\ref{app:vit}), we find that the gap is relatively larger compared to other architectures. Moreover, it remains an open question to understand the conditions (beyond those described in K-FAC~\citet{martens2015optimizing}) under which $H$ is expected to be close to a Kronecker product. Again, in some of the experiments with ViTs (Appendix~\ref{app:vit}), we find that the optimal Kronecker product approximation to $H$ is much worse as compared to other architectures.

\section*{Acknowledgements}

NV and DM are supported by a Simons Investigator Fellowship, NSF grant DMS-2134157, DARPA grant W911NF2010021, and DOE grant DE-SC0022199. This work has been made possible in part by a gift from the Chan Zuckerberg Initiative Foundation to establish the Kempner Institute for the Study of Natural and Artificial Intelligence.
SK and DM acknowledge funding from the Office of Naval Research under award N00014-22-1-2377 and the National Science Foundation Grant under award \#IIS 2229881. LJ acknowledges funding from the National Science Foundation DMS-2134157.
\newpage
\bibliography{refs}
\bibliographystyle{iclr2024_conference}

\newpage
\appendix
\section{Additional experimental results} \label{app:vit}

\begin{figure}[htbp]
\centering

\begin{tabular}{cccc}
       &     \textnormal{\small {{MNIST-2}}} & \textnormal{\small {{CIFAR-5M (ResNet)}}} & \textnormal{\small {{ImageNet}}} \\
\rotatebox[origin=c]{90}{\textnormal{\small {Gauss--Newton}}} & 
\animage{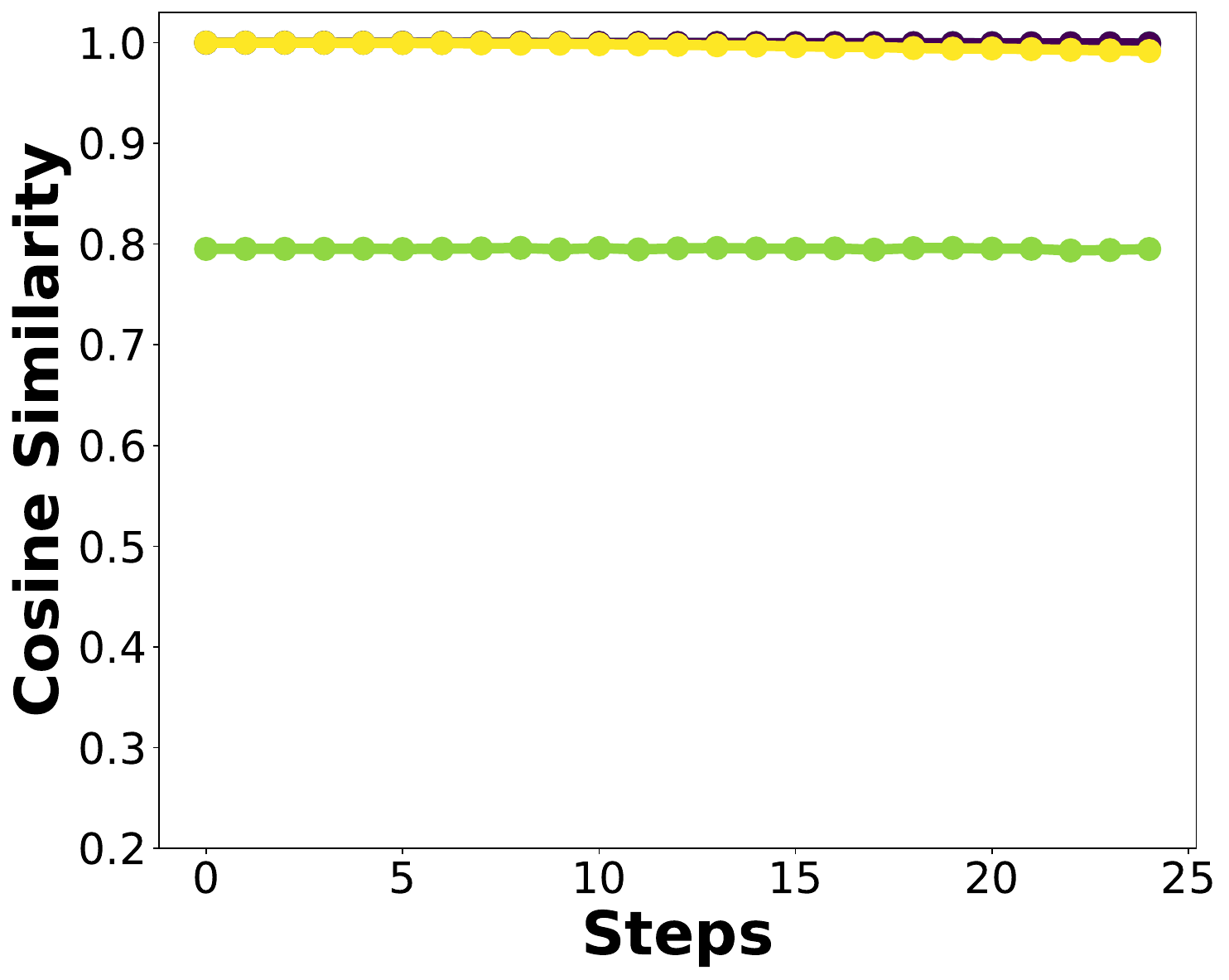} & 
\animage{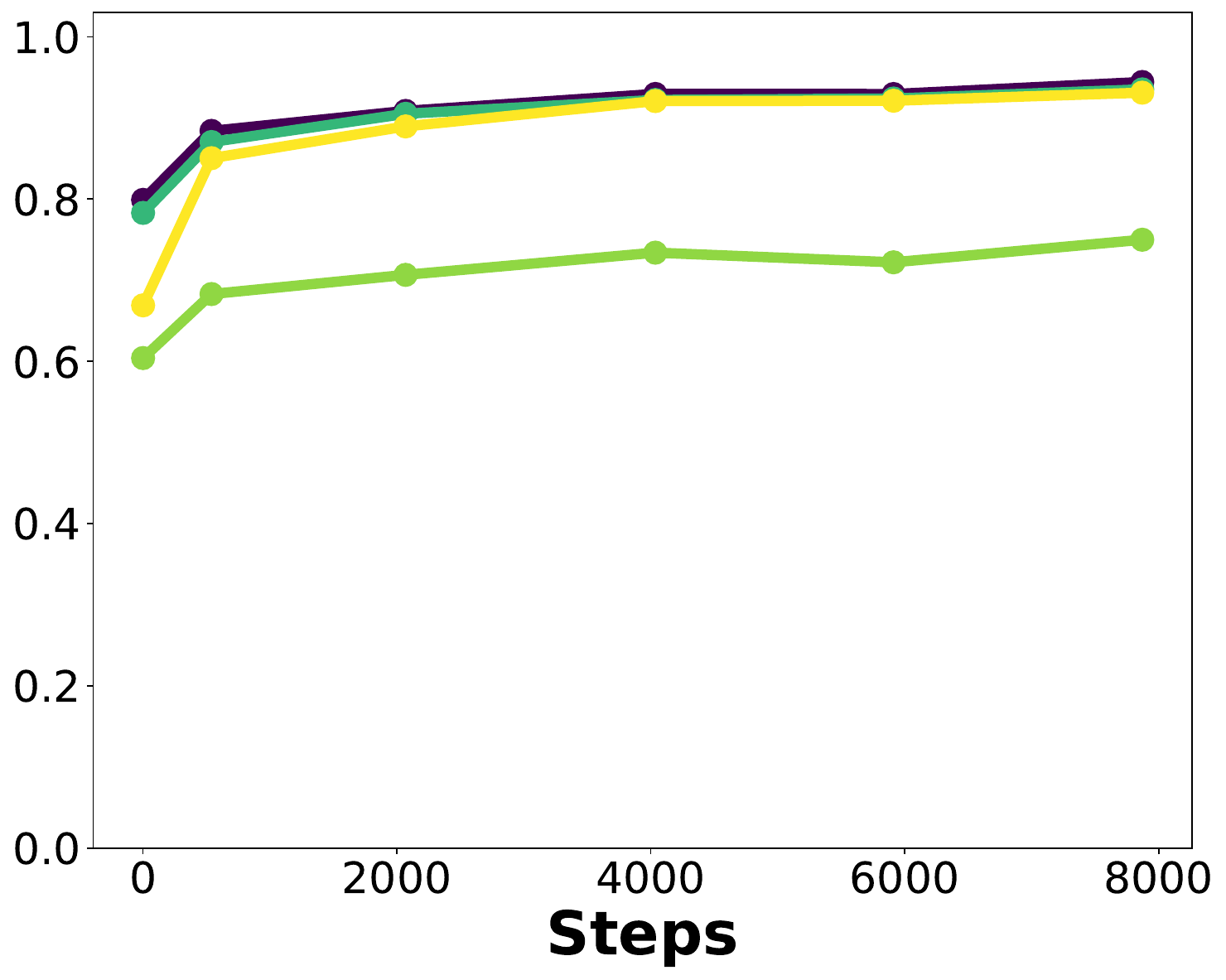} & 
\animage{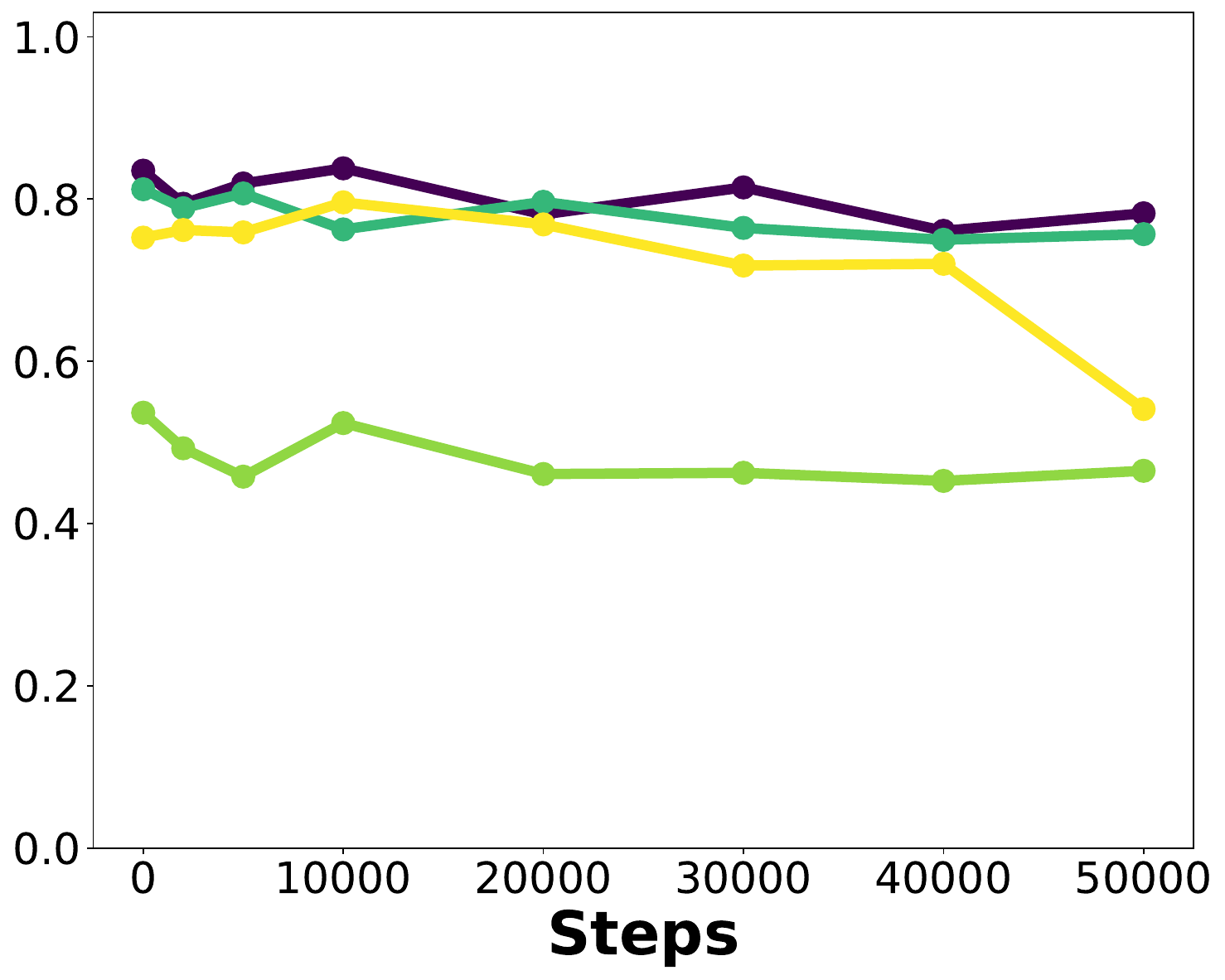} \\
\end{tabular}

\begin{subfigure}[b]{\textwidth}
    \includegraphics[width=0.95\textwidth]{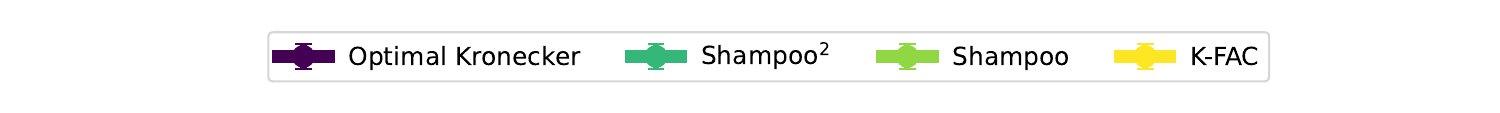}
\end{subfigure}
\vspace{-2em}
\caption{Cosine similarity between different approximations of the Gauss--Newton (GN) component of the Hessian and its true value for different datasets and architectures. As can be seen, $\text{Shampoo}^2$ tracks the optimal Kronecker approximation much more closely than Shampoo. These plots also include the K-FAC approximation, and we note that $\text{Shampoo}^2$ always outperforms K-FAC, though they are close in some settings.}
\label{fig:main_full_appendix}
\end{figure}

\subsection{ViT architecture}
\begin{figure}[htbp]
\centering
\begin{tabular}{cccc}
       &     \textnormal{\small {{FFN Linear Layer 1}}} & \textnormal{\small {{FFN Linear Layer 2}}} & \textnormal{\small {{Q-K Projection Layer}}} \\
\rotatebox[origin=c]{90}{\textnormal{\small {Gauss--Newton}}} & 
\animage{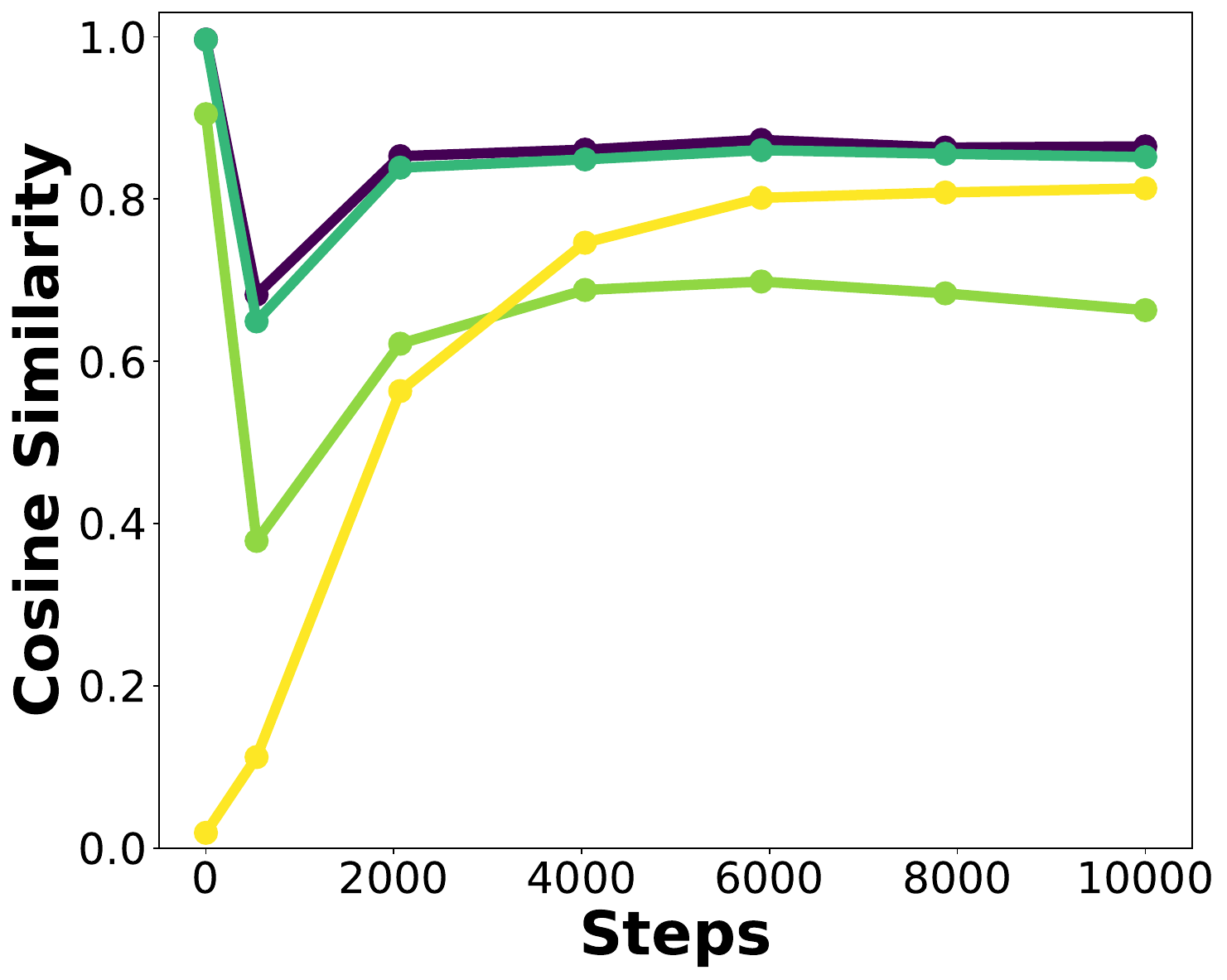} & 
\animage{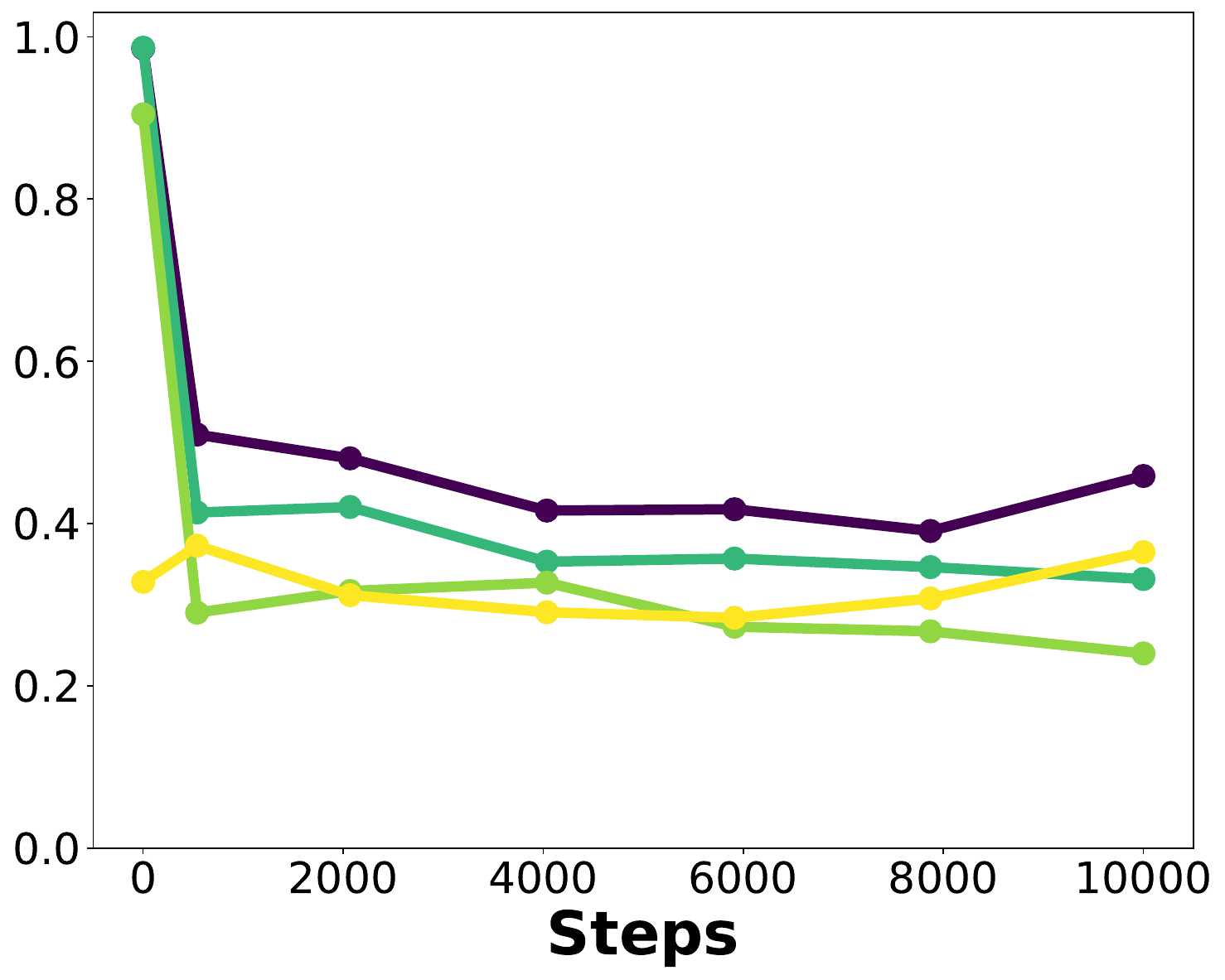} & 
\animage{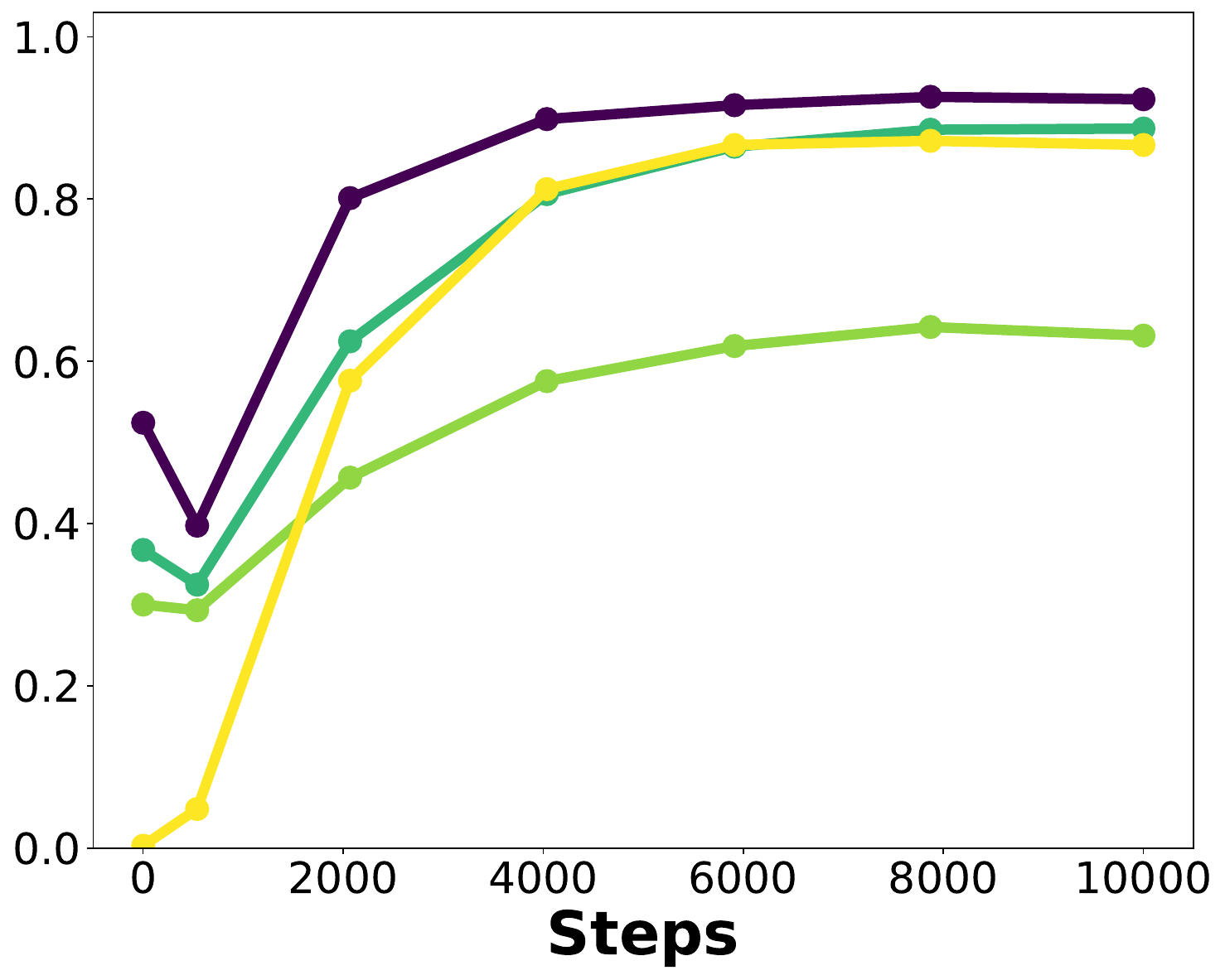} \\
\rotatebox[origin=c]{90}{\textnormal{\small {Adagrad}}} & 
\animage{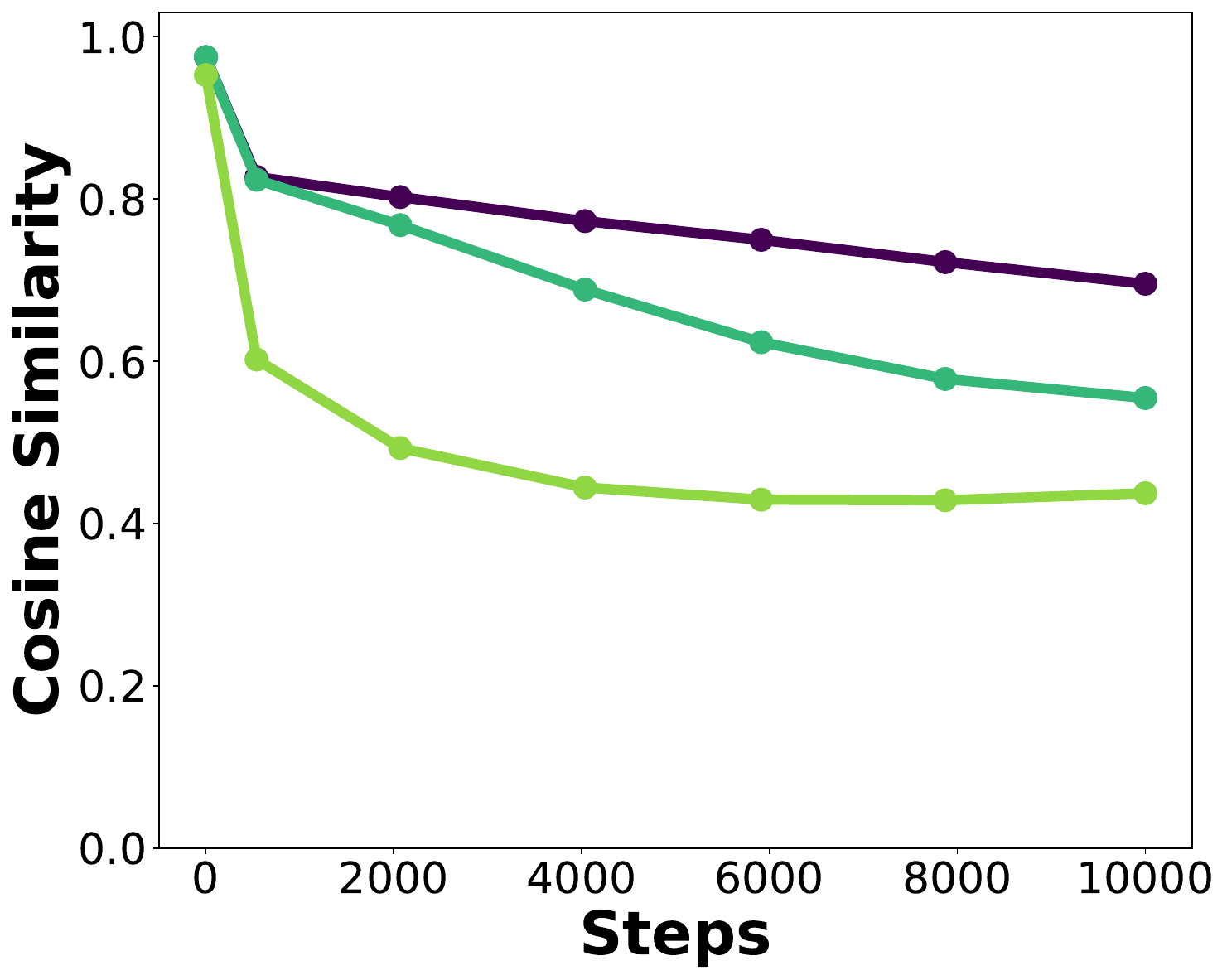} & 
\animage{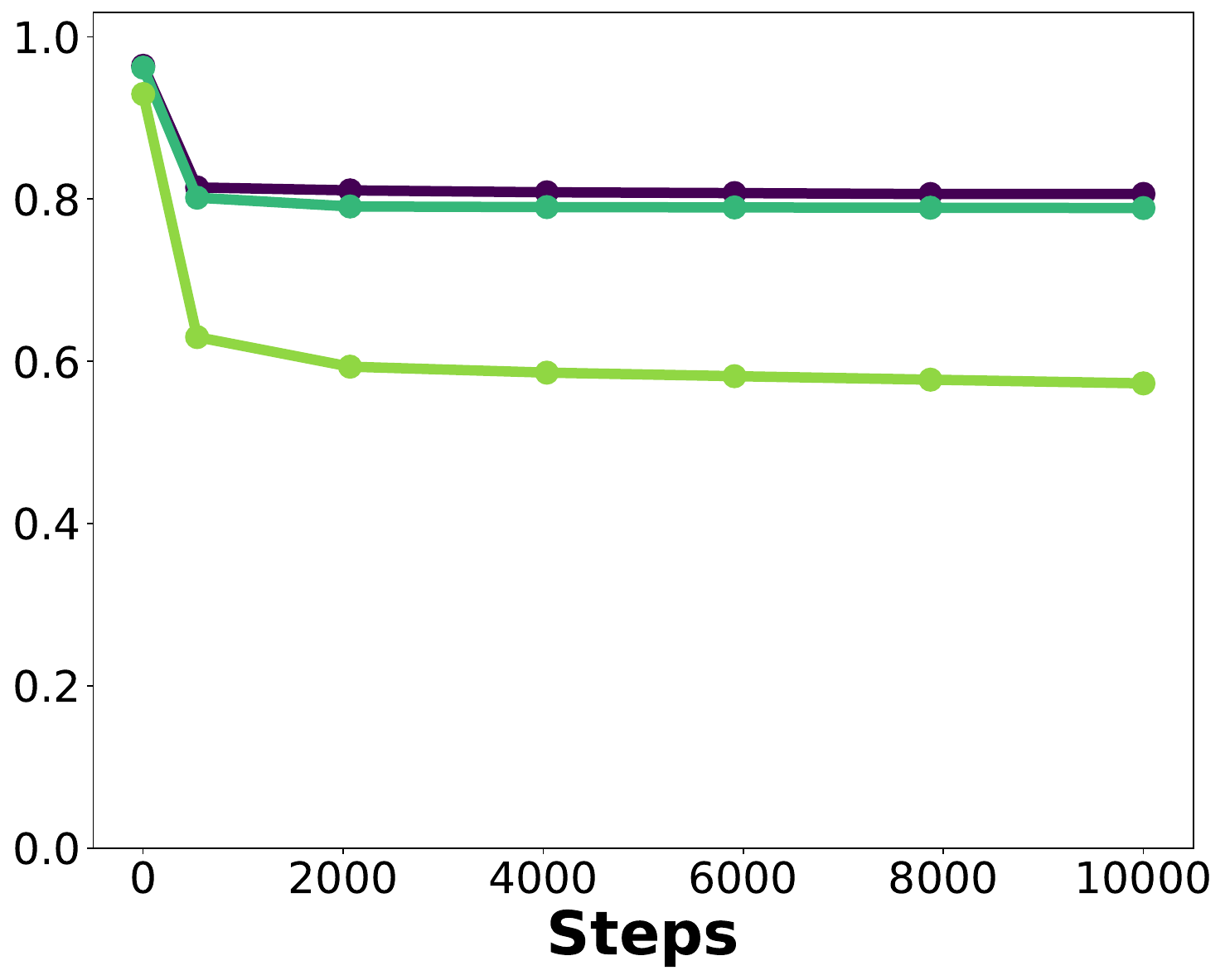} & 
\animage{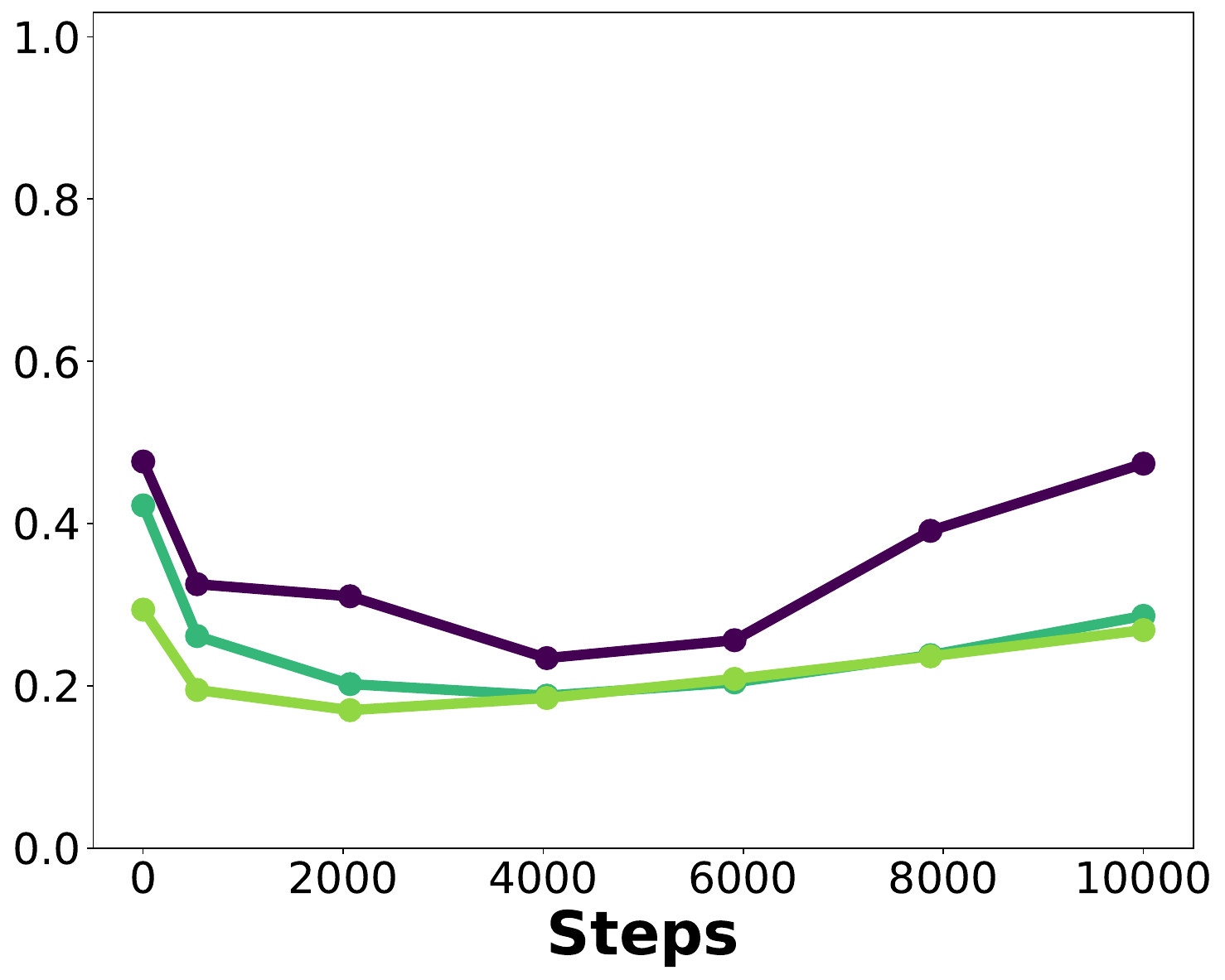} \\
\end{tabular}
\begin{minipage}{\textwidth}
    \centering
    \includegraphics[width=0.95\textwidth]{plots/figure1/legend.pdf}
\end{minipage}

\caption{Analogue of Figure~\ref{fig:main} for ViT architecture and the CIFAR-5m dataset for 3 layers of the network. For some of the figures we observe relatively larger gaps between $\text{Shampoo}^2$ and optimal Kronecker approximation.}
\label{fig:vit}
\end{figure}

In this subsection, we present the results for a Vision Transformer (ViT) architecture trained on the CIFAR-5m dataset. This architecture features a patch size of 4, a hidden dimension of 512, an MLP dimension of 512, 6 layers, and 8 attention heads.

For these experiments, we utilize three layers from the fourth transformer block: two layers from the MLP (referred to as 'FFN Linear Layer 1' and 'FFN Linear Layer 2') and the QK layer\footnote{The QK layer is separated from the V part of the layer, following similar decomposition method described by \citet{duvvuri2024combining}} (referred to as 'Q-K Projection Layer').

\section{Experiments} \label{app:exp}
\textbf{Datasets and Architectures.} We conducted experiments on three datasets: MNIST \citep{mnist}, CIFAR-5M \citep{cifar5m}, and ImageNet \citep{imagenet}, using logistic regression, ResNet18 \citep{resnet}, and ConvNeXt-T \citep{Liu_2022_CVPR} architectures, respectively. For MNIST, we subsampled two digits ($\{ 0, 1\}$) and trained a binary classifier.

\begin{table}[h]
  \caption{Summary of Experimental Configurations. $\lambda$ denotes weight decay and $\beta_1$ indicates momentum.}
  \label{experiment-table}
  \centering
    \small

\begin{tabular}{lllllllll}
    \toprule
    Dataset     & Architecture & Optimizer & Batch Size &  Steps & lr & $\lambda$ & $\beta_1$ \\
    \midrule
    MNIST       & Linear Classifier     & GD        & Full Batch & 25          & 0.01           & None           & 0       \\
    CIFAR-5M & ResNet18        & SGD        & 128         & 10000        & .02            & None             & .9      \\
    ImageNet    & ConvNeXt-T                  & AdamW        & 2048         & 50000       & 3e-3            & 5e-3             & 0.9      \\
    \bottomrule
\end{tabular}

\end{table}

For MNIST, we used the only layer, i.e, the first layer of the linear classifier for computing the cosine similarities. For Resnet18 and Imagenet, we picked arbitrary layers. In particular, for Resnet 18, we used one of the convolution layers within the first block ('layer1.1.conv1' in \url{https://pytorch.org/vision/master/_modules/torchvision/models/resnet.html#resnet18}). For Imagenet, we used the 1x1 convolutional layer within the 2nd block of convnext-T ('stages.2.1.pwconv1' in \url{https://pytorch.org/vision/main/models/generated/torchvision.models.convnext_tiny.html#torchvision.models.convnext_tiny}). 

\textbf{Cosine similarity estimation for $H_{\text{GN}}$.} For estimating the Frobenius norm of $H_{\text{GN}}$, we used the identity: 

\[ \E_{v \sim \mathcal{N}(0,I_d)} [v^\top H_{\text{GN}}^2 v] = \E_{v \sim \mathcal{N}(0,I_d)} [\| H_{\text{GN}} v \|_2^2] = \|H_{\text{GN}}\|_F^2 \]

Hessian-vector products with the Gauss--Newton component were performed using the DeepNetHessian library provided by \citet{papyan19}. 

For estimating the cosine similarity between $H_{\text{GN}}$ and its estimator $\widetilde{H}_{\text{GN}}$, we used the following procedure:
\begin{enumerate}
    \item Estimate $\| H_{\text{GN}} \|_F$, and calculate $\| \widetilde{H}_{\text{GN}} \|_F$.
    \item Define scaled $\widetilde{H}_{\text{GN}}$ as  $\widetilde{S}_{\text{GN}} = \frac{\| H_{\text{GN}} \|_F}{\| \widetilde{H}_{\text{GN}} \|_F} \widetilde{H}_{\text{GN}}$.
    \item $\text{Cos-sim}(H_{\text{GN}}, \widetilde{H}_{\text{GN}}) = 1 - \frac{\|H_{\text{GN}} - \widetilde{S}_{\text{GN}}\|_F^2}{2 \|H_{\text{GN}}\|_F^2 }$, where the numerator is again estimated via Hessian-vector products.
\end{enumerate}

Note that in the above procedure, we can exactly calculate $\| \widetilde{H}_{\text{GN}} \|_F$ as it is generally of a Kronecker product form with both terms of size $m \times m$ or $n \times n$, where $m \times n$ is the size of a weight matrix. 

\textbf{Cosine similarity estimation for $H_{\text{Ada}}$.} We follow a similar recipe as before, but using a difference method for computing the product $H_{\text{Ada}}v$. For a given time $T$, $H_{\text{Ada}} = \sum_{t=1}^T g_t g_t^\top$. Thus, $H_{\text{Ada}} v = \sum_{t=1}^T (g_t^\top v) g_t$. We maintain this by keeping a running estimate of the quantity for multiple random vectors $v$ during a training run, and use it for estimating the product $H_{\text{Ada}} v$.

\subsection{Figure details}
\label{app:fig_details}

\textit{Optimal Kronecker} method, wherever used was computed with five rounds of power iteration, starting from the identity. For $H = H_{\text{GN}}$, the Hessian approximations $\textit{Shampoo}^2$, \textit{Shampoo}, and \textit{K-FAC} were done using sampled labels and a batch size of $1$. For $H = H_{\text{Ada}}$ and step $t$, we used gradient enocoutered during the training run in steps $\leq t$.

\textit{K-FAC} was computed with the ``reduce'' variant from ~\cite{eschenhagen2023kroneckerfactored}.

In Figure~\ref{fig:whyI}, the \textit{Optimal Kronecker} legend represents the cosine similarity between the optimal Kronecker approximation of $H_{\text{GN}}$ and $H_{\text{GN}}$. This is precisely equal to $\frac{\sigma_1}{\sqrt{\sum_i \sigma_i^2}}$. Similarly, the label \textit{L} (resp. \textit{R}) represents the cosine similarity between the top left (resp. right) singular vector of $\hat{H}_{\text{GN}}$ and the estimate obtained after one round of power iteration starting from $I_n$ (resp. $I_m$). This is precisely equal to $\frac{\alpha_1 \sigma_1}{\sqrt{\sum_i \alpha_i^2 \sigma_i^2}}$.

In Figure~\ref{fig:figure4} (top), the Hessian approximation is calculated with batch size $1$, i.e, $|B|=1$ in Section~\ref{sec:labels}.
Similarly, in Figure~\ref{fig:figure4} (bottom), $|B| = 256$.

\section{Deferred proofs} \label{app:proofs}
\lemnopsd*
\begin{proof}
    Consider two PSD matrices $M_1$ and $M_2$ having the eigenvalue decomposition $M_1 = \sum \lambda_{1i} q_{1i}q_{1i}^\top $ and $M_2 = \sum \lambda_{2i} q_{2i}q_{2i}^\top$. Then 

    \[ \text{Tr}(M_1M_2) = \sum_{i,j} \lambda_{1i} \lambda_{2j} \left(q_{1i}^\top q_{2j}\right)^2 \]

    Thus, if $M_1$ and $M_2$ have unit frobenius norm and $M_1$ is positive definite, then $\text{Tr}(M_1M_2) > 0$. 

    Thus, if $V_1$ is positive definite, then by orthogonality of successive singular vectors, $V_i$ for $i \geq 2$ cannot be positive semi-definite.
\end{proof}

\lemiden*  

\begin{proof}
    Consider the eigendecomposition of any $M \in S_q$ given by $\sum_{i=1}^q \lambda_i v_iv_i^\top$. Denote $L  = \{i: \lambda_i \leq \frac{1}{\sqrt{q}} \}$. As $\sum \lambda_i^2 = 1$, therefore, $|A| \geq 1$. Consider any $j \in A$. Then

    \[ \langle Vec(M), Vec(v_j v_j^\top) \rangle \leq \frac{1}{\sqrt{q}} \]

    As $v_j$ is orthogonal to the other eigenvectors. Thus, we can see

    \[ \max_{M \in S_q} \min_{M' \in S_q} \langle \vect{M}, \vect{M'} \rangle \leq \frac{1}{\sqrt{q}} \]

    Moreover, for the matrix $\frac{1}{\sqrt{q}} I_q$, for any matrix $M'$,

    \[ \frac{1}{\sqrt{q}} \langle I_q, M' \rangle = \frac{\text{tr}(M')}{\sqrt{q}} \]

    where $\text{tr}(M')$ denotes the trace of the matrix $M'$. However, we know $\text{tr}(M') = \sum \lambda_i \geq 1$ as $\sum \lambda_i^2 = 1$. Thus

    \[ \frac{1}{\sqrt{q}} \langle I_q, M' \rangle = \frac{\text{tr}(M')}{\sqrt{q}} \geq \frac{1}{\sqrt{q}} \]

    Note that this is the only matrix with this property as any other matrix will at least have one eigenvalue less than $\frac{1}{\sqrt{q}}$. Thus

    \[ \frac{1}{\sqrt{q}} I_q = \argmax_{M \in S_q} \min_{M' \in S_q} \langle \vect{M}, \vect{M'} \rangle \]
\end{proof}

\lemavggrad*
\begin{proof}
    Evaluating $G_{B,\bf{s}}  G_{B,\bf{s}}^T$, we get
    \[ G_{B,\bf{s}}  G_{B,\bf{s}}^T = \frac{1}{|B|^2} \sum_{\substack{x,x' \in B, \\ s=\textbf{s}[x], s'=\textbf{s}[x']}} G_{x,s} G_{x',s'}^\top \]
    Taking the expectation over $\textbf{s}$ for a given $B$, and by using $\E_s[G_{x, s}] = 0$ we get
    \[ \E_{\textbf{s}} [G_{B,\bf{s}}  G_{B,\bf{s}}^T] = \frac{1}{|B|^2} \sum_x \E_{s \sim f(x)} [G_{x,s} G_{x,s}^\top] = \frac{1}{|B|} \E_{x \sim B, s \sim f(x)}[G_{x,s}G_{x,s}^\top] \]
    Now taking an expectation over batches, we get
    \[|B| \E_{B,\bf{s}} [G_{B,\bf{s}}  G_{B,\bf{s}}^T] = \E_{x, s \sim f(x)}[ G_{x, s} G_{x, s}^T ]\]
\end{proof}

\lemrealbatch*
\begin{proof}
    Evaluating $G_{B}  G_{B}^T$, we get
    \[ G_{B}  G_{B}^T = \frac{1}{|B|^2} \sum_{(x,y), (x',y') \in B} G_{x,y} G_{x',y'}^\top \]
    Taking the expectation over $B$ on both the sides, we get
    \begin{align*}
    &\E_B \left[G_{B}  G_{B}^T\right] = \frac{1}{|B|^2} \left[ |B| \E_{x,y}[G_{x,y}G_{x,y}^\top] + (|B|^2-|B|) \E_{x,y}[G_{x,y}] \E_{x,y}[G_{x,y}]^\top \right] \\
    \implies &\E_B \left[G_{B}  G_{B}^T\right] = \frac{1}{|B|}\E_{x,y}[G_{x,y}G_{x,y}^\top] + \left(1 - \frac{1}{|B|}\right) \E_{x,y}[G_{x,y}] \E_{x,y}[G_{x,y}]^\top \\
    \end{align*}
\end{proof}

\section{Technical Background on Hessian}
\label{app:tech_hess}

\textbf{Gauss--Newton (GN) component of the Hessian.}
For a datapoint $(x,y)$, let $f(x)$ denote the output of a neural network and $\mathcal{L}(f(x),y)$ represent the training loss. Let $W \in \mathbb{R}^{m \times n}$ represent a weight matrix in the neural network and $\mathcal{D}$ denote the training distribution. Then, the Hessian of the loss with respect to $W$ is given by

\[ \E_{(x,y) \sim \mathcal{D}} \left[ \frac{\partial^2 \mathcal{L}}{\partial W^2} \right] = \E_{(x,y) \sim \mathcal{D}} \left[\frac{\partial f}{\partial W} \frac{\partial^2 \mathcal{L}}{\partial f^2} \frac{\partial f}{\partial W}^\top\right] + \E_{(x,y) \sim \mathcal{D}}\left[\frac{\partial \mathcal{L}}{\partial f} \frac{\partial^2 f}{\partial W^2}\right]. \]

The first component, for standard losses like cross-entropy (CE) and mean squared error (MSE), is positive semi-definite and is generally known as the Gauss--Newton (GN) component ($H_{\text{GN}}$). Previous works have shown that this part closely tracks the overall Hessian during neural network training \citep{Sankar_Khasbage_Vigneswaran_Balasubramanian_2021}, and thus most second-order methods approximate the GN component. Denoting $\frac{\partial \mathcal{L}(f(x),y)}{\partial W}$ by $G_{x,y} \in \mathbb{R}^{m \times n}$ and $g_{x,y} = \vect{G_{x,y}}$, for CE loss, it can also be shown that

\[ H_{\text{GN}} = \E_{(x,y) \sim \mathcal{D}}\left[\frac{\partial f}{\partial W} \frac{\partial^2 \mathcal{L}}{\partial f^2} \frac{\partial f}{\partial W}^\top\right] = \E_{\substack{x \sim \mathcal{D}_x \\ s \sim f(x)}} \left[g_{x,s} g_{x,s}^\top \right] , \]

\section{Related work} \label{app:rel_work}
The literature related to second order optimization within deep learning is very rich, with methods that can be broadly classified as Hessian-free and methods based on estimating the preconditioner $H$ (which could refer to either $H_{\text{Ada}}$ or $H_{\text{GN}}$). Hessian-free methods \citep{martens10} generally tend to approximate the preconditioned step (for Newton's method) using Hessian vector products, but do not maintain an explicit form of the Hessian. Estimating $H$ \citep{martens15, gupta18} methods maintain an explicit form of the preconditioner that could be efficiently stored as well as estimated. 

\subsection{Hessian-free} 
One of the seminal works related to second order optimization within deep learning was the introduction of Hessian-free optimization \citep{martens10}. The work demonstrated the effectiveness of using conjugate gradient (CG) for approximately solving the Newton step on multiple auto-encoder and classifications tasks. Multiple works \citep{martens11, Minhyung15} have extended this algorithm to other architectures such as recurrent networks and multidimensional neural nets. One of the recent works \citep{garcia2023fisherlegendre} also takes motivation from this line of work, by approximately using single step CG for every update, along with maintaining a closed form for the inverse of the Hessian, for the single step to be effective.

\subsection{Estimating Preconditioner}
Given that it is costly to store the entire matrix $H$, various works have tried to estimate layer-wise $H$. KFAC \citep{martens15} was one of the first work, that went beyond diagonal approximation and made a Kronecker product approximation  to layer-wise $H_{\text{GN}}$. It showed that this structure approximately captures the per layer Hessian for MLPs. This approximation was extended to convolutional \citep{kazuki19} and recurrent \citep{martens2018kroneckerrecurrent} architectures. Subsequent works also improved the Hessian approximation, by further fixing the trace \citep{Gao_Liu_Huang_Wang_Wang_Xu_Yu_2021} as well as the diagonal estimates \citep{Thomas18, gao2020eigenvaluecorrected} of the approximation. A recent work \citep{eschenhagen2023kroneckerfactored} also demonstrated that K-FAC can be extended to large-scale training. 

From the viewpoint of approximating Adagrad \citep{duchi11}, \citet{gupta18} introduced Shampoo, that also makes a Kronecker product approximation to $H_{\text{Ada}}$. One of the subsequent work \citep{yi21} introduced a modification of Shampoo, that was precisely estimating the layer-wise $H_{\text{GN}}$ under certain distributional assumptions. Other works \citep{anil2021scalable} introduced a distributed implementation of Shampoo, that has recently shown impressive performance for training large scale networks \citep{shi2023distributed}. Recently, another paper \citep{duvvuri2024combining} proposed a modification of Shampoo, empirically and theoretically demonstrating that the new estimator approximates $H_{\text{Ada}}$ better than Shampoo's approximation. Our work shows that the square of Shampoo's approximation of $H_{\text{Ada}}$ is nearly equivalent to the optimal Kronecker approximation.

\section{Comparison with extra square root in Adagrad based approaches} \label{app:sq_root}

Multiple previous works \citep{balles2020geometry, lin2024remove} have tried to address the question of why Adagrad-based approaches like Adam and Shampoo, have an extra square root in their update compared to Hessian inverse in their updates. This question is primarily concerned with the final update to the weights being used in the optimization procedure, once we have approximated the Hessian.

The primary contribution of this work is completely orthogonal to this question. We are addressing the question of optimal Kronecker approximation of the Hessian, and its connection to Shampoo's Hessian approximation. This is orthogonal to the Hessian power used in the final update.

\end{document}